\documentclass[letterpaper]{article} \usepackage[preprint]{aaai2027}
\usepackage[hyphens]{url} \usepackage{graphicx}   \usepackage{natbib} \usepackage{caption} 

\usepackage{amsmath,amssymb,amsfonts,amsthm}
\usepackage{mathtools}
\usepackage{algorithm}
\usepackage{algorithmic}
\usepackage{subcaption}
\usepackage{booktabs}
\usepackage{multirow}
\usepackage{xcolor}
\usepackage{soul}
\definecolor{grpCloth}{HTML}{2E5480}
\definecolor{grpFoot}{HTML}{A33A3C}
\definecolor{grpAcc}{HTML}{6B6A6B}
\theoremstyle{plain}
\newtheorem{theorem}{Theorem}[section]

\newtheorem{lemma}[theorem]{Lemma}
\newtheorem{corollary}[theorem]{Corollary}
\theoremstyle{definition}
\newtheorem{definition}[theorem]{Definition}
\newtheorem{assumption}[theorem]{Assumption}
\theoremstyle{remark}
\newtheorem{remark}[theorem]{Remark}

\newcommand*{\ie}{\emph{i.e.}{}}
    \newcommand{\E}{\mathbb{E}}
\newcommand{\R}{\mathbb{R}}
\renewcommand{\P}{\mathbb{P}}

\DeclareMathOperator*{\argmax}{argmax}
\DeclareMathOperator*{\argmin}{argmin}

\DeclareMathOperator{\dist}{dist}
\DeclareMathOperator{\Dir}{Dir}         
\DeclareMathOperator{\CVaR}{CVaR}
\DeclareMathOperator{\Var}{Var}

\newcommand{\Alpha}{\boldsymbol{\alpha}}

\newcommand{\baseline}{baseline }
\newcommand{\target}{target }

\title{Deciding When to Decide:\\Testing Operational Suboptimality Under Distributional Shift}
\author{
    Minxing Zheng, Holly Wiberg, Shixiang Zhu
}
\affiliations{
    Carnegie Mellon University
}

\begin{document}

\maketitle

\begin{abstract}
Deployed decisions are often optimized once and retained because updates impose operational, regulatory, or switching costs. As operating conditions change, when should such decisions be re-optimized? We study this question for stochastic optimization when the objective's functional form is known but the decision maker's trade-offs are encoded by an unknown preference parameter. Standard distribution-shift tests are poorly aligned with this goal: they can flag detectable yet decision-irrelevant changes without determining whether the incumbent decision has become materially suboptimal. We propose \texttt{RADAR} (Regret-based Assessment of Decision Adequacy and Risk), a decision-focused framework that uses inverse optimization to infer latent preferences and tests the deployed decision's optimality gap under the current distribution. By targeting regret, \texttt{RADAR} ignores decision-irrelevant shifts while detecting changes that warrant re-optimization. We develop two-sample and sequential changepoint procedures and establish asymptotic guarantees for Type-I error and power. Across synthetic optimization problems, a semi-synthetic capacity allocation task, and police-zone planning, \texttt{RADAR} more reliably distinguishes harmful from harmless shifts than decision-agnostic alternatives.
\end{abstract}

\section{Introduction}
Data-driven optimization increasingly supports consequential decisions in operational and public-sector domains, such as resource allocation \cite{bertsimas2020predictive,chen2022data,bertsimas_predictions_2021}, districting and zone design \cite{camacho2015multi,liu2016pear,zhu2022data,xing2025black}, and routing \cite{chu2023data,lai2022data,Toth-2002}. Often, a decision is optimized using historical context data and then implemented over an extended horizon, during which the environment may drift away from the design conditions and render the decision suboptimal. Yet re-optimizing can be costly, risky, and slow: a revised decision may require model validation, regulatory review, operational coordination, or changes to downstream workflows \cite{Basso-2019,Brailsford2005BarriersORHealthcare}. Deployed decisions therefore exhibit ``decision inertia'', remaining in use well after the data distribution has shifted \cite{akaishi2014autonomous,Ferrer-2016}. This creates a central deployment challenge: \textit{deciding when a deployed decision remains adequate and when environmental shift warrants re-optimization}. Addressing it involves two difficulties.

\begin{figure}[t]
    \centering
    \begin{minipage}[t]{0.31\linewidth}
        \centering
        \includegraphics[width=\linewidth]{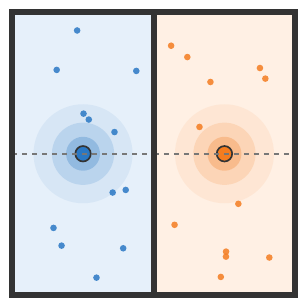}
        
\end{minipage}
    \hfill
    \begin{minipage}[t]{0.31\linewidth}
        \centering
        \includegraphics[width=\linewidth]{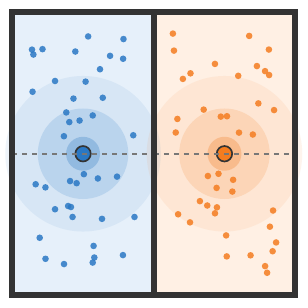}
        
\end{minipage}
    \hfill
    \begin{minipage}[t]{0.31\linewidth}
        \centering
        \includegraphics[width=\linewidth]{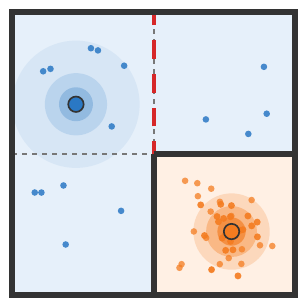}

\end{minipage}
\includegraphics[width=\linewidth]{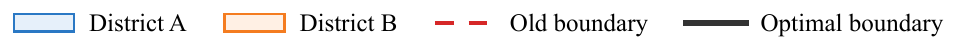}
    \vspace{-0.2in}
    \caption{Stylized two-zone redistricting, where dots denote demand locations and boundaries are chosen to balance demand. Left: The baseline-optimal boundary. Middle: Uniform demand growth shifts the distribution but leaves the boundary optimal. Right: Localized demand relocation makes the old boundary (red dashed) suboptimal and favors an L-shaped boundary (black solid).}
    \label{fig:motivation_demo_districting}
\end{figure}

First, not all context distribution shifts warrant re-optimization. A natural approach to addressing distribution shift is to apply a two-sample test or a changepoint detector to the context distribution. However, this is misaligned with the operational goal. Some shifts are statistically detectable yet decision-irrelevant, whereas small context shifts may induce large suboptimality for the deployed decision. Thus, the relevant quantity is not the distributional discrepancy itself, but the optimality gap of the deployed decision under the current operating conditions. Second, the objective function that generated the deployed decision is often only partially observed. If the full objective were known, assessing decision adequacy would reduce to estimating the target-domain optimality gap of the deployed decision and testing whether it exceeds a tolerance. We consider the more realistic setting where the deployed decision arises from an underlying stochastic optimization problem whose functional form is known, yet the parameters that encode the decision maker's preferences and trade-offs are unobserved. One can observe the deployed decision and context data from historical and current operating environments, but the objective parameters must be inferred. This partial observation setting is common in practice: even when the broad decision model is known, trade-offs among cost, reliability, risk, service quality, and fairness may be implicit, proprietary, or shaped by legacy workflows and human judgment. Thus, assessing decision adequacy requires inferring the latent preference structure that rationalizes the deployed decision and then evaluating whether it remains suitable under the new environment.

Figure~\ref{fig:motivation_demo_districting} illustrates both challenges through a stylized police districting example. A planner locates police stations and partitions the service area while balancing response time against officer workload. Under the baseline demand pattern (left), the resulting plan uses a vertical district boundary; although the deployed stations and boundary are observable, the planner's weighting of the two objectives is not. Uniform demand growth (middle) changes the operating environment but preserves the spatial balance of demand, leaving the original boundary optimal. By contrast, localized demand relocation (right) makes the original boundary suboptimal and warrants a different partition. The figure therefore highlights why detecting demand shift alone is insufficient: assessing whether redistricting is warranted requires evaluating the operational consequences of the shift under the latent trade-off that rationalized the original plan.

We propose \texttt{RADAR} (Regret-based Assessment of Decision Adequacy and Risk), a decision-focused framework that combines inverse optimization with regret-based inference to assess whether a deployed decision remains adequate under distribution shift.  We summarize our contributions as follows. First, we formulate monitoring under distribution shift as testing whether a deployed decision remains approximately optimal, in both two-sample and sequential settings. Second, we develop \texttt{RADAR}, which recovers the latent preference parameters that rationalize the deployed decision via inverse optimization and evaluates the resulting optimality gap through a sample-split regret test, requiring only that the recovered preferences be identified well enough to determine adequacy. Third, we establish asymptotic guarantees for Type-I error and power, and show on synthetic and real-world problems that \texttt{RADAR} detects decision-relevant shifts while ignoring changes that do not warrant re-optimization.

\vspace{.1in}
\noindent\emph{\ul{Related Work}}.
Distribution shift is widely studied in statistical learning, including dataset \cite{gama2014survey,quinonero2008dataset}, covariate \cite{bickel2009discriminative,shimodaira2000improving}, selection-induced \cite{cortes2008sample,huang2006correcting} and label \cite{garg2020unified,guo2020ltf} shift, with diagnostics for deployed machine-learning systems \cite{guan2025keeping,kulinski2023towards}. Classical tools detect and evaluate such changes: two-sample tests \cite{anderson1962distribution,anderson1952asymptotic,smirnov1948table} for static comparisons and changepoint detection \cite{basseville_detection_1993,harchaoui2008kernel,Killick01122012,Matteson02012014,page1954continuous} for sequential monitoring, together with learning-based variants such as classifier two-sample tests \cite{lopez2017revisiting,kim2021classification} and learned changepoint detectors \cite{hushchyn2021generalization,li2024automatic}.  Closest to our setting, \citet{podkopaev2022tracking} monitor the risk of a deployed model by testing whether its target-domain risk exceeds its source-domain risk. However,  an increase in risk need not imply that the deployed model or decision has become inadequate: risk may increase while the optimizer remains unchanged, or remain stable while a better solution exists. We instead test the incumbent decision's \emph{optimality gap} under the target distribution, directly targeting whether re-optimization is warranted.

Our framework also connects to inverse optimization (IO) \cite{Ahuja-2001,chan2025inverse,IYENGAR2005319,Birge-2017}, which infers objective or constraint parameters \cite{CHAN2020415,GHOBADI2021829} that make observed decisions optimal or approximately optimal for a proposed forward optimization model. Classical IO typically enforces exact inverse feasibility, requiring the observed decisions to be optimal under the recovered parameters \cite{heuberger2004inverse}. Data-driven IO relaxes this requirement by fitting parameters through loss functions that measure deviations from optimality, allowing for noisy observations, imperfect information, or model misspecification \cite{Aswani-2018,bertsimas2015data,mohajerin2018data}. Beyond parameter recovery, recent work has leveraged inverse-optimization geometry for decision risk assessment, using conformal inference to provide distribution-free certificates on the suboptimality risk of candidate decisions \cite{zhou2026conformalized}. Recent work also studies IO in stochastic settings \cite{NEURIPS2022_e3c34738}, as well as the recovery of risk preferences in risk-minimization problems \cite{li2021inverse}.

Predict-then-optimize \cite{elmachtoub_2021_spo, NEURIPS2021_b943325c} and decision-focused learning \cite{Mandi-2022,mandi2025feasibility,mandi2024decision,wang2025gen,wilder_2019_dfl} methods align prediction with downstream optimization quality by training predictive models using decision-aware objectives, such as decision loss, regret, or task-specific optimization loss \cite{Schutte-2024,NEURIPS2022_0904c7ed}. Related work develops differentiable optimization layers and gradient estimators for propagating learning signals through downstream optimization problems \cite{agrawal2019differentiable,Brandon-2017,poganvcic2019differentiation}. These methods ask how to learn predictions that induce high-quality decisions. Our work asks a complementary question: given a deployed decision and a known optimization structure, has the decision's optimality changed enough under distribution shift to warrant re-optimization? Thus, we bring a decision-focused criterion to hypothesis testing and deployment monitoring.

\section{Problem Setup}

We consider two domains: a \emph{\baseline domain} and a \emph{\target domain}. The baseline domain represents the historical operating environment in which the decision was optimized, while the target domain represents the current operating environment. In the \baseline domain, we observe i.i.d. contexts $X_1, \dots, X_n \sim \P_0,$ and a deployed decision $z_0 \in \mathcal Z$, which is assumed to be optimal for a stochastic optimization problem under the baseline context distribution, \ie,
\begin{equation}
\label{eq:forward_problem}
z_0 \in \argmin_{z\in\mathcal Z} \E_{\P_0}\!\left[f(z;X,\theta^\star)\right].
\end{equation}
Here, $f(z;X,\theta)$ is a stochastic objective with known functional form but unknown preference parameter vector $\theta^\star\in\Theta\subseteq\mathbb{R}^d$. The vector $\theta^\star$ encodes the decision maker's latent preferences, such as implicit trade-offs among cost, reliability, fairness, and risk. In the \target domain, we observe new i.i.d. contexts $X_{n+1}, \dots, X_{n+m} \sim  \P_1,$ where $\P_1$ is the \target domain context distribution and may differ from $\P_0$. The deployed decision $z_0$ remains fixed and feasible. Our objective is to assess whether $z_0$ remains approximately optimal under $\P_1$. To this end, we define the \emph{risk-optimality gap}
\begin{equation}
\label{eq:gap_def}
\Delta(z;\theta,\P)
=
\E_\P\left[f(z; X, \theta)\right]
-
\min_{z' \in \mathcal Z}
\E_\P\left[f(z'; X, \theta)\right],
\end{equation}
which measures the excess expected objective value, \ie, risk incurred by decision $z$ relative to the optimal decision under distribution $\mathbb P$ and parameter $\theta$. We are interested in the \target domain gap
\begin{equation}
\label{eq:oracle_optimality_gap}
   \Delta^\star \coloneqq \Delta(z_0;\theta^\star,\mathbb P_1), 
\end{equation}
evaluated under the true preference parameter $\theta^\star$ and the \target domain distribution $\mathbb P_1$. Given a tolerance level $\tau\ge 0$, we test whether this gap exceeds the acceptable level:
\begin{equation}
\label{eq:hypothesis_test}
H_0:~\Delta(z_0; \theta^\star, \mathbb P_1) \le \tau 
\;\;\text{vs.}\;\;
H_1:~\Delta(z_0; \theta^\star, \mathbb P_1) > \tau.
\end{equation}
Rejecting $H_0$ indicates that the deployed decision incurs excess risk beyond the tolerance $\tau$, suggesting that it should be re-optimized for the \target domain. The tolerance $\tau$ may be specified either on an absolute risk scale or on a relative scale. In particular, $\tau=0$ corresponds to exact optimality, whereas $\tau>0$ permits a nonzero excess-risk tolerance. A relative criterion may also be defined by normalizing the gap by the optimal risk under $(\theta^\star,\mathbb P_1)$.

\begin{remark}[Extension beyond expectation risk]
We present the main setup under expectation risk for clarity, but the framework extends to more general risk functionals $\rho$ by replacing $\mathbb E_{\mathbb P}[f(z;X,\theta)]$ with $\rho^{\mathbb P}[f(z;X,\theta)]$. In our experiments, we include a conditional value-at-risk (CVaR) example, which captures tail risk, to illustrate this extension beyond expectation-based risk. More details can be found in Appendix~\ref{appendix:theory_proof}.
\end{remark}

\section{Proposed Method}
\label{sec:hypo-test}

We propose \texttt{RADAR} (Regret-based Assessment of Decision Adequacy and Risk), a decision-aware testing framework for assessing whether a deployed decision remains approximately optimal under distributional shift. Our goal is not to detect generic context-distribution shifts, but to test whether the deployed decision $z_0$ satisfies the optimality tolerance in \eqref{eq:hypothesis_test} under the \target domain. Although \eqref{eq:hypothesis_test} involves two domains, it is not a classical two-sample testing problem, because a context shift is relevant only when it induces a nontrivial optimality gap of the deployed decision. Hence, directly comparing $\mathbb P_0$ and $\mathbb P_1$ may be overly conservative or irrelevant for assessing decision adequacy.

Our approach resolves this challenge by directly targeting the optimality gap in \eqref{eq:oracle_optimality_gap}. Specifically, our framework combines inverse optimization on \baseline domain data to infer a latent preference parameter that rationalizes the observed \baseline decision $z_0$ with a data-driven approximation to the \target domain optimizer. These components yield an estimator of the target optimality gap. We then use a sample-split evaluation step to obtain valid inference for this estimated gap. This leads to a practical three-step procedure for assessing decision adequacy, which we further extend to a sequential setting for decision-relevant change-point detection.

\subsection{Decision-Focused Risk-Optimality Gap Test}
We now describe the core procedure of \texttt{RADAR}. The central challenge is that the hypothesis in \eqref{eq:hypothesis_test} depends on two unknown objects: the latent preference parameter $\theta^\star$ that rationalizes the observed decision $z_0$, and the \target domain optimizer needed to evaluate the oracle optimality gap in \eqref{eq:oracle_optimality_gap}. Our approach addresses this difficulty through a three-step, data-driven procedure tailored to the decision-adequacy testing objective.

\begin{figure}[!t]
    \centering
    \includegraphics[width=\linewidth]{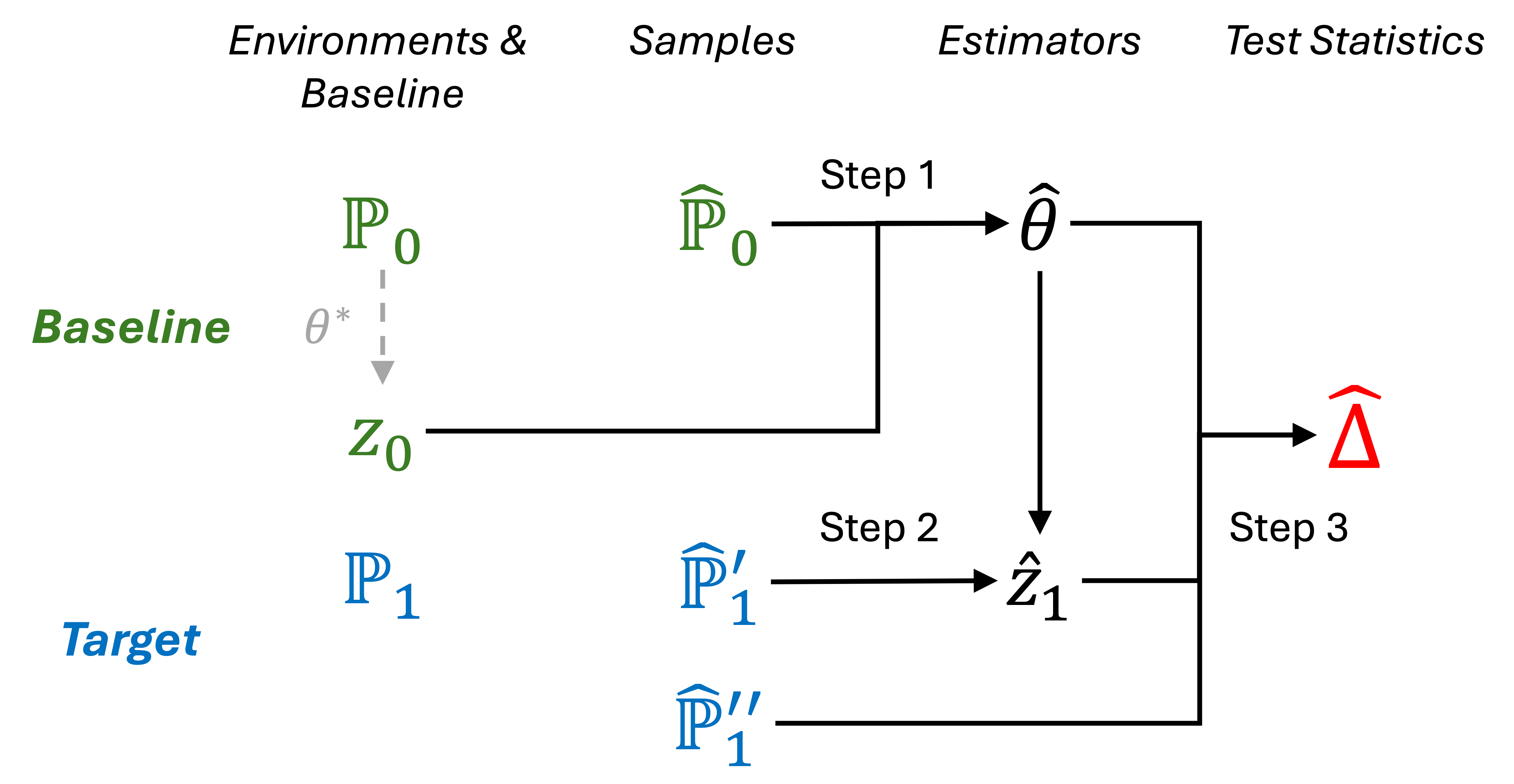}
    \caption{Overview of the proposed \texttt{RADAR} framework.}
    \label{fig:workflow_demo}
    \vspace{-0.2em}
\end{figure}
In Step~$1$, we use inverse optimization on \baseline domain data to estimate the latent preference parameter $\theta^\star$, yielding a plug-in estimator $\hat\theta$. In Step~$2$, we combine $\hat\theta$ with \target domain data to construct a data-driven approximation of the \target domain optimizer. Together, these two steps provide an empirical estimate of the deployment optimality gap. In Step~$3$, we use a sample-split evaluation scheme to obtain a valid inference for this estimated gap. Figure~\ref{fig:workflow_demo} summarizes the workflow. We next describe these three steps in detail.

\vspace{.1in}
\noindent\emph{\ul{Step 1: Preference Parameter Estimation}}.
Recall that under the forward model \eqref{eq:forward_problem}, the baseline decision $z_0$ is optimal for the baseline distribution  $\mathbb P_0$ under $\theta^\star$. We estimate $\theta^\star$ from the baseline-environment data $\{X_i\}_{i=1}^{n}$ via inverse optimization. Specifically, letting $\widehat{\mathbb P}_0$ denote the empirical distribution of the baseline sample, we compute
\begin{equation}
\label{eq:iop}
\hat\theta \in \argmin_{\theta\in\Theta}
\Big\{\E_{\widehat{\mathbb P}_0}\big[f(z_0;X,\theta)\big] - \inf_{z\in\mathcal Z} \E_{\widehat{\mathbb P}_0}\big[f(z;X,\theta)\big]
\Big\}.
\end{equation}
This corresponds to minimizing the empirical optimality gap of $z_0$ under candidate parameters $\theta$. Throughout, we require only that $\hat\theta$ be \emph{risk-consistent} for $\theta^\star$; our inferential guarantees do not depend on the particular inverse-optimization algorithm beyond this property.

\vspace{.1in}
\noindent\emph{\ul{Step 2: Constructing a Deployment Benchmark}}.
To assess whether the deployed decision remains adequate under the \target domain, the test must compare it with the decision that would be optimal under the \target distribution, as in \eqref{eq:gap_def}. This oracle optimizer, $z_1^\star \in \argmin_{z\in\mathcal Z} \E_{{\mathbb P}_{1}}\!\left[f(z;X,\theta^\star)\right],$ is not observed and cannot be computed directly because both the latent preference parameter $\theta^\star$ and the target distribution $\mathbb P_1$ are unknown. We therefore construct a data-driven deployment benchmark using the plug-in estimate $\hat\theta$ and target-domain samples.

Specifically, we randomly partition the $m$ target-domain samples into two disjoint subsamples of sizes $m'$ and $m''$, with $m' + m'' = m$. Let $\widehat{\mathbb P}_{1}'$ and $\widehat{\mathbb P}_{1}''$ denote the corresponding empirical distributions. In our experiments, we use a balanced split, $m'=\lfloor m/2\rfloor$ and $m''=m-m'$. This choice is also suggested by the convergence rate in Theorem~\ref{thm:delta_hat_rate}: for a fixed target-domain sample size $m$, a balanced allocation minimizes the contribution $(m')^{-1/2}+(m'')^{-1/2}$ in the stated rate bound. Using the first subsample, we compute 
\begin{equation}
\label{eq:z_1_hat_method}
\hat z_1 \in \argmin_{z\in\mathcal Z} 
\E_{\widehat{\mathbb P}_{1}'}\!\left[f(z;X,\hat\theta)\right].
\end{equation}
We use $\hat z_1$ as a data-driven benchmark for the unobserved target-domain optimizer $z_1^\star$. The second subsample, associated with $\widehat{\mathbb P}_{1}''$, is reserved solely for evaluation. Since $\hat z_1$ is constructed only based on $\widehat{\mathbb P}_{1}'$, it is statistically independent of the evaluation data. This separation between benchmark construction and evaluation is necessary for valid inference on the deployment optimality gap.

\vspace{.1in}
\noindent\emph{\ul{Step 3: Testing Procedure}}.
Given the estimator $\hat\theta$ in \eqref{eq:iop} and the estimated deployment benchmark $\hat z_1$ \eqref{eq:z_1_hat_method}, the deployment gap $\Delta(z_0;\theta^\star,\mathbb{P}_1)$ admits the following empirical analog
\begin{equation}
\label{eq:gap_hat_step3}
\widehat\Delta
=
\E_{\widehat{\mathbb P}_1''}\big[f(z_0;X,\hat\theta)\big]
-
\E_{\widehat{\mathbb P}_1''}\big[f(\hat z_1;X,\hat\theta)\big],
\end{equation}
where $\widehat{\mathbb P}_1''$ denotes the empirical distribution of the evaluation subsample. The statistic $\widehat\Delta$ is a sample-average functional computed on $m''$ i.i.d. observations. Conditional on $(\hat\theta,\hat z_1)$, sample splitting prevents evaluation-data reuse; finite-sample preference and benchmark errors are handled separately in the analysis below.

For testing, we use a one-sided Wald test based on held-out target-domain loss differences, which rejects when
\begin{equation}
\label{eq:T_wald}
    T_{\rm Wald}
=
\frac{\sqrt{m''}(\widehat\Delta-\tau)}{\hat\sigma}
>
z_{1-\alpha},
\end{equation}
where $\hat\sigma$ is the sample standard deviation of the held-out loss differences $f(z_0;X,\hat\theta)-f(\hat z_1;X,\hat\theta)$ and $\hat\sigma/\sqrt{m''}$ is the standard error of $\widehat\Delta$. $z_{1-\alpha}$ is the $(1-\alpha)$-quantile of the standard normal distribution. Under expectation-based risk, this follows from the central limit theorem applied to the evaluation split, conditional on $(\hat\theta,\hat z_1)$. More generally, for other risk functionals, especially nonsmooth ones, or in small-sample settings, one may use a bootstrap test; we defer this extension to Appendix~\ref{app:test_details}.

\vspace{.1in}
\noindent\emph{\ul{Sequential Decision-Adequacy Monitoring Extension}}.

We further extend \texttt{RADAR} to sequential monitoring of decision adequacy. We first estimate the preference parameter $\widehat\theta_0$ from a burn-in baseline sample and hold it fixed throughout monitoring. At each monitoring time $t$, we form a window $\mathcal W_t$ of recent observations, randomly partition it into a benchmark-construction split and an evaluation split, and apply the same procedure as in Step~$2$ to estimate the current optimality gap $\widehat\Delta_t$ of the incumbent decision. We raise an alarm at the first monitoring time for which the test indicates that $\widehat\Delta_t$ exceeds the tolerance $\tau$. This construction focuses detection on operationally consequential changes, including both sudden shifts and gradual degradation, while ignoring distributional changes that leave the incumbent decision adequate. Appendix~\ref{subsec:cpd} and Algorithm~\ref{alg:dr_cpd} provide the formal hypotheses and stopping rule.

\subsection{Theoretical Guarantees}
\label{sec:theory}

We summarize the statistical guarantees for \texttt{RADAR}. The goal is to show that the estimated gap $\widehat\Delta$ concentrates around the oracle gap $\Delta^\star$ up to the intrinsic ambiguity caused by the latent preference parameter. The analysis follows the three steps of \texttt{RADAR}: preference estimation, benchmark construction, and independent evaluation. By the definitions in \eqref{eq:gap_def} and \eqref{eq:gap_hat_step3}, a rearrangement yields
\begin{equation}
\label{eq:delta_decomp}
\begin{aligned}
\widehat\Delta-\Delta^\star
&=\underbrace{\Delta(z_0;\hat\theta,\P_1)-\Delta(z_0;\theta^\star,\P_1)}
   _{\text{($i$) preference}}\\
&\quad+\underbrace{\textstyle\min_{z\in\mathcal Z}\E_{\P_1}f(z;X,\hat\theta)
   -\E_{\P_1}f(\hat z_1;X,\hat\theta)}_{\text{($ii$) benchmark}}\\
&\quad+\underbrace{(\widehat\P_1''-\P_1)\big[f(z_0;X,\hat\theta)
   -f(\hat z_1;X,\hat\theta)\big]}_{\text{($iii$) evaluation}}.
\end{aligned}
\end{equation}
Term~\textup{($i$)} captures the error from inferring the latent preference parameter, term~\textup{($ii$)} captures the suboptimality of the data-driven target-domain benchmark, and term~\textup{($iii$)} is the finite-sample evaluation noise on the independent split.

\vspace{.1in}
\noindent\emph{\ul{Regularity Conditions}}.

We impose the following mild regularity conditions that support consistent preference and benchmark estimation, as well as asymptotically valid and powerful hypothesis testing; formal statements and proofs are deferred to Appendix~\ref{appendix:theory_proof}. Specifically, we assume $(i)$ compact decision and preference spaces (Assumption~\ref{ass:compact}); $(ii)$ an objective function that is measurable in $X$ and continuous in $(z,\theta)$ for almost every $X$ (Assumption~\ref{ass:f_regular}); $(iii)$ a square-integrable envelope under both $\mathbb P_0$ and $\mathbb P_1$ (Assumption~\ref{ass:envelope}); and $(iv)$ Lipschitz continuity in $(z,\theta)$ with a square-integrable Lipschitz modulus under both $\mathbb P_0$ and $\mathbb P_1$ (Assumption~\ref{ass:f_lipschitz}). Together, these conditions yield the uniform convergence of the empirical objectives needed to consistently estimate the preference parameter and the deployment benchmark, and hence the gap $\widehat\Delta$ up to the intrinsic inverse-identification ambiguity. The stated convergence rate additionally relies on the local inverse-stability condition in Assumption~\ref{assm:error_bound_P0}, while asymptotic Type-I error control and power against fixed alternatives further require the conditional central limit theorem in Assumption~\ref{ass:eval_clt}.

Since $\theta^\star$ is unobserved, $z_0$ may be rationalized by multiple preference parameters. Let
\[
\Theta_{\rm inv}(z_0;\mathbb P_0)
:=
\left\{\theta\in\Theta:
z_0\in
\argmin_{z\in\mathcal Z}
\E_{\mathbb P_0}\big(f(z;X,\theta)\big)
\right\}.
\]
If $\Theta_{\rm inv}$ is not a singleton, an inverse-optimization estimator can recover a $\theta \in \Theta_{\rm inv}$, not necessarily $\theta^\star$.

\begin{definition}[Target-domain inverse sensitivity]
\label{def:inverse_sensitivity}
Define the identified range of target-domain gaps by
\[
\begin{aligned}
\underline\Delta_{\rm inv}
&:=\inf_{\theta\in\Theta_{\rm inv}}
\Delta(z_0;\theta,\mathbb P_1),\\
\overline\Delta_{\rm inv}
&:=\sup_{\theta\in\Theta_{\rm inv}}
\Delta(z_0;\theta,\mathbb P_1),
\end{aligned}
\]
and its width by $\delta_{\rm inv}:=\overline\Delta_{\rm inv}-\underline\Delta_{\rm inv}$, equivalently the largest disagreement in target gap between any two baseline-rationalizing preferences.
\end{definition} 
The quantity $\delta_{\rm inv}$ is not sampling error but irreducible ambiguity induced by inverse optimization. The adequacy conclusion is determinate whenever $\overline\Delta_{\rm inv}\le\tau$ or $\underline\Delta_{\rm inv}>\tau$, and only the position of the identified range relative to $\tau$ matters, not its width. When the range straddles $\tau$, two observationally equivalent preferences lie on opposite sides of the audit threshold, so no level-$\alpha$ test can have nontrivial power against the harmful one. Resolving such a case requires additional preference-identifying information rather than more target samples, since the latter reduce sampling uncertainty but leave $[\underline\Delta_{\rm inv},\overline\Delta_{\rm inv}]$ unchanged. Absent that, the sensitivity-adjusted threshold below retains validity at the cost of power. Appendix~\ref{app:io_ambiguity} characterizes checkable conditions under which $\delta_{\rm inv}=0$ and discusses its estimation and sensitivity analysis.

We next show that the proposed estimator is consistent for the oracle deployment gap up to inverse-identification ambiguity under the above regularity conditions. The detailed term-by-term bounds for \textup{($i$)}--\textup{($iii$)} are deferred to Appendix~\ref{appendix:theory_proof}.

\begin{theorem}[Consistency and rate of $\widehat\Delta$]
\label{thm:delta_hat_rate}
Suppose Assumptions~\ref{ass:compact} and~\ref{ass:f_regular} hold, and the moment conditions in Assumptions~\ref{ass:envelope} and~\ref{ass:f_lipschitz} hold under both $\mathbb P_0$ and $\mathbb P_1$. Assume the three samples are independent as described above and $n,m',m''\to\infty$. Then
\[
\left(\big|\widehat\Delta-\Delta^\star\big|-\delta_{\rm inv}\right)_+
=o_p(1).
\]
In particular, if $\delta_{\rm inv}=0$, then $\widehat\Delta \xrightarrow{p} \Delta^\star.$ Moreover, if the local inverse-stability condition in Assumption~\ref{assm:error_bound_P0} holds with exponent $\kappa\ge 1$, then
\[
\begin{aligned}
\left(\big|\widehat\Delta-\Delta^\star\big|-\delta_{\rm inv}\right)_+
=O_p\big(&n^{-1/(2\kappa)}+(m')^{-1/2}\\
&+(m'')^{-1/2}\big),
\end{aligned}
\]
where $(x)_+:=\max\{x,0\}$.
\end{theorem}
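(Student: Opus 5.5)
The plan follows the decomposition \eqref{eq:delta_decomp} and bounds the three terms separately. Term (iii) is handled by conditioning on $(\hat\theta,\hat z_1)$, which depends only on the baseline sample and $\widehat{\mathbb P}_1'$. The loss difference $g(X)=f(z_0;X,\hat\theta)-f(\hat z_1;X,\hat\theta)$ is bounded in absolute value by twice the square-integrable envelope of Assumption~\ref{ass:envelope}. Hence, conditionally, its variance is at most $4\E_{\P_1}[F^2]<\infty$. Chebyshev's inequality on the $m''$ held-out points then gives $(\widehat\P_1''-\P_1)g=O_p((m'')^{-1/2})$ conditionally, and therefore unconditionally.

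Term (ii) is a standard sample-average-approximation error. Write $\phi(z,\theta)=\E_{\P_1}f(z;X,\theta)$ and $\hat\phi'(z,\theta)=\E_{\widehat\P_1'}f(z;X,\theta)$. Using the optimality of $\hat z_1$ for $\hat\phi'(\cdot,\hat\theta)$,
\[
0\le \E_{\P_1}f(\hat z_1;X,\hat\theta)-\min_z\phi(z,\hat\theta)\le 2\sup_{z,\theta}|\hat\phi'(z,\theta)-\phi(z,\theta)|.
\]
Taking the supremum over $\theta$ removes the dependence of $\hat\theta$ on the baseline sample. Uniform convergence of this supremum follows from three facts: $\mathcal Z\times\Theta$ is compact (Assumption~\ref{ass:compact}), $f$ is continuous (Assumption~\ref{ass:f_regular}), and $f$ has an integrable envelope, so $\{f(z;\cdot,\theta)\}$ is Glivenko--Cantelli and the supremum is $o_p(1)$. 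For the rate, the Lipschitz condition in Assumption~\ref{ass:f_lipschitz}, with a square-integrable modulus, makes the class Lipschitz in a finite-dimensional parameter. Its bracketing entropy is then logarithmic, and the class is $\P_1$-Donsker, which gives $\sup|\hat\phi'-\phi|=O_p((m')^{-1/2})$. The same argument applied under $\P_0$ yields $\sup_{z,\theta}|\hat\phi_0-\phi_0|=O_p(n^{-1/2})$ for the baseline empirical objective.

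Term (i) is the main obstacle, because $\hat\theta$ need not approach $\theta^\star$ itself, only the set $\Theta_{\rm inv}$. Let $G_0(\theta)=\Delta(z_0;\theta,\P_0)$ and let $\hat G_0$ be its empirical analog, which is the objective in \eqref{eq:iop}. The function $G_0$ is continuous on compact $\Theta$, its zero set is exactly $\Theta_{\rm inv}$, and $\theta^\star\in\Theta_{\rm inv}$. The bound $\sup_\theta|\hat G_0-G_0|\le 2\sup|\hat\phi_0-\phi_0|=O_p(n^{-1/2})$ together with $\hat G_0(\hat\theta)\le \hat G_0(\theta^\star)$ gives $G_0(\hat\theta)=O_p(n^{-1/2})$. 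By compactness and continuity, this forces $\dist(\hat\theta,\Theta_{\rm inv})\to_p 0$. If needed, I would instead argue by contradiction on the sets $\{G_0\ge\epsilon\}$, whose minimum of $G_0$ is bounded away from zero. Under Assumption~\ref{assm:error_bound_P0}, which I read as the error bound $\dist(\theta,\Theta_{\rm inv})^\kappa\lesssim G_0(\theta)$ near $\Theta_{\rm inv}$, the distance becomes $O_p(n^{-1/(2\kappa)})$. Now let $\tilde\theta$ be a point of $\Theta_{\rm inv}$ (closed by continuity) nearest to $\hat\theta$. The map $\theta\mapsto\Delta(z_0;\theta,\P_1)$ is Lipschitz with constant $2\E_{\P_1}L$, since the minimum of Lipschitz functions is Lipschitz. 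In the consistency case it is merely uniformly continuous, by dominated convergence and compactness. Therefore
\[
|\Delta(z_0;\hat\theta,\P_1)-\Delta^\star|\le |\Delta(z_0;\hat\theta,\P_1)-\Delta(z_0;\tilde\theta,\P_1)|+|\Delta(z_0;\tilde\theta,\P_1)-\Delta(z_0;\theta^\star,\P_1)|.
\]
The second summand is at most $\delta_{\rm inv}$ because both $\tilde\theta$ and $\theta^\star$ lie in $\Theta_{\rm inv}$. The first summand is $o_p(1)$ in general and $O_p(n^{-1/(2\kappa)})$ under Assumption~\ref{assm:error_bound_P0}.

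Combining the three bounds, $|\widehat\Delta-\Delta^\star|\le\delta_{\rm inv}+R$ with $R=o_p(1)$, and $R=O_p(n^{-1/(2\kappa)}+(m')^{-1/2}+(m'')^{-1/2})$ under the error bound. Since $(|\widehat\Delta-\Delta^\star|-\delta_{\rm inv})_+\le R$, both displayed claims follow. The case $\delta_{\rm inv}=0$ is then immediate. The delicate point throughout is that $\hat\theta$ depends on data, which is why I take suprema over $\theta$ in terms (ii) and (iii) and rely on the independence of the three samples.
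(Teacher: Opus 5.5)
Your proposal is correct and follows essentially the same route as the paper. You use the same three-term decomposition and the same $2\sup_{z,\theta}$ uniform-deviation bound for the benchmark term. For the preference term you use the ERM inequality $G_0(\hat\theta)\le 2\sup_\theta|\hat G_0-G_0|$, set consistency sharpened by the inverse-stability bound, and a Lipschitz step to the nearest point of $\Theta_{\rm inv}$ plus $\delta_{\rm inv}$, exactly as the paper does. The only deviation is the evaluation term: you bound it by conditional Chebyshev given $(\hat\theta,\hat z_1)$, which uses the split independence and only the square-integrable envelope, whereas the paper bounds it by the same uniform $2\sup_{z,\theta}$ empirical-process deviation on $\widehat{\mathbb P}_1''$; both give $O_p((m'')^{-1/2})$.
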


Theorem~\ref{thm:delta_hat_rate} shows that $\widehat\Delta$ consistently estimates the oracle gap up to the unavoidable ambiguity $\delta_{\rm inv}$, with separate rates for preference estimation, benchmark construction, and evaluation noise.

\begin{theorem}[Asymptotic validity and power of the test]
\label{thm:power_rate}
Suppose the assumptions of Theorem~\ref{thm:delta_hat_rate} and the local inverse-stability condition in Assumption~\ref{assm:error_bound_P0} hold, and suppose the conditional CLT in Assumption~\ref{ass:eval_clt} holds. Let
\[
a_n:=n^{-1/(2\kappa)}
\]
denote the preference-estimation error rate. If $\delta_{\rm inv}=0$ and $\sqrt{m''}\,a_n\to0$, then the Wald test in \eqref{eq:T_wald} has asymptotic level $\alpha$:
\[
\limsup_{n,m',m''\to\infty}
\Pr\{\mathrm{reject}\ H_0\}
\le \alpha
\qquad
\text{whenever } \Delta^\star\le \tau .
\]
Without this relative-rate condition, the same conclusion holds on any fixed-margin null $\Delta^\star\le\tau-\eta$, $\eta>0$. More generally, if $\delta_{\rm inv}>0$, the unadjusted test controls Type-I error on the separated null $\Delta^\star\le\tau-\delta_{\rm inv}-\eta$. Under the alternative, for any fixed $\eta>0$, if $\Delta^\star\ge \tau+\delta_{\rm inv}+\eta$, then
\[
\lim_{n,m',m''\to\infty}
\Pr\{\mathrm{reject}\ H_0\} = 1.
\]
\end{theorem}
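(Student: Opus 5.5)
We reduce the statement to Theorem~\ref{thm:delta_hat_rate} plus the conditional CLT of Assumption~\ref{ass:eval_clt}. The key is to normalize around the \emph{plug-in population gap} rather than $\Delta^\star$. Write
\[
\mu(\hat\theta,\hat z_1):=\E_{\P_1}\big[f(z_0;X,\hat\theta)-f(\hat z_1;X,\hat\theta)\big],
\]
the mean of the held-out loss differences conditional on $(\hat\theta,\hat z_1)$. Then
\[
T_{\rm Wald}=\frac{\sqrt{m''}\,(\widehat\Delta-\mu)}{\hat\sigma}+\frac{\sqrt{m''}\,(\mu-\tau)}{\hat\sigma}.
\]
By Assumption~\ref{ass:eval_clt}, conditionally on the first two samples, the first term converges to $N(0,1)$. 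This uses that $\hat\sigma$ is consistent for a conditional standard deviation bounded away from zero, which I will take as part of that assumption; the square-integrable envelope of Assumption~\ref{ass:envelope} supplies the second moments. Since the limit does not depend on the conditioning, dominated convergence transfers the result unconditionally.

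Everything therefore rests on the sign and size of $\mu-\tau$. By \eqref{eq:delta_decomp},
\[
\mu-\Delta^\star=\text{(i)}+\text{(ii)}.
\]
The proof of Theorem~\ref{thm:delta_hat_rate} gives term (ii) $=O_p((m')^{-1/2})$ and, crucially, term (ii) $\le 0$ always, since $\min_z$ is at most the value at $\hat z_1$. This sign helps in the null direction. For term (i), that proof gives
\[
\big(|\text{(i)}|-\delta_{\rm inv}\big)_+=O_p(a_n)
\]
under Assumption~\ref{assm:error_bound_P0}.

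\emph{Sharp null with $\delta_{\rm inv}=0$.} Here $\mu\le\Delta^\star+\text{(i)}\le\tau+O_p(a_n)$. Hence
\[
\frac{\sqrt{m''}(\mu-\tau)}{\hat\sigma}\le O_p\big(\sqrt{m''}\,a_n\big)=o_p(1)
\]
by the rate condition. Slutsky's lemma then gives $\limsup\Pr\{T_{\rm Wald}>z_{1-\alpha}\}\le\alpha$.

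\emph{Separated nulls.} For the fixed-margin null $\Delta^\star\le\tau-\eta$, only consistency is needed: $\mu\le\tau-\eta+o_p(1)$. So the drift term tends to $-\infty$ in probability and the rejection probability tends to $0\le\alpha$. With $\delta_{\rm inv}>0$ on $\Delta^\star\le\tau-\delta_{\rm inv}-\eta$, the same argument applies using $\text{(i)}\le\delta_{\rm inv}+o_p(1)$.

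\emph{Power.} Under $\Delta^\star\ge\tau+\delta_{\rm inv}+\eta$ the sign of term (ii) works against us, so we need its rate. Term (ii) $=o_p(1)$ follows from uniform convergence of $\E_{\widehat\P_1'}f(\cdot;X,\cdot)$ over the compact $\mathcal Z\times\Theta$. This is a Glivenko--Cantelli class by Assumptions~\ref{ass:compact}, \ref{ass:f_regular} and~\ref{ass:envelope}, and the standard bound
\[
\E_{\P_1}f(\hat z_1;\hat\theta)-\min_z\E_{\P_1}f(z;\hat\theta)\le 2\sup_{z,\theta}\big|(\widehat\P_1'-\P_1)f\big|
\]
applies. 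Term (i) is at least $-\delta_{\rm inv}-o_p(1)$. Hence $\mu-\tau\ge\eta/2$ with probability tending to one, so the drift term is of order $\sqrt{m''}\,\eta/\hat\sigma\to\infty$. Since the first term is $O_p(1)$, the rejection probability tends to $1$.

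\emph{Main obstacle.} I expect the delicate step to be the rigorous conditional-to-unconditional CLT with a data-dependent $\hat\sigma$. We must ensure $\hat\sigma$ stays bounded away from zero; otherwise, under the null, $\hat\sigma\to0$ could inflate the drift term. To handle this, I would first try to assume a lower bound on the conditional variance as part of Assumption~\ref{ass:eval_clt}. Failing that, I would note that the degenerate case only produces a drift term $\le o_p(1)/\hat\sigma$ of nonpositive sign in the null analysis, after arguing that $\mu-\tau$ and $\hat\sigma$ vanish jointly at compatible rates.
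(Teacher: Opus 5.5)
Your proposal is correct and follows the paper's proof essentially step for step. You center $T_{\rm Wald}$ at the conditional mean $\mu(\hat\theta,\hat z_1)$, apply the conditional CLT with Slutsky, and use the nonpositivity of the benchmark term (ii) together with the rate for term (i) to bound the positive centering error by $\delta_{\rm inv}+O_p(a_n)$ under the null; consistency of all three terms then handles the fixed-margin nulls and power. The degeneracy you flag at the end is already excluded by Assumption~\ref{ass:eval_clt}, which posits $\sigma(\hat\theta,\hat z_1)^{-1}=O_p(1)$ and $\hat\sigma/\sigma(\hat\theta,\hat z_1)\xrightarrow{p}1$, so your primary route suffices. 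The sketched fallback is therefore unnecessary, and as written it would not in fact give a drift term of nonpositive sign.
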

Accordingly, the unadjusted test has a margin guarantee when $\delta_{\rm inv}>0$. If a valid sensitivity bound $\bar\delta_{\rm inv}\ge\delta_{\rm inv}$ is available, replacing $\tau$ by $\tau+\bar\delta_{\rm inv}$ in \eqref{eq:T_wald}, under the same relative-rate and CLT conditions, yields a conservative sensitivity-adjusted test of the original null $\Delta^\star\le\tau$.

\section{Experiments}
\label{sec:experiments}
We evaluate \texttt{RADAR} in three settings: controlled synthetic optimization problems, a semi-synthetic capacity-allocation task, and a real-world police districting study. The synthetic and capacity allocation experiments are formulated as two-sample decision-adequacy tests, where we can construct target distributions with known oracle optimality gaps and therefore label each shift as decision-irrelevant $(H_0)$ or decision-relevant $(H_1)$. The police districting study evaluates \texttt{RADAR} in a temporal monitoring setting, where the goal is to detect when accumulated changes in operational data make the incumbent districting plan inadequate.

Across all settings, we compare \texttt{RADAR} against three alternative tests: ($i$) X-Mean, a two-sample test for changes in the context mean; ($ii$) X-Distr, a two-sample test for any distributional change of context $X$; and ($iii$) Risk-Value, a decision-related baseline~\citep{podkopaev2022tracking} that tracks changes in the realized risk of the deployed decision $z_0$ using a plug-in estimate $\hat\theta$ and the same risk functional as the forward problem. We report empirical rejection rates over repeated trials, corresponding to Type-I error under $H_0$ and power under $H_1$, at significance level $\alpha=0.05$. We set the optimality-gap tolerance to $\tau=0$ in synthetic experiments, where the oracle gap is exactly computable, and use a relative tolerance $\tau=0.1$ in the semi-synthetic experiments. The districting study, which monitors a decision revised at high switching cost, uses $\tau=0.2$ at level $\alpha=0.01$; Appendix~\ref{app:atlanta} reports its sensitivity to both. Inverse optimization is implemented using \texttt{cvxpylayers}; complete implementation details and hyperparameters are provided in Appendix~\ref{app:config}.

\vspace{.1in}
\noindent\emph{\ul{Synthetic Experiments}}.
We first evaluate \texttt{RADAR} on two stochastic optimization problems: a quadratic program (QP) with expectation risk and a linear program (LP) with CVaR risk. They test different notions of decision relevance. In the expectation setting, harmful shifts are those that change the mean structure of the induced cost vector $X\theta$; in the CVaR setting, harmful shifts are those that alter the adverse tail scenarios driving the risk objective. Here we describe the QP with expectation risk example to illustrate the experimental setup; the equivalent descriptions of the LP example can be found in the Appendix~\ref{app:experiment}.
\begin{figure}[!t]
    \centering
    \includegraphics[width=\columnwidth]{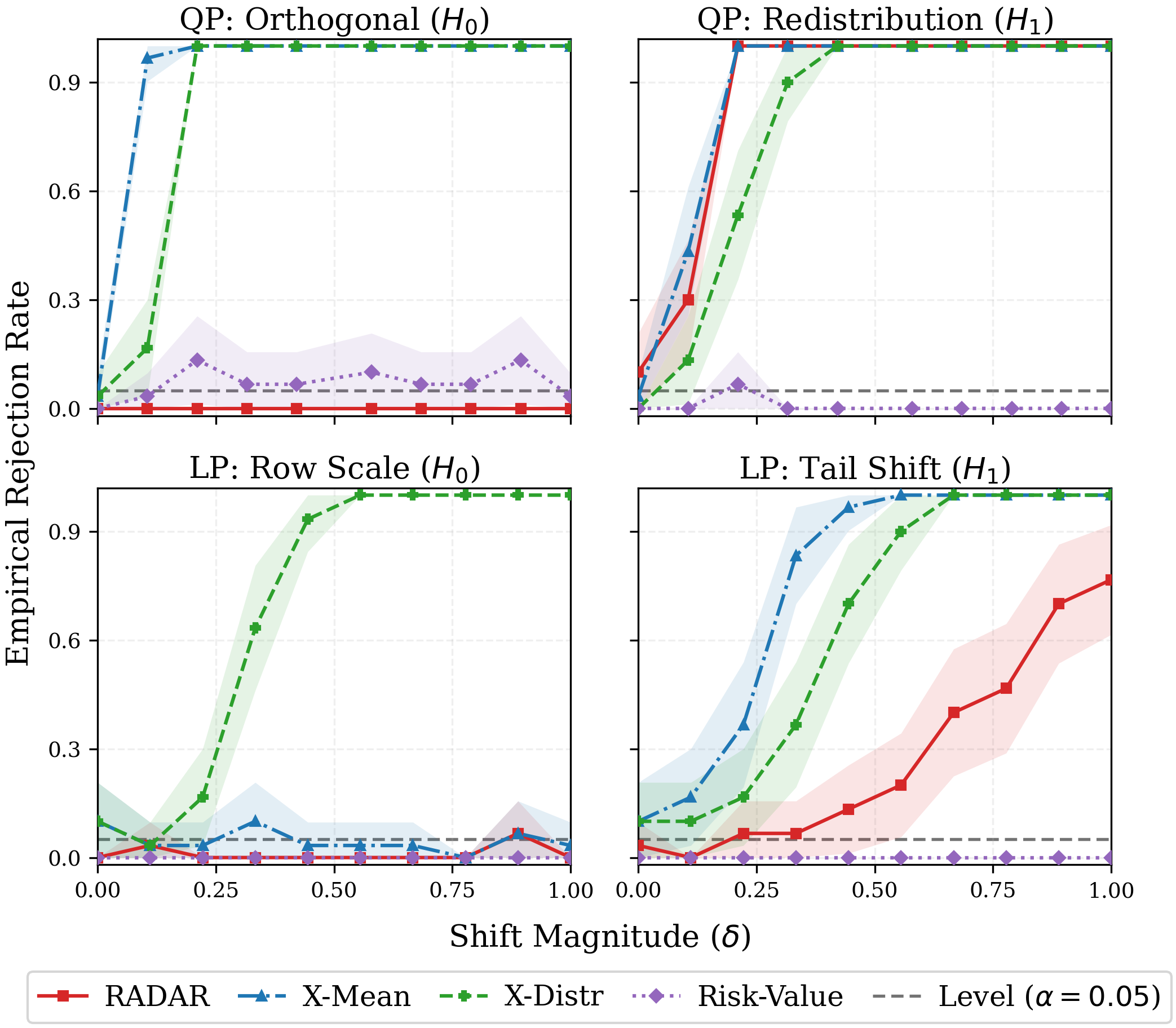}
    \caption{Empirical rejection rates on synthetic LP and QP examples across different decision-irrelevant ($H_0$) and decision-relevant ($H_1$) shift settings. Each panel reports the rejection rate at level $\alpha=0.05$ as shift magnitude $\delta$ varies. \texttt{RADAR} stays near the nominal level under $H_0$, demonstrating Type-I error control, and gains competitive power under $H_1$. In contrast, baselines often reject in both regimes, leading to inflated Type-I error under $H_0$.}
    \label{fig:synthetic_power}
    \vspace{-.2em}
\end{figure}
Let $X\in\mathbb R^{n\times d}$ denote a random context matrix, whose rows are sampled independently from Dirichlet distributions, $X_i \sim \mathrm{Dir}(\alpha_i)$ with parameter $\alpha_i\in\mathbb R_+^d$, for $i=1,\dots,n$. The latent preference parameter $\theta\in\Delta^{d-1}$ lies on the probability simplex, $\Delta^{d-1}:= \left\{\theta\in\mathbb R^d_+:\sum_{j=1}^d \theta_j=1\right\}.$ The forward QP problem with expectation risk is
\[
    z^\star(\mathbb P,\theta)
    \in
    \arg\min_{z\in\Delta^{n-1}}
    \mathbb E_{\mathbb P}
    \left[
        \frac12 z^\top Qz + \langle X\theta, z\rangle
    \right].
\]
Intuitively, each row of $X$ represents an item with $d$ features; $\theta$ maps these features to item cost vector $X\theta\in\mathbb R^n$; and $z\in\Delta^{n-1}$ allocates fractional resources across the $n$ items.

Context distribution shifts are generated by perturbing the Dirichlet parameter matrix $\Alpha\in\mathbb R_+^{n\times d}$, whose $i$th row is $\alpha_i$, to a shifted matrix $\Alpha'(\delta)$, where $\delta$ controls the shift magnitude level. This construction creates both decision-irrelevant shifts $(H_0)$ and decision-relevant shifts $(H_1)$. For example, an orthogonal perturbation sets $\alpha_i'=\alpha_i+\delta v$ for a selected row, with $\langle v,\theta^\star\rangle=0$, so the context distribution changes while the induced cost is unchanged for any $\delta$ (decision-irrelevant). In contrast, a row-redistribution perturbation sets $\alpha'_{i,j_1}=\alpha_{i,j_1}+\delta$ and $\alpha'_{i,j_2}=\alpha_{i,j_2}-\delta$, preserving the row sum but changing the row-wise mean direction, which can induce a nonzero optimality gap (decision-relevant). Each shifted regime is labeled by its oracle gap of whether $\Delta(z_0;\theta^\star,\mathbb P_1)\le \tau$. Figure~\ref{fig:synthetic_power} reports empirical rejection rates across representative shift types. Under $H_0$, \texttt{RADAR} remains close to the significance level as $\delta$ increases, indicating controlled Type-I error. Under $H_1$, its power increases rapidly with $\delta$, tracking the growth of the oracle optimality gap. In contrast, generic context two-sample tests and risk-value baselines often reject in both regimes, leading to inflated Type-I error under decision-irrelevant shifts. Full optimization formulations, shift definitions, and implementation details are deferred to Appendix~\ref{app:synthetic_experiment}.

\begin{figure}[!t]
    \centering
    \includegraphics[width=\linewidth]{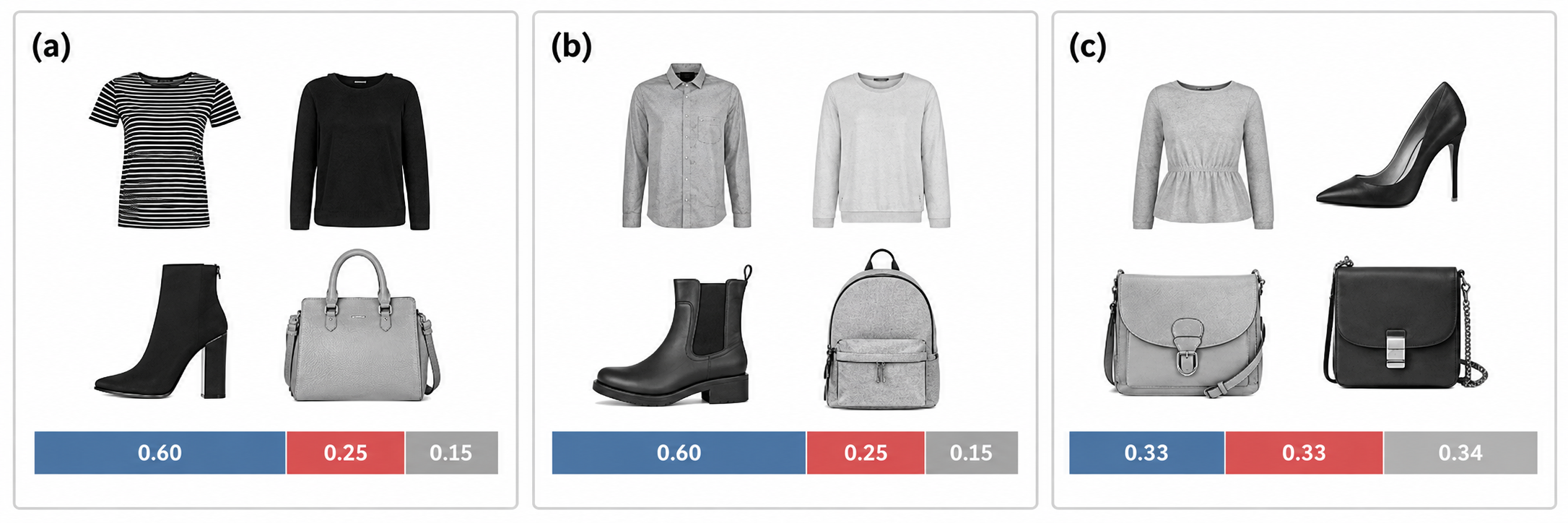}
\caption{Fashion-MNIST capacity-allocation regimes. Each panel shows four representative images of the regime, and the induced coarse-group shares bar. The stacked bars report, from left to right, the proportions of clothing, footwear, and accessories. \textbf{(a)} Baseline, share dominated by clothing. \textbf{(b)} A decision-irrelevant $H_0$ shift replaces categories within the clothing group, producing visually different images while preserving the coarse-group shares. \textbf{(c)} A decision-relevant $H_1$ shift changes the group shares, warranting re-optimization. Panels \textbf{(a)} and \textbf{(b)} are therefore statistically distinguishable but operationally equivalent.}
\label{fig:fashion_mnist_demo}
\end{figure}

\vspace{.1in}
\noindent\emph{\ul{Multi-Product Newsvendor Capacity Allocation}}. 
We next study a semi-synthetic capacity-allocation task built from Fashion-MNIST~\citep{xiao2017fashion}. We treat each image as a unit of demand belonging to one of three product groups (clothing, footwear, accessory) according to its Fashion-MNIST class label. The decision maker commits a fixed capacity share (e.g., shelf space) across groups before demand is realized, balancing groupwise shortage against overage costs whose weights are the latent preferences $\theta$. The images serve as high-dimensional demand contexts, so that shifts in the image distribution induce shifts in demand that may or may not be decision-relevant. Appendix \ref{app:fashion_mnist} gives the details of the class mapping, exact loss, and forward optimization problem.

Figure~\ref{fig:fashion_mnist_demo} shows the shift constructions. The baseline group shares are $(0.60,0.25,0.15)$. Under $H_0$, the fine image classes change while the group shares, and hence the forward objective, remain fixed; the image distribution moves but the deployed allocation remains optimal. We consider two decision-relevant $H_1$ shifts. The balanced $H_1$ shift changes the shares to $(1/3,1/3,1/3)$, which is visible in both the image distribution and the deployed risk. In contrast, the orthogonal $H_1$ shift changes the optimal allocation in a direction with little impact on the realized cost, isolating decision adequacy from deployed risk. The signal level $\delta\in[0,1]$ interpolates from the baseline to each endpoint.

Table~\ref{tab:fashion_mnist_partial_results} reports representative $H_0$ and orthogonal-$H_1$ results; full results appear in Appendix Table~\ref{tab:fashion_mnist_full_results}. Each baseline fails under some path: under $H_0$, the context test baselines increasingly reject a harmless image shift, whileboth Risk-Value and X-Mean miss the orthogonal $H_1$ shift. By directly targeting the optimality gap, \texttt{RADAR} distinguishes visible distribution or deployed risk changes from decision inadequacy.

\begin{table}[!t]
\centering
\vspace{-.1em}

\resizebox{1\linewidth}{!}{
\begin{tabular}{llcccc}
\toprule
\textbf{Shift} & \textbf{Level $\delta$} & \textbf{\texttt{RADAR}} & \textbf{X-Mean} & \textbf{X-Distr} & \textbf{Risk-Value} \\
\midrule
$H_0$ & $0.0$ & $0.02 \pm 0.04$ & $0.06 \pm 0.06$ & $0.02 \pm 0.04$ & $0.06 \pm 0.06$ \\
$H_0$ & $0.3$ & $0.04 \pm 0.05$ & $0.44 \pm 0.11$ & $0.90 \pm 0.07$ & $0.04 \pm 0.05$ \\
$H_0$ & $0.6$ & $0.04 \pm 0.05$ & $0.90 \pm 0.07$ & $1.00 \pm 0.03$ & $0.12 \pm 0.08$ \\
$H_0$ & $0.9$ & $0.06 \pm 0.06$ & $1.00 \pm 0.03$ & $1.00 \pm 0.03$ & $0.06 \pm 0.06$ \\
\midrule
$H_1$ & $0.0$ & $0.00 \pm 0.03$ & $0.04 \pm 0.05$ & $0.02 \pm 0.04$ & $0.04 \pm 0.05$ \\
$H_1$ & $0.3$ & $0.14 \pm 0.08$ & $0.04 \pm 0.05$ & $0.06 \pm 0.06$ & $0.02 \pm 0.04$ \\
$H_1$ & $0.6$ & $0.38 \pm 0.11$ & $0.04 \pm 0.05$ & $0.28 \pm 0.10$ & $0.10 \pm 0.07$ \\
$H_1$ & $0.9$ & $0.68 \pm 0.11$ & $0.14 \pm 0.08$ & $0.66 \pm 0.11$ & $0.04 \pm 0.05$ \\
\bottomrule
\end{tabular}
}
\caption{Representative Fashion-MNIST results for the image-only $H_0$ shift and the orthogonal harmful $H_1$ shift. Entries report the empirical rejection rate $\pm$ the half-width of the 90\% Wilson confidence interval across $50$ independent repetitions at level $\alpha=0.05$.}
\label{tab:fashion_mnist_partial_results}
\vspace{-.2em}
\end{table}

\begin{figure}[!t]
\centering
\begin{subfigure}{.32\linewidth}
\centering
\includegraphics[width=\linewidth]{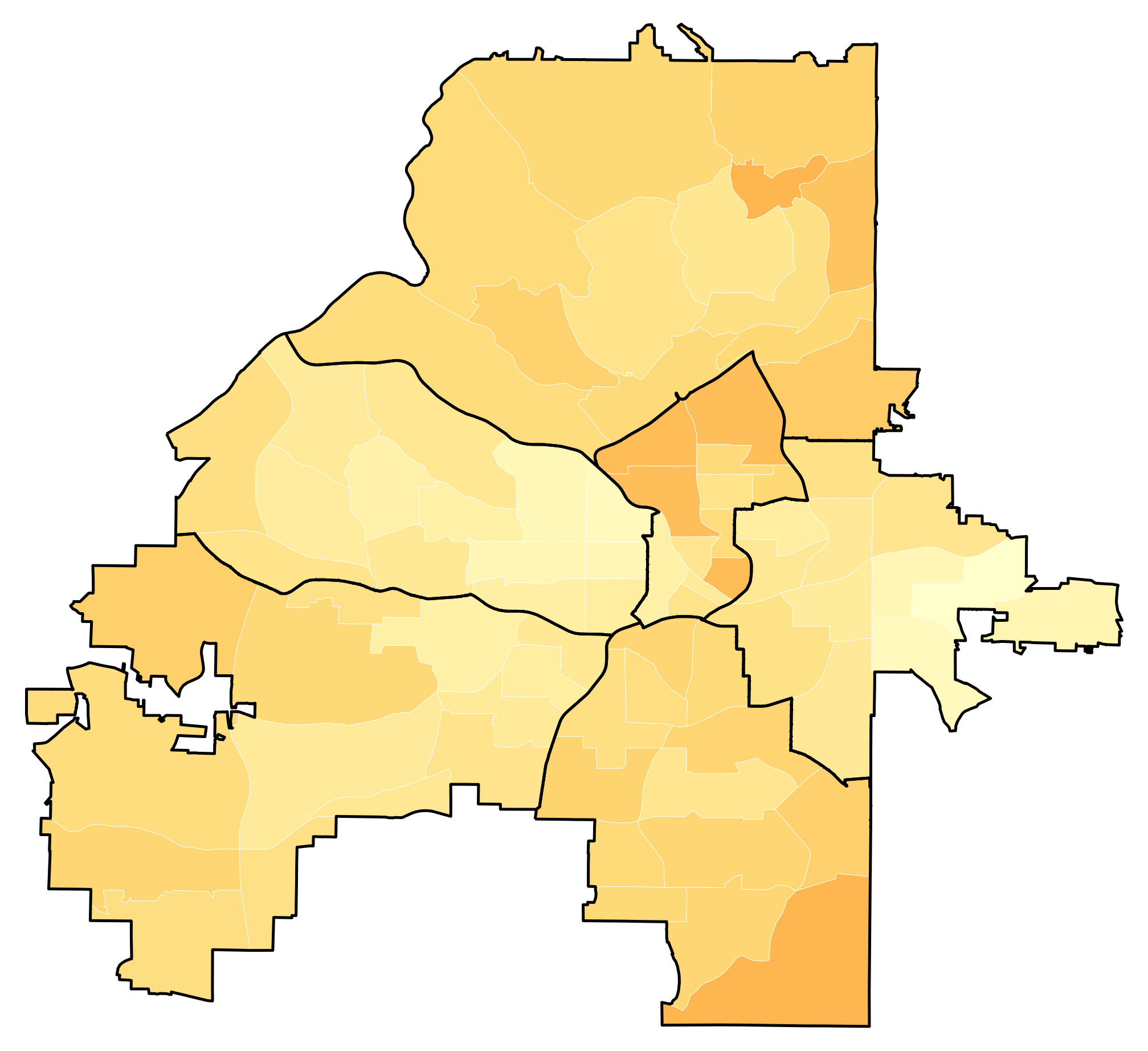}
\caption{2014}
\end{subfigure}
\begin{subfigure}{.32\linewidth}
\centering
\includegraphics[width=\linewidth]{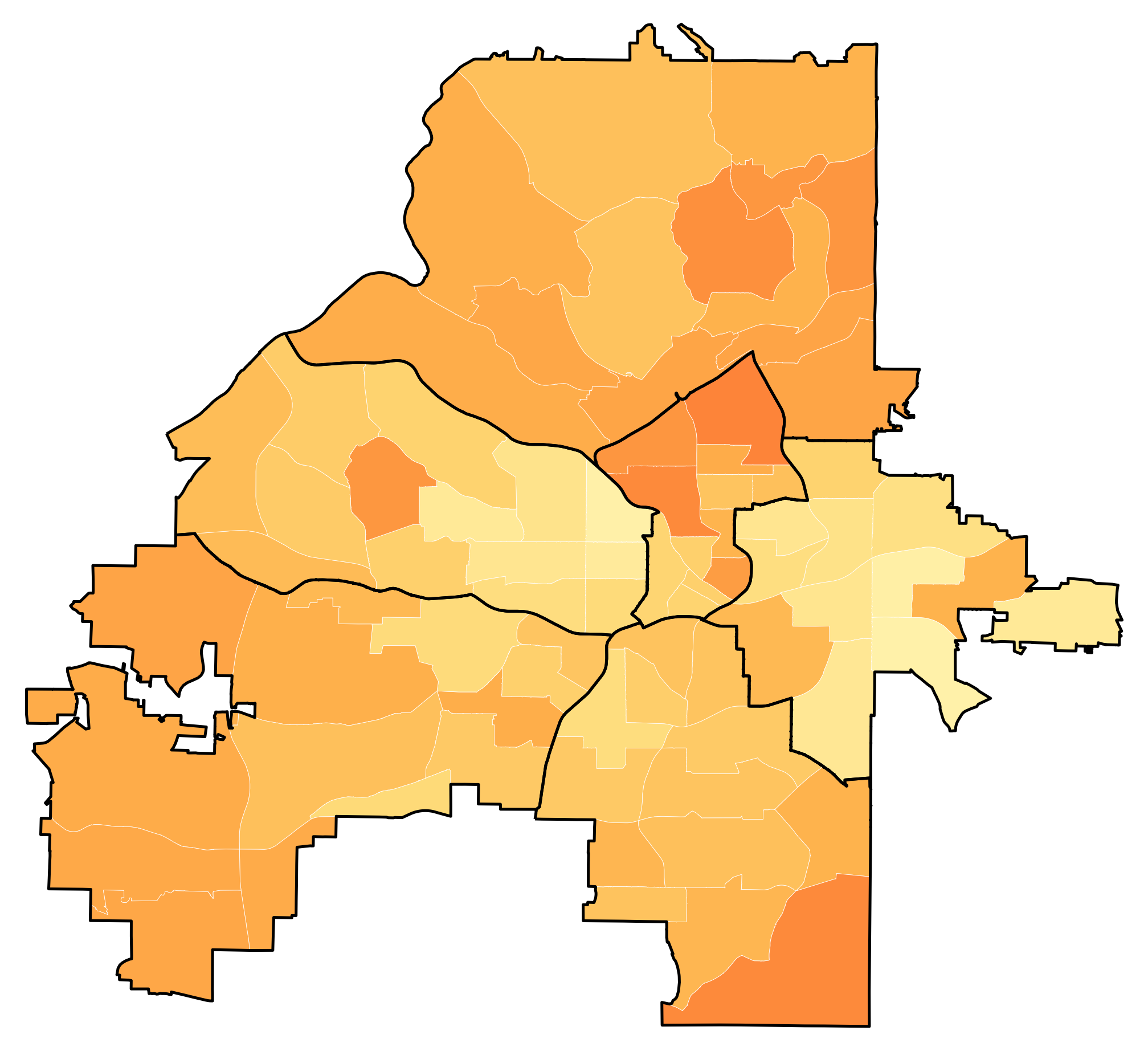}
\caption{2015}
\end{subfigure}
\begin{subfigure}{.32\linewidth}
\centering
\includegraphics[width=\linewidth]{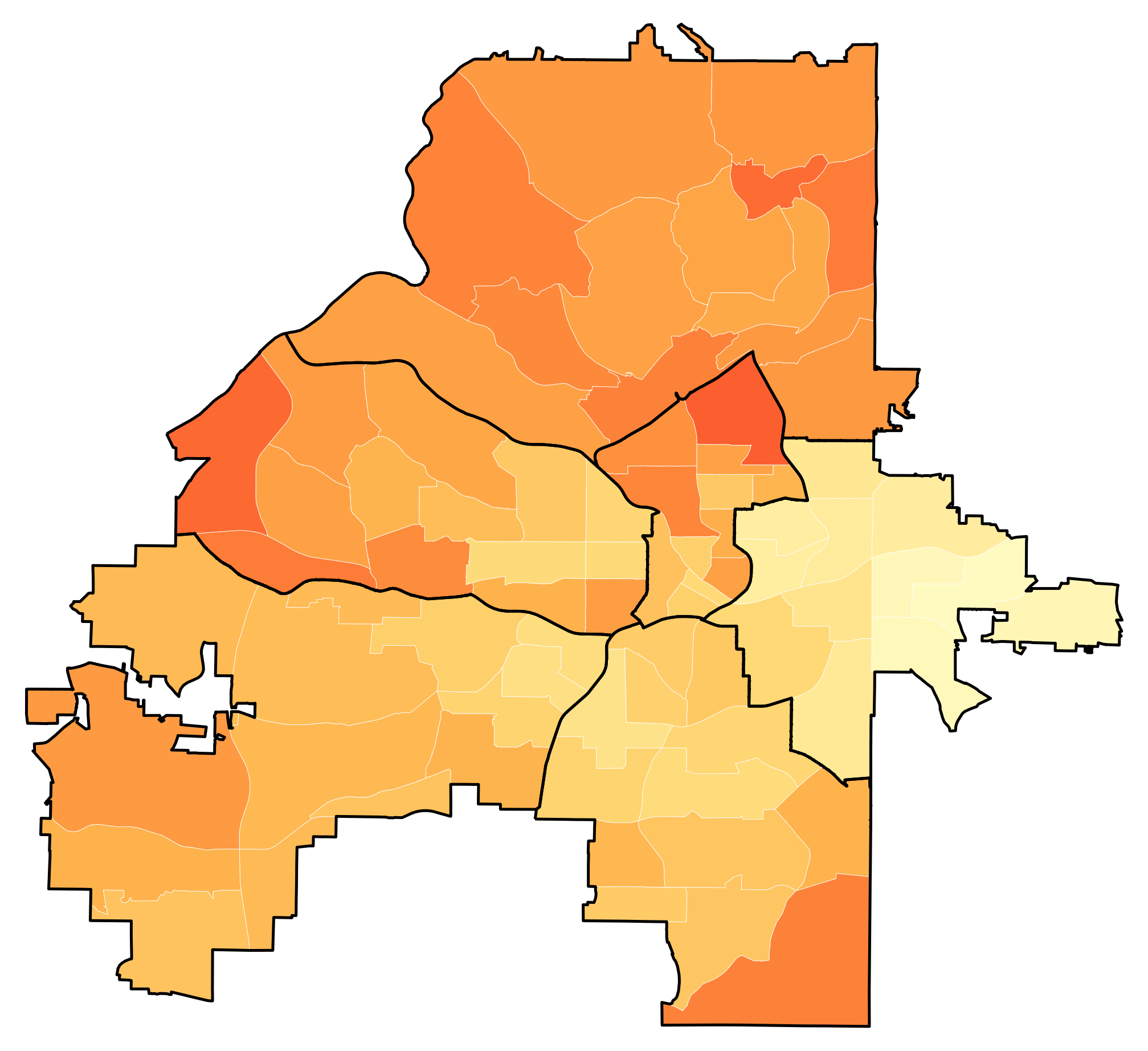}
\caption{2016}
\end{subfigure}
\begin{subfigure}{.32\linewidth}
\centering
\includegraphics[width=\linewidth]{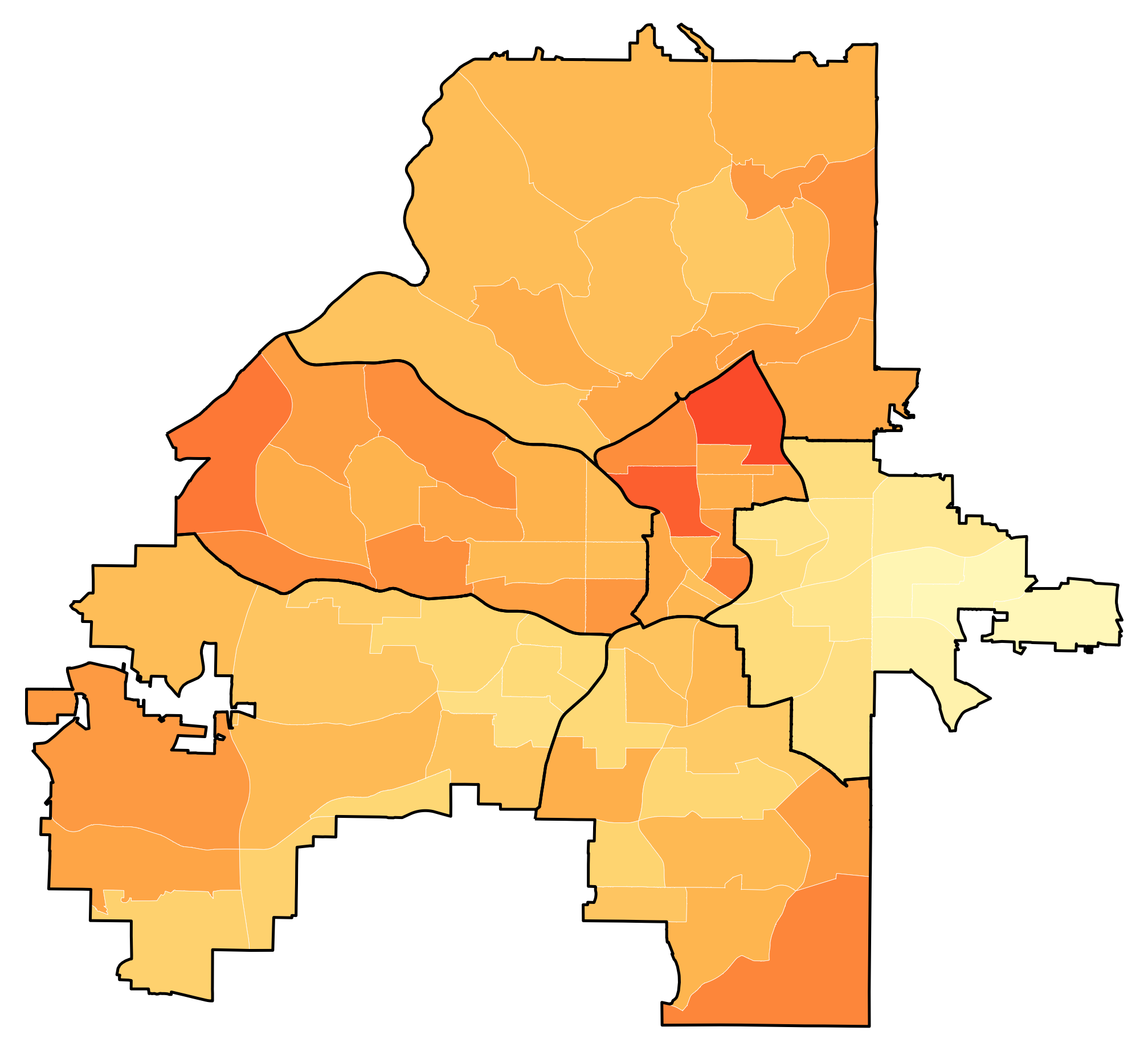}
\caption{2017}
\end{subfigure}
\begin{subfigure}{.32\linewidth}
\centering
\includegraphics[width=\linewidth]{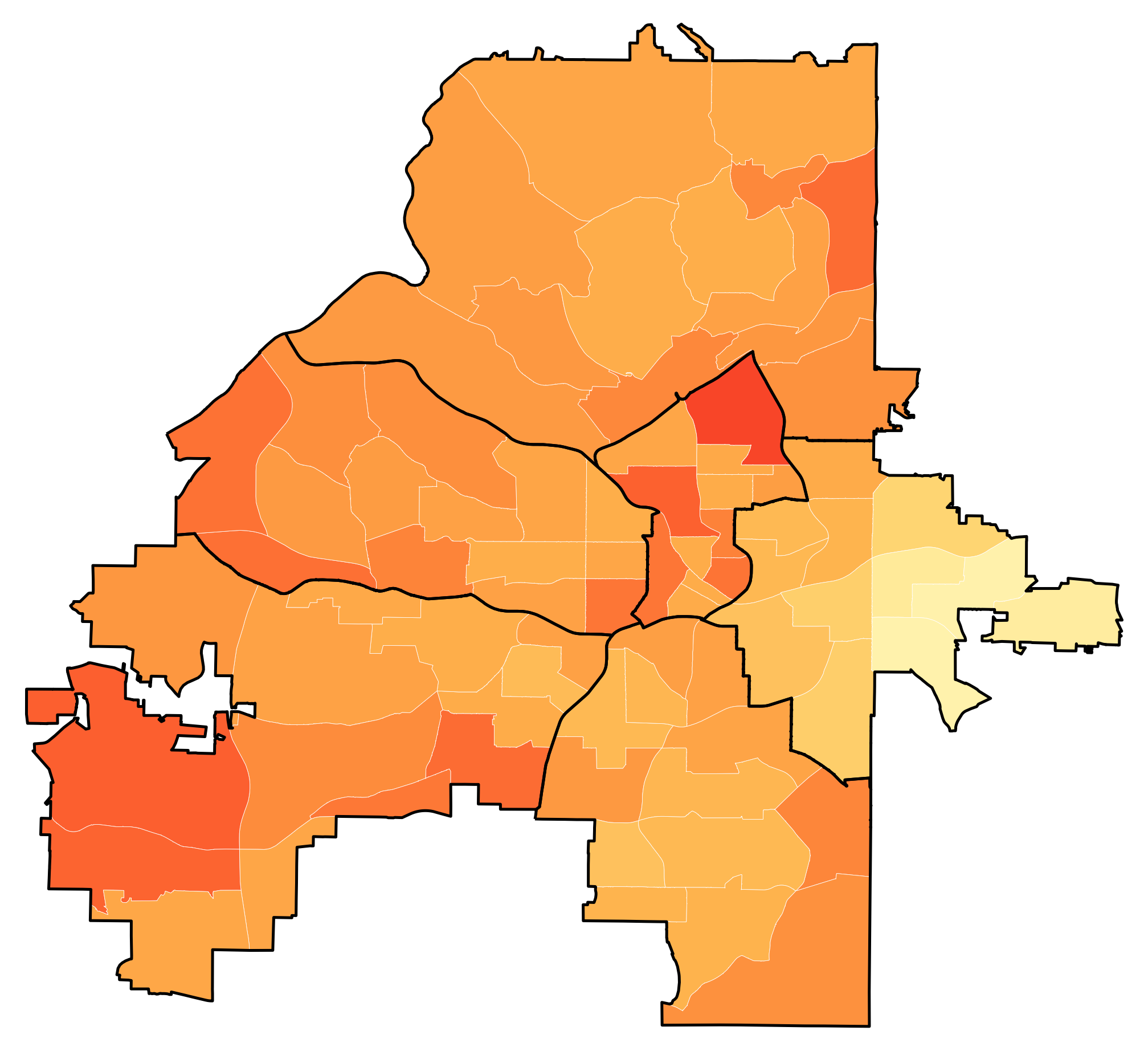}
\caption{2018}
\end{subfigure}
\begin{subfigure}{.32\linewidth}
\centering
\includegraphics[width=\linewidth]{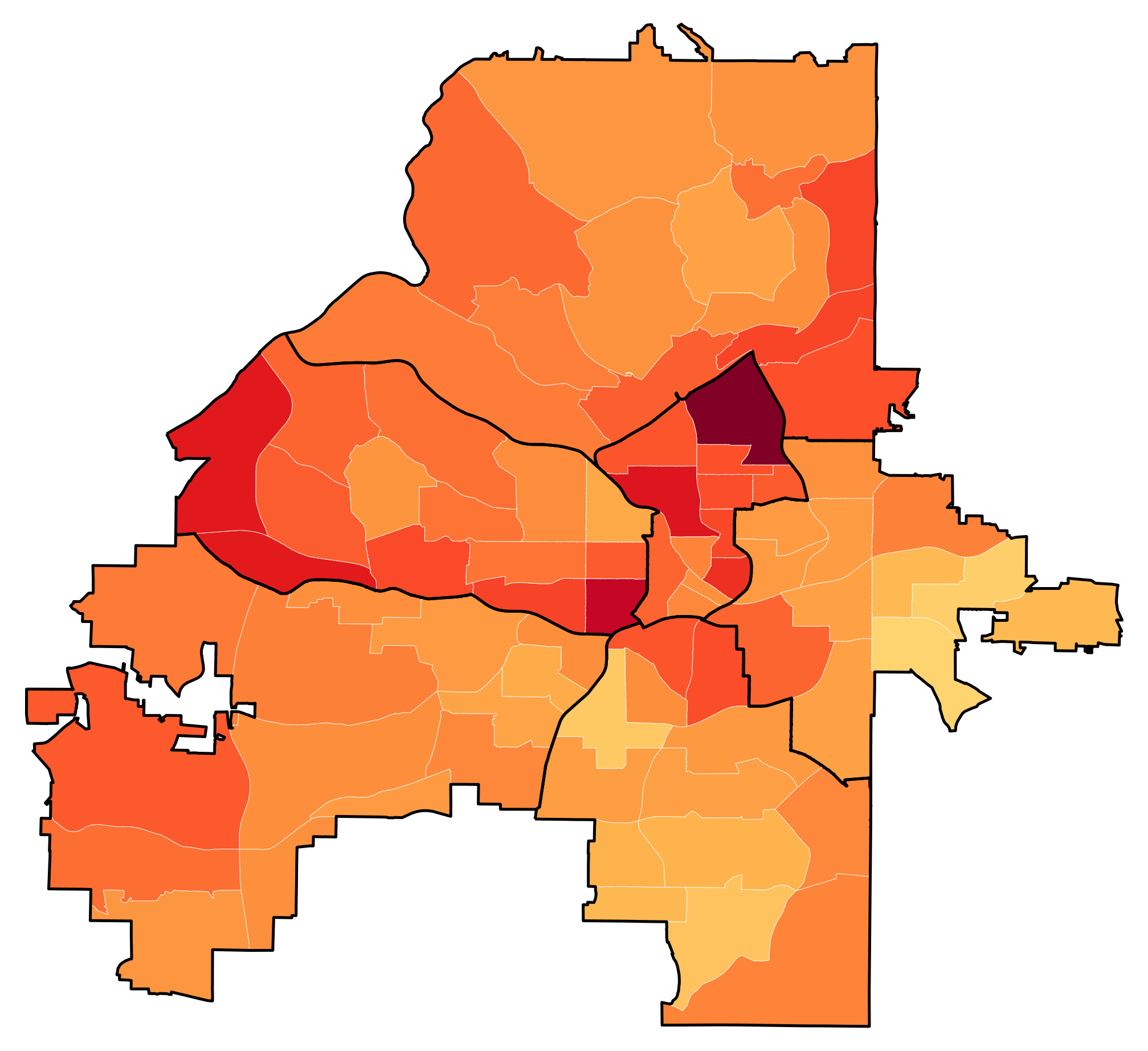}
\caption{2019}
\end{subfigure}
\begin{subfigure}{\linewidth}
\centering
\includegraphics[width=\linewidth]{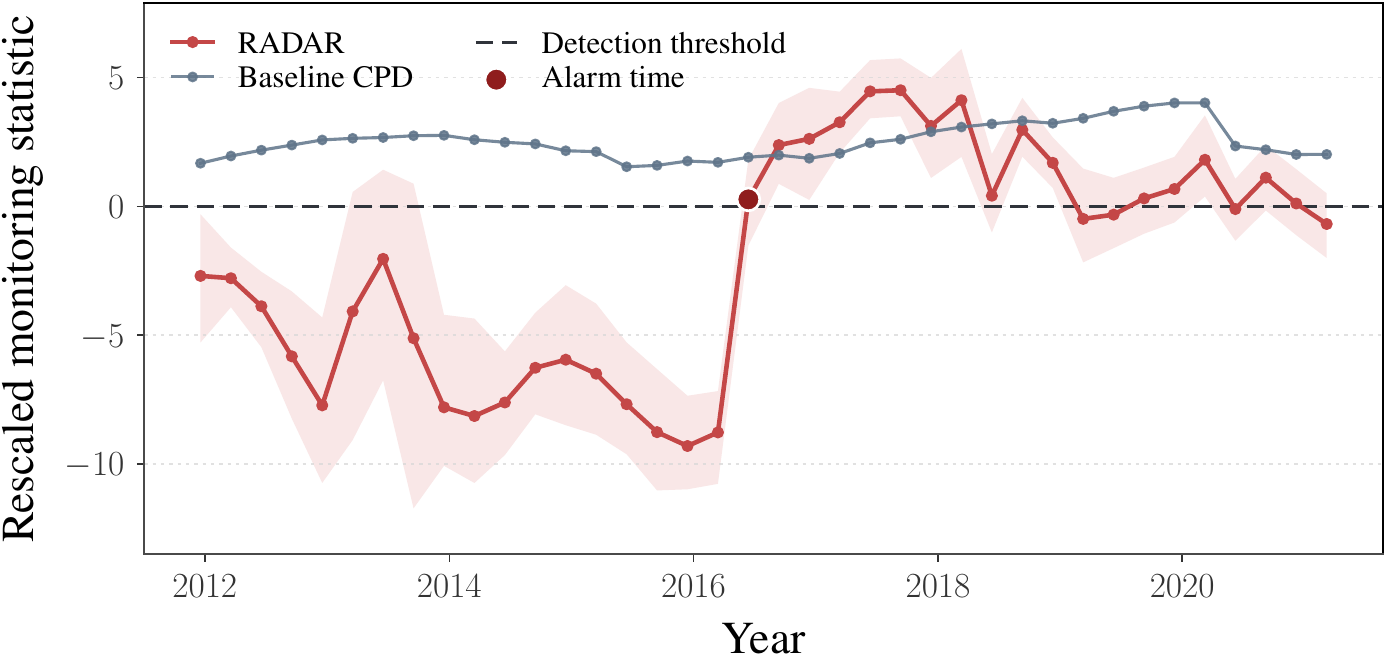}
\caption{Sequential monitoring results.}
\end{subfigure}
\caption{Analysis of the Atlanta police districting task. \textbf{(a)--(f)} Spatial distributions of annual mean response times from 2014 to 2019; darker colors indicate longer response times, and black outlines delineate patrol zones. \textbf{(g)} Decision-focused sequential monitoring of the incumbent districting plan, with the scan statistic peaking around June 2016.}
  \label{fig:atlanta_combined}
\vspace{-.1in}
\end{figure}

\vspace{.1in}
\noindent\emph{\ul{Atlanta Police Districting}}.
Our final study audits a real deployed decision with a known service life. The Atlanta Police Department partitions the city into $78$ geographic \emph{beats}, the atomic units of patrol assignment, and groups them into $6$ \emph{zones}; a districting plan assigns beats to zones subject to zone-size and contiguity constraints. Such a plan is costly to revise, since it relocates personnel and command structure, and is revised correspondingly rarely: the plan we audit took effect in December $2011$ and remained in force until March $2019$, when the department replaced it, citing uneven workload and response times across zones. Its planning objective is to equalize operational load across zones, which we model as a variance-balancing problem: a good assignment makes zone-level averages of a beat-week load signal comparable. From incident timestamps we construct two contexts per beat and week, a response time $r^{\mathrm{resp}}_{ti}$ and a workload $r^{\mathrm{work}}_{ti}$; the planner's trade-off between them is unknown, so we work with the combined signal $r^{(\theta)}_{ti} = \theta r^{\mathrm{resp}}_{ti} + (1-\theta) r^{\mathrm{work}}_{ti}$ and recover $\theta\in[0,1]$ by inverse optimization. The environment drifts continuously while the plan is held fixed, so the operative question is not whether the environment changed, but whether it changed enough to render the incumbent plan inadequate---exactly the sequential monitoring setting \texttt{RADAR} is designed for. Full optimization details, feasibility constraints, and implementation are deferred to Appendix~\ref{app:atlanta}.

Figure~\ref{fig:atlanta_combined}\textbf{(a)--(f)} illustrates the gradual spatial evolution of annual mean response times. Although these panels reveal temporal changes in the operational context, distributional change alone does not establish whether the incumbent districting plan has become inadequate. Figure~\ref{fig:atlanta_combined}\textbf{(g)} reports the sequential monitoring path described in Appendix~\ref{app:atlanta}. The \texttt{RADAR} statistic remains far below the detection threshold for consecutive monitoring times spanning the plan's first nearly five years in service, then first crosses it in June $2016$ and rejects at subsequent monitoring times, indicating a consistent pattern of incumbent decision suboptimality rather than a transient excursion. The baseline change-point statistic, by contrast, exceeds its detection threshold at \emph{every} monitoring time, beginning with the first and including the entire pre-alarm period in which the incumbent's estimated optimality gap stays within the tolerance.

We read June $2016$ as a model-based early-warning signal. Separate analyses support our finding. \citet{zhu2022data} report that workload variance across zones rose by $17.98\%$ in $2017$ relative to $2016$ and deteriorated further without re-configuration. By March $2018$, patrols in Zone~$2$ were publicly reported as strained by call volume relative to the cars available to cover its thirteen beats.\footnote{\emph{Atlanta Journal-Constitution}, \url{https://www.ajc.com/news/local/torpy-large-not-enough-apd-cops-stop-shoplifters-that-crime/cwsV4R389yeeLMXxRXehjN/}.} A new zone plan was eventually adopted in March $2019$, which reduced workload imbalance by $43\%$ and high-priority response time by $5.8\%$, confirming that a materially better plan had indeed been available.\footnote{Atlanta Police Department news release, \url{https://www.atlantapd.org/Home/Components/News/News/190/}.} The alarm thus precedes the measured deterioration by roughly a year and the re-configuration by two and a half, which is the ordering an early-warning monitor should produce. The baseline's behavior is the empirical counterpart of the distinction drawn in related work: the operating environment is always changing, so a test of distributional change---or, as in \citet{podkopaev2022tracking}, of risk alone---is nearly uninformative about whether a deployed decision still warrants its place, whereas targeting the optimality gap separates the one change that mattered from the continuous drift around it.

\vspace{.1in}
\noindent\emph{\ul{Robustness to Preference Non-Identification}}.
We further examine whether the audit conclusion is stable when $\theta^\star$ is only set-identified. Rather than fixing a single rationalizing $\hat\theta$, we compute the population range $[\underline\Delta_{\rm inv},\overline\Delta_{\rm inv}]$ of the target gap over the entire inverse-feasible set $\Theta_{\rm inv}$ and read the conclusion off its position relative to $\tau$: the incumbent is adequate if $\overline\Delta_{\rm inv}\le\tau$, re-optimization is warranted if $\underline\Delta_{\rm inv}>\tau$, and the conclusion is indeterminate otherwise. We demonstrate with an example in a linear program whose incumbent is a vertex, so that $\Theta_{\rm inv}$ is a full-dimensional cone rather than a point, decision-irrelevant shifts give $[\underline\Delta_{\rm inv},\overline\Delta_{\rm inv}]=[0.000,0.000]$ and the conclusion is determinate despite non-identification, whereas decision-relevant shifts give $\underline\Delta_{\rm inv}>\tau$ once their magnitude is large enough and leave it indeterminate below that magnitude. Setup, shift definitions, and the full ranges are deferred to Appendix~\ref{app:io_ambiguity} and Table~\ref{tab:delta_inv}.

\section{Discussion and Future Work}
\texttt{RADAR} assesses whether distributional change has rendered a deployed decision suboptimal, rather than whether the context distribution has merely changed. Its guarantees are conditional on the specified forward model: under misspecification of the objective family, the estimated optimality gap need not correspond to the realized operational loss. The inverse step further requires the incumbent to be informative about $\theta^\star$, since weakly rationalized decisions widen $[\underline\Delta_{\rm inv},\overline\Delta_{\rm inv}]$; when this range straddles $\tau$, the audit is indeterminate rather than underpowered. Challenger construction may be computationally demanding, although its error is one-sided and affects power rather than validity. A non-rejection is accordingly inconclusive, being consistent with adequacy, preference ambiguity, or limited evaluation data. Promising directions include auditing policies observed across multiple contexts, calibrating $\tau$ to switching costs, and sequential monitoring under explicit multiplicity control.

\bibliography{arxiv/ref}

\appendix
\clearpage
\onecolumn
\section{Details of the Testing Procedures}
\label{app:test_details}

We provide additional details on the testing procedures used to approximate the sampling distribution of the estimated deployment optimality gap $\widehat\Delta$. Throughout this section, the optimization and parameter estimation steps are treated as fixed conditional on the first-stage samples, and inference is carried out using only the independent evaluation subsample. This sample-splitting construction separates benchmark construction from gap evaluation and allows standard one-sided testing of
\[
H_0:\Delta^\star \le \tau
\qquad \text{versus} \qquad
H_1:\Delta^\star > \tau .
\]
We consider two approaches.

\begin{enumerate}

    \item \textbf{Wald test.}
    Under standard regularity conditions, the evaluation-sample estimator admits the asymptotic expansion
    \[
    \sqrt{m''}\bigl(
    \widehat\Delta-\Delta(z_0;\hat\theta,\mathbb P_1)
    \bigr)
    \overset{d}{\longrightarrow}
    \mathcal N(0,\sigma^2),
    \]
    for some $\sigma^2>0$. Let $\hat\sigma^2$ be a consistent estimator of $\sigma^2$ computed from the evaluation subsample. We define the one-sided Wald statistic
    \begin{equation}
    \label{eq:wald_stat_step3}
        T_{\rm Wald}
        =
        \frac{\sqrt{m''}(\widehat\Delta-\tau)}{\hat\sigma}.
    \end{equation}
    The Wald test rejects $H_0$ at level $\alpha$ when $T_{\rm Wald}>z_{1-\alpha}$, where $z_{1-\alpha}$ is the $(1-\alpha)$ quantile of the standard normal distribution. Equivalently, the one-sided $p$-value is
    \begin{equation}
    \label{eq:p_wald}
        p_{\rm Wald}
        =
        1-\Phi(T_{\rm Wald}),
    \end{equation}
    where $\Phi$ denotes the standard normal CDF.

    \item \textbf{Bootstrap test.}
    For general risk functionals, small evaluation samples, or nonsmooth risks such as VaR and CVaR, the normal approximation may be less accurate. We therefore also consider a nonparametric bootstrap test based on the evaluation subsample.

    Draw $B$ bootstrap samples of size $m''$ with replacement from the evaluation subsample. For each bootstrap replicate $b=1,\dots,B$, recompute the estimated gap in \eqref{eq:gap_hat_step3} on the resampled data, while keeping $\hat\theta$ and $\hat z_1$ fixed. This gives bootstrap realizations $\{\widehat\Delta_b^\ast\}_{b=1}^B$. Define the empirical distribution of the centered bootstrap statistic by
    \[
    \widehat F_B(t)
    =
    \frac{1}{B}
    \sum_{b=1}^B
    \mathbf{1}
    \left\{
    \sqrt{m''}\bigl(\widehat\Delta_b^\ast-\widehat\Delta\bigr)
    \le t
    \right\}.
    \]
    The one-sided bootstrap $p$-value is
    \begin{equation}
    \label{eq:p_boot}
        p_B
        =
        1-\widehat F_B
        \left(
        \sqrt{m''}(\widehat\Delta-\tau)
        \right).
    \end{equation}
    We reject $H_0$ when $p_B<\alpha$.
\end{enumerate}

The Wald test is computationally efficient and is appropriate when the evaluation sample estimator is well approximated by its asymptotic normal limit. The bootstrap test is more computationally demanding but is often preferable for small samples, nonsmooth risk functionals, or general risk objectives for which closed-form variance estimation is inconvenient. In both cases, sample splitting ensures that the learned preference parameter, the constructed challenger decision, and the final evaluation of the deployment gap are statistically separated.

\section{Decision-Relevant Change-Point Detection}
\label{subsec:cpd}

Classical change-point detection (CPD) asks whether the context distribution changes.  \texttt{RADAR} instead asks whether the post-change distribution makes the deployed decision $z_0$ inadequate.  For a tolerance $\tau\ge0$, define the decision-safe set
\[
\mathcal S_\tau(z_0)
\;:=\;
\Big\{
\mathbb P \;\big|\; \Delta(z_0;\theta^\star,\mathbb P)\le \tau
\Big\},
\]
the distributions under which $z_0$ is $\tau$-optimal.  We assume the stream starts from a decision-safe baseline, $\mathbb P_0\in\mathcal S_\tau(z_0)$, and may change once at an unknown first post-change index $\gamma$.  This gives three regimes:
\begin{align}
\label{eq:regime_nochange}
\mathcal H_0:\quad
&X_i \sim \mathbb P_0, \qquad i=1,\dots,T,\\[2mm]
\label{eq:regime_safechange}
\mathcal H_{\mathrm{safe}}:\quad
&\exists\,\gamma \ \text{s.t.}\ 
X_i \sim 
\begin{cases}
\mathbb P_0, & i < \gamma,\\
\mathbb P_1, & i \ge \gamma,
\end{cases}
\quad\text{and}\quad
\mathbb P_0\ne\mathbb P_1,\ \mathbb P_1\in \mathcal S_\tau(z_0),\\[2mm]
\label{eq:regime_unsafechange}
\mathcal H_{\mathrm{unsafe}}:\quad
&\exists\,\gamma \ \text{s.t.}\ 
X_i \sim 
\begin{cases}
\mathbb P_0, & i < \gamma,\\
\mathbb P_1, & i \ge \gamma,
\end{cases}
\quad\text{and}\quad
\mathbb P_0\ne\mathbb P_1,\ \mathbb P_1\notin\mathcal S_\tau(z_0).
\end{align}
Decision-relevant monitoring tests
\begin{equation}
\label{eq:dr_cpd_test}
H_0:\ \mathcal H_{0} \cup \mathcal H_{\mathrm{safe}},
\qquad \text{vs.} \qquad
H_1:\ \mathcal H_{\mathrm{unsafe}}.
\end{equation}
By comparison, standard CPD tests
\begin{equation}
\label{eq:standard_cpd_test}
H_0^{\mathrm{CPD}}:\ \mathcal H_{0} ,
\qquad \text{vs.} \qquad
H_1^{\mathrm{CPD}}:\ \mathcal H_{\mathrm{safe}}
\cup \mathcal H_{\mathrm{unsafe}}.
\end{equation}
Thus, standard CPD flags either kind of change, whereas \texttt{RADAR} alarms only when the stream leaves $\mathcal S_\tau(z_0)$. Figure~\ref{fig:cpd_motivation} illustrates the distinction.  A variance-only shift at $\gamma_1=1000$ changes the distribution but leaves $z_0$ adequate; a mean shift at $\gamma_2=2000$ makes it inadequate.  Standard CPD reacts to both shifts, and its rejection rate decays back to the nominal level once each change has passed out of its comparison windows.  \texttt{RADAR} reacts only to the second and keeps rejecting thereafter, because its estimated gap remains above the tolerance for as long as the stream stays in the unsafe regime.
\begin{figure*}[t]
    \centering
    \includegraphics[width=\textwidth]{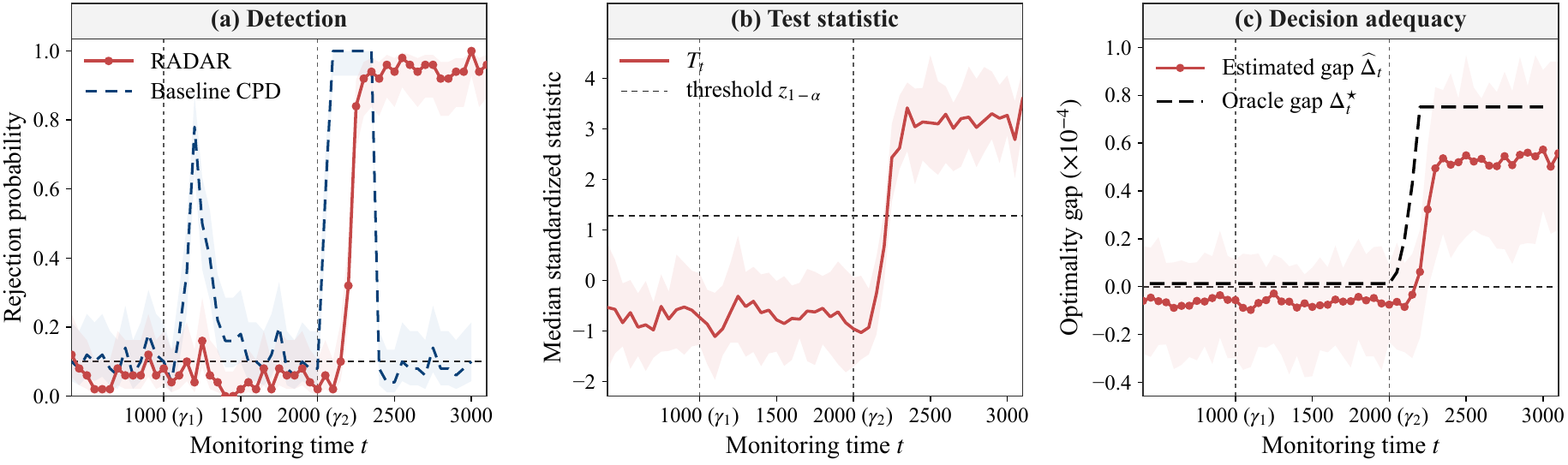}
    \caption{Decision-relevant sequential monitoring with two changes: the shift at $\gamma_1=1000$ is decision-irrelevant, whereas the shift at $\gamma_2=2000$ is decision-relevant. Every statistic at monitoring time $t$ uses the $w=200$ observations ending at $t$, so neither method can respond before $t=\gamma+w$. \textbf{(a)} Standard CPD flags both changes and then forgets each one, returning to its nominal level; \texttt{RADAR} ignores $\gamma_1$ and, once the incumbent becomes inadequate at $\gamma_2$, keeps rejecting for the remainder of the record. \textbf{(b)} The alarm statistic $T_t$ of \eqref{eq:local_alarm_statistic} against the threshold $z_{1-\alpha}$. \textbf{(c)} The estimated and oracle window optimality gaps against the tolerance $\tau=0$. Shading denotes pointwise $95\%$ intervals in \textbf{(a)} and \textbf{(c)} and the interquartile range across replications in \textbf{(b)}.}
    \label{fig:cpd_motivation}
\end{figure*}

\subsection{Single-window monitoring and the alarm time}
\label{subsec:cpd_procedure}

We assume a prespecified burn-in period $\mathcal I_0$ known to contain no change. We estimate $\hat\theta_0$ once from $(\{X_i:i\in\mathcal I_0\},z_0)$ by the inverse problem \eqref{eq:iop} and hold $(\hat\theta_0,z_0)$ fixed throughout monitoring. Freezing the preference parameter serves two purposes: it keeps $\hat\theta_0$ independent of the monitored stream, so the evaluation-stage inference at each monitoring time conditions on a first stage estimated from separate data, and it puts the statistics at different monitoring times on a common scale, which a re-estimated $\hat\theta_t$ would not. Any scale normalization used to express $\tau$ on a relative basis is fixed on $\mathcal I_0$ for the same reason.

Let $\mathcal T=\{t_1<\cdots<t_N\}$ denote the monitoring times. Each uses the trailing window of the $w$ most recent observations,
\begin{equation}
\label{eq:trailing_window}
\mathcal W_t=\{t-w+1,\ldots,t\},
\qquad
\mathcal W_t\cap\mathcal I_0=\emptyset ,
\end{equation}
so that no monitoring window reuses burn-in data. Write $\mathbb P_t$ for the population distribution of $\mathcal W_t$ and
\begin{equation}
\label{eq:window_optimality_gap}
\Delta_t:=\Delta(z_0;\theta^\star,\mathbb P_t)
\end{equation}
for the optimality gap of the incumbent under the current operating conditions. As in Section~\ref{sec:hypo-test}, we split $\mathcal W_t$ at random into a benchmark-construction split of size $w'$ and an evaluation split of size $w''$, with $w'+w''=w$. The benchmark split yields a challenger $\hat z_t$ by the construction in Step~2, and the evaluation split yields the held-out estimate $\widehat\Delta_t$ of \eqref{eq:window_optimality_gap} together with the evaluation-stage standard deviation $\hat\sigma_t$, whose standard error is $\hat\sigma_t/\sqrt{w''}$.

The local hypotheses at $t$ are the decision-adequacy hypotheses \eqref{eq:hypothesis_test} applied to the current window,
\begin{equation}
\label{eq:local_alarm_test}
H_{0,t}:\ \Delta_t\le\tau
\qquad\text{versus}\qquad
H_{1,t}:\ \Delta_t>\tau ,
\end{equation}
and the corresponding statistic is the Wald statistic \eqref{eq:T_wald} evaluated on the window,
\begin{equation}
\label{eq:local_alarm_statistic}
T_t
=\frac{\sqrt{w''}\,(\widehat\Delta_t-\tau)}{\hat\sigma_t}.
\end{equation}

\paragraph{Alarm time.}
With $N=|\mathcal T|$ monitoring time points and the Bonferroni critical value $q:=z_{1-\alpha/N}$, we define
\begin{equation}
\label{eq:alarm_time}
\hat t_{\rm alarm}
:=
\min\{t\in\mathcal T:\ T_t>q\},
\qquad \min\emptyset:=\infty ,
\end{equation}
and raise an alarm at $\hat t_{\rm alarm}$, declaring that the incumbent decision warrants re-examination. Since $\mathcal W_t$ is trailing, the statistic indexed by $t$ is available at time $t$ itself; the price is that an alarm lags the onset of inadequacy by at most $w$ observations. We do not report $\argmax_{t\in\mathcal T}T_t$: the operational question is when the incumbent should be re-examined, not where in the past the environment moved, and under sustained inadequacy the maximizer is determined by the noise in an already-alarmed regime.

\begin{algorithm}[!tbh]
\caption{Decision-relevant sequential monitoring}
\label{alg:dr_cpd}
\begin{algorithmic}[1]
\STATE \textbf{Input:} Contexts $\{X_i\}_{i\ge1}$; burn-in set $\mathcal I_0$; deployed decision $z_0$; window width $w$ with split sizes $(w',w'')$, $w'+w''=w$; monitoring times $\mathcal T$; tolerance $\tau$; level $\alpha$.
\STATE Estimate $\widehat\theta_0$ from $(\{X_i:i\in\mathcal I_0\},z_0)$ by the inverse problem \eqref{eq:iop}; hold $(\widehat\theta_0,z_0)$ and the scale normalization fixed throughout.
\STATE Set the Bonferroni critical value $q\gets z_{1-\alpha/|\mathcal T|}$.
\FOR{$t\in\mathcal T$}
    \STATE Form the trailing window $\mathcal W_t$ by \eqref{eq:trailing_window}.
    \STATE Split $\mathcal W_t$ into benchmark and evaluation split of sizes $(w',w'')$.
    \STATE Construct $\widehat z_t$ on the benchmark split; compute $(\widehat\Delta_t,\widehat\sigma_t)$ and $T_t$ by \eqref{eq:local_alarm_statistic} on the evaluation split.
    \IF{$T_t>q$}
        \STATE \textbf{return} alarm at $t$.
    \ENDIF
\ENDFOR
\STATE \textbf{return} no alarm.
\end{algorithmic}
\end{algorithm}

\subsection{Theoretical guarantees}
\label{subsec:cpd_theory}

The monitoring grid is fixed: $\mathcal T$ contains $N$ equally spaced times with stride $s$, and asymptotics are taken in the sample sizes, with $N$, $s$ and $\alpha$ held fixed. This is the relevant regime for a deployed monitor, where the horizon and the review schedule are set by the application and what grows is the data behind each review.

Because the split of $\mathcal W_t$ is uniformly at random, the evaluation split targets the \emph{window mixture}
\begin{equation}
\label{eq:window_mixture}
\mathbb P_t:=\frac1w\sum_{i\in\mathcal W_t}\mathbb P_{(i)} ,
\end{equation}
where $\mathbb P_{(i)}$ is the law of $X_i$. A window lying in a single regime has $\mathbb P_t$ equal to that regime's law; a window straddling the change at $\gamma$ has $\mathbb P_t=\lambda_t\mathbb P_0+(1-\lambda_t)\mathbb P_1$ with $\lambda_t$ the fraction of the window before $\gamma$. Everything below rests on one condition.

\begin{assumption}[Local validity]
\label{ass:cpd_local}
For every $t\in\mathcal T$ with $\Delta_t\le\tau$ and every $c\in\mathbb R$,
\[
\limsup\ \Pr\big(T_t>c\big)\ \le\ 1-\Phi(c).
\]
\end{assumption}

Assumption~\ref{ass:cpd_local} is the window-level form of the guarantee already established for the two-sample test, and it holds under the conditions used there. Applying Assumptions~\ref{ass:compact}--\ref{ass:f_lipschitz} and~\ref{ass:eval_clt} to $\mathcal W_t$ makes the evaluation-stage statistic asymptotically standard normal once the burn-in error is negligible, $\sqrt{w''}\,\sup_{t}|\Delta(z_0;\hat\theta_0,\mathbb P_t)-\Delta_t|\xrightarrow{p}0$, which holds when $\delta_{\rm inv}=0$ and $w''=o(n^{1/\kappa})$ under Assumption~\ref{assm:error_bound_P0}. On a window straddling a change the evaluation fold is drawn without replacement from independent but non-identically distributed observations, which inflates $\hat\sigma_t$ relative to the true standard error and makes $T_t$ conservative; the one-sided domination in Assumption~\ref{ass:cpd_local} is therefore preserved.

\begin{lemma}[The decision-safe set is convex]
\label{lem:safe_convex}
For fixed $z_0$ and $\theta$, the map $\mathbb P\mapsto\Delta(z_0;\theta,\mathbb P)$ is convex, so $\mathcal S_\tau(z_0)$ is convex. Consequently, under $H_0$ in \eqref{eq:dr_cpd_test}, every window mixture satisfies $\mathbb P_t\in\mathcal S_\tau(z_0)$, and hence $\Delta_t\le\tau$ for all $t\in\mathcal T$.
\end{lemma}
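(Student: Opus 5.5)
The argument has two parts. First, convexity of $\mathbb P\mapsto\Delta(z_0;\theta,\mathbb P)$ comes from splitting $\Delta$ into a linear part and a concave part. Second, the window claim follows because every window law under $H_0$ is a mixture of $\mathbb P_0$ and $\mathbb P_1$.

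\emph{Convexity of the gap.} Write
\[
\Delta(z_0;\theta,\mathbb P)=L(\mathbb P)-V(\mathbb P),\qquad L(\mathbb P):=\E_{\mathbb P}[f(z_0;X,\theta)],\qquad V(\mathbb P):=\min_{z\in\mathcal Z}\E_{\mathbb P}[f(z;X,\theta)].
\]
For a mixture $\mathbb P_\lambda=\lambda\mathbb P+(1-\lambda)\mathbb Q$ with $\lambda\in[0,1]$, expectation is linear in the measure. So $L(\mathbb P_\lambda)=\lambda L(\mathbb P)+(1-\lambda)L(\mathbb Q)$, and the same identity holds for every fixed $z$. Linearity requires the expectations to be finite, which the envelope condition (Assumption~\ref{ass:envelope}) provides on the mixtures of $\mathbb P_0$ and $\mathbb P_1$ that we actually need.

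$V$ is a pointwise infimum of functions affine in $\mathbb P$, hence concave. Concretely, for every $z$,
\[
\E_{\mathbb P_\lambda}f(z;X,\theta)\ge \lambda V(\mathbb P)+(1-\lambda)V(\mathbb Q),
\]
and taking the infimum over $z$ gives $V(\mathbb P_\lambda)\ge \lambda V(\mathbb P)+(1-\lambda)V(\mathbb Q)$. Therefore $\Delta=L-V$ is affine minus concave, i.e.\ convex. The sublevel set $\{\mathbb P:\Delta(z_0;\theta^\star,\mathbb P)\le\tau\}=\mathcal S_\tau(z_0)$ of a convex function is convex. The same argument extends to finite mixtures, by induction or by applying the inequality directly with weights summing to one.

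\emph{Windows under $H_0$.} Under $\mathcal H_0$, every $\mathbb P_{(i)}=\mathbb P_0$. Hence $\mathbb P_t=\mathbb P_0\in\mathcal S_\tau(z_0)$ by the standing assumption that the baseline is decision-safe. Under $\mathcal H_{\rm safe}$, the window mixture \eqref{eq:window_mixture} equals $\lambda_t\mathbb P_0+(1-\lambda_t)\mathbb P_1$ for some $\lambda_t\in[0,1]$. Pure pre-change windows have $\lambda_t=1$, pure post-change windows have $\lambda_t=0$, and straddling windows have intermediate $\lambda_t$. Since both endpoints lie in $\mathcal S_\tau(z_0)$, convexity gives $\mathbb P_t\in\mathcal S_\tau(z_0)$, i.e.\ $\Delta_t\le\tau$, for every $t\in\mathcal T$.

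\emph{Main obstacle.} There is no real obstacle; the only care needed is the well-definedness and finiteness of $V$ and $L$ on mixtures. The minimum is attained because $\mathcal Z$ is compact and $f$ is continuous, with dominated convergence supplied by the envelope. Even without attainment, an infimum of affine functions remains concave, so the argument goes through.
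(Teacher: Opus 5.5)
Your proposal is correct and follows essentially the same route as the paper. The paper also writes the gap as a linear functional of $\mathbb P$ minus an infimum of linear functionals, treats $\mathcal S_\tau(z_0)$ as a sublevel set, and observes that under $H_0$ every window law is $\mathbb P_0$ or a mixture of $\mathbb P_0$ and $\mathbb P_1$. Your added check that the expectations stay finite on such mixtures, via the envelope condition, is a harmless detail the paper leaves implicit.
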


\begin{proof}
$\Delta(z_0;\theta,\mathbb P)=\E_{\mathbb P}[f(z_0;X,\theta)]-\inf_{z\in\mathcal Z}\E_{\mathbb P}[f(z;X,\theta)]$ is a linear functional of $\mathbb P$ minus an infimum of linear functionals, hence linear plus convex, hence convex; $\mathcal S_\tau(z_0)$ is one of its sublevel sets. Under $\mathcal H_0$ every $\mathbb P_t$ equals $\mathbb P_0$. Under $\mathcal H_{\rm safe}$, $\mathbb P_t$ is by \eqref{eq:window_mixture} a mixture of $\mathbb P_0$ and $\mathbb P_1$, both in $\mathcal S_\tau(z_0)$, so $\mathbb P_t\in\mathcal S_\tau(z_0)$ by convexity.
\end{proof}

Lemma~\ref{lem:safe_convex} settles the failure mode a windowed monitor is most exposed to. A window straddling a decision-irrelevant change mixes two safe regimes, and one might worry the incumbent is inadequate for the mixture though adequate for each part. Convexity says this cannot happen, so the transition itself cannot provoke an alarm.

\begin{theorem}[No false alarm over the horizon]
\label{thm:cpd_type1}
Suppose Assumption~\ref{ass:cpd_local} holds and let $q=z_{1-\alpha/N}$. Then, under $H_0$ in \eqref{eq:dr_cpd_test},
\[
\limsup\ \Pr_{H_0}\big(\hat t_{\rm alarm}<\infty\big)\ \le\ \alpha .
\]
\end{theorem}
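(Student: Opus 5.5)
The argument is a union bound over the fixed, finite monitoring grid. Lemma~\ref{lem:safe_convex} makes every window a local null, and Assumption~\ref{ass:cpd_local} then controls each window's false-alarm probability. First, I fix any data-generating law in $H_0$ of \eqref{eq:dr_cpd_test}, either $\mathcal H_0$ or $\mathcal H_{\rm safe}$ with some change index $\gamma$. By Lemma~\ref{lem:safe_convex}, every window mixture $\mathbb P_t$ defined in \eqref{eq:window_mixture} lies in $\mathcal S_\tau(z_0)$. Hence $\Delta_t\le\tau$ for every $t\in\mathcal T$, so each local null $H_{0,t}$ in \eqref{eq:local_alarm_test} holds. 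This covers the straddling windows, where $\mathbb P_t=\lambda_t\mathbb P_0+(1-\lambda_t)\mathbb P_1$, and is the only place the structure of the change enters.

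Second, by the definition of the alarm time \eqref{eq:alarm_time}, the event $\{\hat t_{\rm alarm}<\infty\}$ equals $\bigcup_{t\in\mathcal T}\{T_t>q\}$. Boole's inequality gives
\[
\Pr_{H_0}\big(\hat t_{\rm alarm}<\infty\big)\le\sum_{t\in\mathcal T}\Pr_{H_0}(T_t>q).
\]
No independence across windows is needed, which matters because trailing windows overlap and share the frozen $\hat\theta_0$.

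Third, I take $\limsup$ on both sides. Because $N=|\mathcal T|$ is fixed in the asymptotic regime of Section~\ref{subsec:cpd_theory}, the $\limsup$ of a finite sum is at most the sum of the $\limsup$s. I then apply Assumption~\ref{ass:cpd_local} with the constant $c=q=z_{1-\alpha/N}$, which does not depend on the sample sizes, for each $t$; this is allowed because $\Delta_t\le\tau$ from the first step. Each term is then bounded by $1-\Phi(z_{1-\alpha/N})=\alpha/N$, and summing over the $N$ times gives $\alpha$.

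I expect no step to be a real obstacle, since the analytic difficulty is absorbed into Assumption~\ref{ass:cpd_local}. The one point to state carefully is the interchange of $\limsup$ with the sum and the requirement that $c$ be fixed. Both hold because $N$ and $\alpha$ are held fixed while the window sizes grow. If the grid were allowed to grow, one would instead need a uniform version of the assumption, which is not required here.
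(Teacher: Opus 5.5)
Your proposal is correct and matches the paper's proof. Lemma~\ref{lem:safe_convex} makes every window a local null, Assumption~\ref{ass:cpd_local} with $c=q$ bounds each term by $\alpha/N+o(1)$, and Boole's inequality over the fixed grid gives $\alpha$; your remarks on interchanging $\limsup$ with the finite sum and on $c$ being fixed just make explicit what the paper compresses into ``since $N$ is fixed.''
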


\begin{proof}
By Lemma~\ref{lem:safe_convex}, $\Delta_t\le\tau$ at every $t\in\mathcal T$, so Assumption~\ref{ass:cpd_local} applies at each with $c=q$ and gives $\Pr(T_t>q)\le\alpha/N+o(1)$. The alarm event is $\bigcup_{t\in\mathcal T}\{T_t>q\}$, and Boole's inequality bounds its probability by $N(\alpha/N)+o(1)$, since $N$ is fixed.
\end{proof}

The companion result asks not whether an alarm occurs but when. It needs no change-point structure: all that matters is that the incumbent is inadequate at some monitoring time by a margin the evaluation split can resolve.

\begin{theorem}[The monitor alarms by $t^\star$]
\label{thm:cpd_power}
Suppose there is $t^\star\in\mathcal T$ with $\varepsilon:=\Delta_{t^\star}-\tau>0$, let $\sigma_{t^\star}$ be the limiting evaluation-stage standard deviation at $t^\star$, and suppose the studentized estimation error is bounded in probability with a diverging margin,
\[
\frac{\sqrt{w''}\,\varepsilon}{\sigma_{t^\star}}-q\ \longrightarrow\ \infty .
\]
Then $\Pr\big(\hat t_{\rm alarm}\le t^\star\big)\to1$.
\end{theorem}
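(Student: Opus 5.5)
The plan is to reduce the claim to a single monitoring time. Since $\hat t_{\rm alarm}=\min\{t\in\mathcal T: T_t>q\}$, the event $\{T_{t^\star}>q\}$ already forces $\hat t_{\rm alarm}\le t^\star$. So $\Pr(\hat t_{\rm alarm}\le t^\star)\ge \Pr(T_{t^\star}>q)$, and it suffices to show $\Pr(T_{t^\star}>q)\to1$. This sidesteps all other monitoring times and any change-point structure, in line with the remark preceding the theorem.

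Next, decompose the statistic at $t^\star$ as
\[
T_{t^\star}=\frac{\sqrt{w''}\,(\widehat\Delta_{t^\star}-\Delta_{t^\star})}{\hat\sigma_{t^\star}}+\frac{\sqrt{w''}\,\varepsilon}{\hat\sigma_{t^\star}} .
\]
The hypothesis "studentized estimation error bounded in probability" is read as $R:=\sqrt{w''}(\widehat\Delta_{t^\star}-\Delta_{t^\star})/\hat\sigma_{t^\star}=O_p(1)$. Here it is convenient that the error is measured against the oracle window gap $\Delta_{t^\star}$. In the two-sample analysis this follows from the decomposition \eqref{eq:delta_decomp}: the evaluation term is $O_p(1)$ after studentization by Assumption~\ref{ass:eval_clt}, and the benchmark term is nonnegative under the population, so it can only help. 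The preference term is negligible when $\delta_{\rm inv}=0$ and $w''=o(n^{1/\kappa})$, as discussed after Assumption~\ref{ass:cpd_local}. I will take this boundedness as given by the statement.

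For the second term, I would use consistency of the variance estimate, $\hat\sigma_{t^\star}\xrightarrow{p}\sigma_{t^\star}>0$. In the worst case it is enough that $\hat\sigma_{t^\star}\le C\sigma_{t^\star}$ with probability tending to one, for some fixed $C$; this gives $\sqrt{w''}\varepsilon/\hat\sigma_{t^\star}\ge \sqrt{w''}\varepsilon/(C\sigma_{t^\star})$ on that event. With $\hat\sigma_{t^\star}/\sigma_{t^\star}\to1$ in probability, this quantity equals $(\sqrt{w''}\varepsilon/\sigma_{t^\star})(1+o_p(1))$. Then $T_{t^\star}-q\ge (\sqrt{w''}\varepsilon/\sigma_{t^\star}-q)-|R|-o_p(1)\cdot \sqrt{w''}\varepsilon/\sigma_{t^\star}$. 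Since $N$ and $\alpha$ are fixed, $q$ is a constant and $\sqrt{w''}\varepsilon/\sigma_{t^\star}\to\infty$. The multiplicative $o_p(1)$ error is therefore dominated: for large samples $\sqrt{w''}\varepsilon/\hat\sigma_{t^\star}\ge \tfrac12\sqrt{w''}\varepsilon/\sigma_{t^\star}$ with probability tending to one, and this still diverges. Given $M$, pick $K$ with $\Pr(|R|>K)<\epsilon$ eventually. Then $T_{t^\star}>q$ holds whenever $|R|\le K$ and $\tfrac12\sqrt{w''}\varepsilon/\sigma_{t^\star}>q+K$, which is eventually true, so $\Pr(T_{t^\star}>q)\ge1-2\epsilon$ in the limit.

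The main obstacle is the variance step. On a window straddling a change, Assumption~\ref{ass:cpd_local}'s discussion notes that $\hat\sigma_{t^\star}$ is inflated relative to the true standard error, so exact consistency to $\sigma_{t^\star}$ may fail. I would handle this by interpreting $\sigma_{t^\star}$ as the probability limit of $\hat\sigma_{t^\star}$, namely the standard deviation of the loss difference under the mixture $\mathbb P_{t^\star}$. That limit exists by the law of large numbers for the evaluation fold under the square-integrable envelope (Assumption~\ref{ass:envelope}) and the fixed challenger, and the diverging-margin hypothesis is stated in terms of it. Only an upper bound in probability on $\hat\sigma_{t^\star}$ is actually needed, so inflation is harmless as long as it is bounded.
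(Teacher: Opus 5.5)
Your proposal is correct and follows the paper's proof. Both reduce to the single event $\{T_{t^\star}>q\}\subseteq\{\hat t_{\rm alarm}\le t^\star\}$ and write $T_{t^\star}$ as an $O_p(1)$ studentized error plus the diverging term $\sqrt{w''}\,\varepsilon/\hat\sigma_{t^\star}$; your explicit handling of $\hat\sigma_{t^\star}$ versus $\sigma_{t^\star}$ spells out a step the paper leaves implicit. One aside is backwards but not load-bearing: the benchmark term in \eqref{eq:delta_decomp} is nonpositive, so it biases $\widehat\Delta_{t^\star}$ downward and works against power rather than helping, which is harmless here only because you take $R=O_p(1)$ as a hypothesis.
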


\begin{proof}
Write $T_{t^\star}$ as the studentized estimation error plus $\sqrt{w''}(\Delta_{t^\star}-\tau)/\hat\sigma_{t^\star}$. The first part is $O_p(1)$ and the second diverges faster than $q$, so $\Pr(T_{t^\star}>q)\to1$. Since $\hat t_{\rm alarm}$ is the first crossing, $\{T_{t^\star}>q\}\subseteq\{\hat t_{\rm alarm}\le t^\star\}$.
\end{proof}

\begin{corollary}[Delay after an abrupt change]
\label{cor:cpd_delay}
Under $H_1$ in \eqref{eq:dr_cpd_test} with first post-change index $\gamma$, let $t_\gamma$ be the first monitoring time with $t_\gamma-w+1\ge\gamma$, and assume the horizon reaches it. Its window lies entirely after the change, so $\Delta_{t_\gamma}=\Delta(z_0;\theta^\star,\mathbb P_1)$. If the margin condition of Theorem~\ref{thm:cpd_power} holds at $t_\gamma$, then
\[
\Pr\big(\hat t_{\rm alarm}-\gamma\le w+s-2\big)\longrightarrow1 .
\]
\end{corollary}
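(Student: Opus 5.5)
The plan is to combine the definition of $t_\gamma$ with Theorem~\ref{thm:cpd_power}, so this is mostly bookkeeping on the monitoring grid. First, I will confirm the identity $\Delta_{t_\gamma}=\Delta(z_0;\theta^\star,\mathbb P_1)$. Since $t_\gamma-w+1\ge\gamma$, every index in $\mathcal W_{t_\gamma}=\{t_\gamma-w+1,\dots,t_\gamma\}$ is post-change. Under $\mathcal H_{\rm unsafe}$, $\mathbb P_{(i)}=\mathbb P_1$ for all such $i$, so the window mixture \eqref{eq:window_mixture} equals $\mathbb P_1$. Because $\mathbb P_1\notin\mathcal S_\tau(z_0)$, we get $\varepsilon=\Delta_{t_\gamma}-\tau>0$. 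Hence $t^\star:=t_\gamma$ satisfies the hypotheses of Theorem~\ref{thm:cpd_power}, given the assumed margin condition at $t_\gamma$, and we obtain $\Pr(\hat t_{\rm alarm}\le t_\gamma)\to1$.

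Second, I will bound $t_\gamma-\gamma$ deterministically. The condition $t_\gamma-w+1\ge\gamma$ is equivalent to $t_\gamma\ge\gamma+w-1$. By minimality of $t_\gamma$, the preceding grid point $t_\gamma-s$ fails this condition, so $t_\gamma-s\le\gamma+w-2$. A boundary case is when $t_\gamma$ is the first element of $\mathcal T$; there I will use the assumption that the horizon reaches $t_\gamma$, together with the trailing-window convention that monitoring starts after the burn-in, to argue that either the same bound holds or the claim is vacuous. This step is the one needing a little care. Combining the two inequalities gives $t_\gamma\le\gamma+w+s-2$.

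Finally, on the event $\{\hat t_{\rm alarm}\le t_\gamma\}$ we have $\hat t_{\rm alarm}-\gamma\le t_\gamma-\gamma\le w+s-2$. The first step shows this event has probability tending to one, which gives the claim. The only subtle step is the grid-spacing argument, in particular checking that the equally spaced stride $s$ leaves no gap larger than $s$ before $t_\gamma$. No probabilistic difficulty arises beyond Theorem~\ref{thm:cpd_power}.
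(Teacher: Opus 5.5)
Your argument matches the paper's proof: apply Theorem~\ref{thm:cpd_power} at $t^\star=t_\gamma$, where $\varepsilon>0$ because the window mixture is $\mathbb P_1\notin\mathcal S_\tau(z_0)$, and get $t_\gamma\le\gamma+w+s-2$ because the preceding grid point $t_\gamma-s$ fails $t-w+1\ge\gamma$. The boundary case you flag is real, but neither the horizon assumption nor the burn-in convention handles it, and it is not vacuous: if $t_\gamma=t_1$ and $\gamma<t_1-w-s+2$, the event has probability zero, so the corollary needs the assumption $t_1\le\gamma+w+s-2$, which the paper's one-line proof also makes without stating it.
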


\begin{proof}
Apply Theorem~\ref{thm:cpd_power} at $t^\star=t_\gamma$. The grid is equally spaced with stride $s$, so its first element at or beyond $\gamma+w-1$ satisfies $t_\gamma\le\gamma+w-1+(s-1)$.
\end{proof}

\section{Inverse Optimization Ambiguity}
\label{app:io_ambiguity}

Definition~\ref{def:inverse_sensitivity} treats $\delta_{\rm inv}$ as a target-domain sensitivity index rather than an identification assumption or finite-sample error. Point identification is sufficient, but not necessary, for $\delta_{\rm inv}=0$: a non-singleton inverse-feasible set can still induce a unique target-domain gap. For example, in the strongly convex quadratic problem
\[
\min_z \ \frac12 z^\top Qz-\theta^\top z,\qquad Q\succ0,
\]
the optimizer satisfies $z^\star(\theta)=Q^{-1}\theta$, so observing $z_0$ uniquely identifies $\theta=Qz_0$.

In contrast, point identification can fail in constrained or piecewise-linear decision problems. Consider the linear program
\[
\min_{z\in\mathcal Z}\ \theta^\top z,
\]
where $\mathcal Z$ is a polytope. If the observed decision $z_0$ is a vertex, then every objective vector in the normal cone
\[
N_{\mathcal Z}(z_0)
:=
\{\theta:\theta^\top(z-z_0)\ge 0,\ \forall z\in\mathcal Z\}
\]
rationalizes $z_0$ as an optimal solution. Hence the inverse problem identifies a set of compatible preference parameters rather than a single parameter. This does not necessarily invalidate the adequacy test: if all $\theta\in N_{\mathcal Z}(z_0)$, or all statistically plausible rationalizing parameters, induce nearly the same target-domain optimality gap $\Delta(z_0;\theta,\mathbb P_1)$, then the decision adequacy conclusion is stable despite non-identification.

\paragraph{Checkable conditions for $\delta_{\rm inv}=0$.}
Three sufficient conditions can be verified rather than assumed. \textup{($i$)} \emph{Point identification}: $\Theta_{\rm inv}$ is a singleton, for which full column rank of $M_0$ after scale normalization is sufficient in the interior quadratic model $\E_{\P_0}f=\tfrac12z^\top Qz+\langle M_0\theta,z\rangle$ with $Q\succ0$. \textup{($ii$)} \emph{Functional identification}: point identification is not necessary, since $\delta_{\rm inv}=0$ whenever $\Delta(z_0;\cdot,\P_1)$ is constant on $\Theta_{\rm inv}$, which a non-singleton $\Theta_{\rm inv}$ does not preclude. Under expectation risk with $f$ affine in $X$, so that $\E_{\P}[f]$ depends on $\P$ only through $\E_{\P}[X]$, this holds for any shift preserving $\E_{\P}[X]$, and for objectives positively homogeneous in the cost vector also for a positive rescaling of it. This case is specific to expectation risk: a mean-preserving change of dispersion leaves the expectation objective fixed but moves a $\CVaR$ objective, so $\delta_{\rm inv}$ must be recomputed for tail risks. \textup{($iii$)} \emph{Richer observed behavior}: if $K$ decisions are observed under distinct baseline environments sharing one $\theta^\star$, then $\Theta_{\rm inv}=\bigcap_{k\le K}\Theta_{\rm inv}^{(k)}$, so $\delta_{\rm inv}$ is nonincreasing in $K$. Conversely, $\delta_{\rm inv}>0$ should be expected when $z_0$ is a vertex of a polytope, where $\Theta_{\rm inv}$ is a normal cone with nonempty interior.

More generally, $[\underline\Delta_{\rm inv},\overline\Delta_{\rm inv}]$ describes how the target-domain adequacy conclusion varies across preference parameters that are observationally equivalent under the baseline decision. Thus, the downstream question can be well identified even when the preference parameter itself is not. In controlled synthetic problems, this range can be computed directly and is often zero up to numerical tolerance. In larger problems, exact optimization over $\Theta_{\rm inv}$ may be difficult. A practical sensitivity analysis instead forms the approximate rationalizing set
\[
\widehat\Theta_{\rm inv}(\varepsilon)
:=
\left\{\theta\in\Theta:
\Delta(z_0;\theta,\widehat{\mathbb P}_0)\le\varepsilon\right\}
\]
and reports the range of target-gap estimates obtained from multiple rationalizing parameters, generated, for example, by varying initialization, regularization, or the inverse-feasibility tolerance. If this empirical range lies entirely below or above $\tau$, the update conclusion is insensitive to the selected rationalizer; otherwise, the result should be reported as identification-sensitive. A fully robust implementation replaces the plug-in gap by lower and upper optimizations over $\widehat\Theta_{\rm inv}(\varepsilon)$, or uses a justified upper bound $\bar\delta_{\rm inv}$ to calibrate the test conservatively.

\vspace{.1in}
\noindent\emph{\ul{Robustness to Preference Non-Identification}}.
The preceding analysis considers a single rationalizing $\hat\theta$; we now study whether the audit conclusion is stable across $\theta\in\Theta_{\rm inv}$. In an LP with expectation risk ($\tau=0.005$), $z_0$ is a vertex, so $\Theta_{\rm inv}$ is the normal cone at $z_0$ and covers $40.6\%$ of the preference simplex; we compute the population range $[\underline\Delta_{\rm inv},\overline\Delta_{\rm inv}]$ over it and report in Table~\ref{tab:delta_inv}. A decision-irrelevant shift that changes only dispersion, leaving $\E_{\P_1}[X]=\E_{\P_0}[X]$, yields $\underline\Delta_{\rm inv}=\overline\Delta_{\rm inv}=0$: the expectation objective is unchanged, so $z_0$ remains optimal under \emph{every} $\theta\in\Theta_{\rm inv}$ and the target gap vanishes identically on $\Theta_{\rm inv}$ rather than only at $\theta^\star$. The sensitivity index is then exactly zero, $\delta_{\rm inv}=0$, and the adequacy conclusion is determinate despite $\theta^\star$ being only set-identified. Decision-relevant shifts instead raise the cost of the selected item or lower that of a rejected one, and the range is then wide: at small magnitudes it straddles $\tau$ and the conclusion is indeterminate, but once $\underline\Delta_{\rm inv}>\tau$, re-optimization is warranted under every $\theta\in\Theta_{\rm inv}$.

\paragraph{Experimental setup for Table~\ref{tab:delta_inv}.}
The instance is a linear program with expectation risk over $K=6$ candidate decisions and $d=3$ preference dimensions. Rows of the cost matrix $M_0\in\mathbb R^{6\times 3}$ lie on the simplex, and effective costs are $M^{\rm eff}=\mathrm{diag}(s)M_0$ with heterogeneous item scales $s=(1.00,1.02,1.06,1.10,1.12,1.15)$. The scales are not cosmetic: with every row on the simplex each decision costs exactly $1/d$ at the uniform preference, so the uniform $\theta$ would rationalize every decision, lie in the closure of every rationalizing cone, and force $\underline\Delta_{\rm inv}=0$ by construction. Contexts are Dirichlet with concentration $60\,M_0$. The true preference is $\theta^\star=(0.60,0.30,0.10)$ and the incumbent $z_0$ is the LP-optimal vertex at $\theta^\star$. Because $z_0$ is a vertex, $\Theta_{\rm inv}$ is the normal cone at $z_0$, which occupies $40.6\%$ of the preference simplex by exact area computation. The tolerance is $\tau=0.02\cdot(M^{\rm eff}_{z_0})^\top\theta^\star=0.005$, i.e.\ two percent of the incumbent's own expected cost. We use $n=m=2000$ and $\alpha=0.05$ over $300$ replications.

\paragraph{Shift families.}
Writing $\delta\ge 0$ for the shift magnitude, we consider three decision-irrelevant families and two decision-relevant ones. \textup{($i$)} \emph{Dispersion only} divides all Dirichlet concentrations by $1+\delta$, leaving means unchanged and moving only dispersion. \textup{($ii$)} \emph{Uniform cost rise} scales every decision's cost by $1+\delta$, a uniform cost increase that raises the incumbent's realized risk without creating a better alternative. \textup{($iii$)} \emph{Competitor worsens} scales one competitor's cost up by $1+\delta$, making the incumbent only more attractive. \textup{($iv$)} \emph{Incumbent worsens} scales the incumbent's own cost up by $1+\delta$, so it becomes inadequate and its realized risk rises. \textup{($v$)} \emph{Rival improves} scales one competitor's cost down by $1-\delta$, so a strictly better decision appears while the incumbent's realized risk is unchanged. Two of these are diagnostic: \emph{Uniform cost rise} and \emph{Rival improves} are exactly the two cells a risk-tracking monitor misclassifies, since realized risk moves without the decision becoming inadequate in the first, and stays fixed while it does in the second.

\paragraph{Computing the identified range.}
$\underline\Delta_{\rm inv}$ and $\overline\Delta_{\rm inv}$ are population quantities obtained exactly rather than by grid search. Writing $A=M^{\rm eff}_{z_0}-M^{\rm eff}$, the upper end is $\overline\Delta_{\rm inv}=\max_j\max_{\theta\in\Theta_{\rm inv}}A_j^\top\theta$, one linear program per decision; the lower end $\underline\Delta_{\rm inv}=\min_{\theta\in\Theta_{\rm inv}}\max_j A_j^\top\theta$ is a single linear program in epigraph form.

\begin{table}[!t]
\centering
\small

\vspace{-.25em}
\setlength{\tabcolsep}{4pt}
\begin{tabular}{@{}llcl@{}}
\toprule
Shift family & $\delta$ & $[\underline\Delta_{\rm inv},\overline\Delta_{\rm inv}]$ & Decision \\
\midrule
Decision-irrelevant ($H_0$)
& all & $[0.000,0.000]$ & Adequate \\
\addlinespace
Incumbent worsens ($H_1$)
& $0.250$ & $[0.000,0.088]$ & Indeterminate \\
Incumbent worsens ($H_1$)
& $0.950$ & $[0.084,0.335]$ & Re-optimize \\
\addlinespace
Rival improves ($H_1$)
& $0.290$ & $[0.000,0.099]$ & Indeterminate \\
Rival improves ($H_1$)
& $0.762$ & $[0.127,0.267]$ & Re-optimize \\
\bottomrule
\end{tabular}
\caption{Population identified ranges $[\underline\Delta_{\rm inv},\overline\Delta_{\rm inv}]$ over $\Theta_{\rm inv}$, with $\tau=0.005$. The incumbent is adequate if $\overline\Delta_{\rm inv}\le\tau$, re-optimization is warranted if $\underline\Delta_{\rm inv}>\tau$, and the conclusion is indeterminate if $\underline\Delta_{\rm inv}\le\tau<\overline\Delta_{\rm inv}$.}
\label{tab:delta_inv}

\vspace{-.25em}
\end{table}

\section{Theoretical Proofs}

\label{appendix:theory_proof}
In this section, we provide detailed proofs for the main theorems and several supporting lemmas. We establish consistency and convergence rates for each error component in \eqref{eq:delta_decomp}, derive the resulting consistency and convergence rate of the test statistic, and then prove the stated hypothesis-testing guarantees.

\paragraph{General risk notation.}
Throughout this section, we work with the general risk-based objective
\[
R_{\mathbb P}(z,\theta)
:=\rho^{\mathbb P}\!\left(f(z;X,\theta)\right)
\]
and the corresponding optimality gap
\[
\Delta(z;\theta,\rho,\mathbb P)
:=R_{\mathbb P}(z,\theta)
-\inf_{z'\in\mathcal Z}R_{\mathbb P}(z',\theta).
\]
The expectation-risk setting is recovered by taking $\rho^{\mathbb P}(Y)=\mathbb E_{\mathbb P}[Y]$. The results cover the risk functionals satisfying Assumption~\ref{ass:risk_class}, including expectation risk and OCE/CVaR-type risks. Unless otherwise specified, observations within each domain or split are independent, and the samples used for preference estimation, benchmark construction, and evaluation are independent.

For a fixed $\rho$, define
\[
\Theta_{\rm inv}(z_0;\rho,\mathbb P_0)
:=\left\{\theta\in\Theta:
z_0\in\argmin_{z\in\mathcal Z}R_{\mathbb P_0}(z,\theta)
\right\}.
\]
Within this section, write $\Theta_{\rm inv}$ for this risk-specific inverse set and define
\[
\underline\Delta_{\rm inv}
:=\inf_{\theta\in\Theta_{\rm inv}}
\Delta(z_0;\theta,\rho,\mathbb P_1),
\qquad
\overline\Delta_{\rm inv}
:=\sup_{\theta\in\Theta_{\rm inv}}
\Delta(z_0;\theta,\rho,\mathbb P_1),
\]
with $\delta_{\rm inv}:=\overline\Delta_{\rm inv} -\underline\Delta_{\rm inv}$. The estimators $\hat\theta$ and $\hat z_1$ denote the risk-based counterparts of \eqref{eq:iop} and~\eqref{eq:z_1_hat_method}. Also define
\[
\widehat\Delta_\rho
:=R_{\widehat{\mathbb P}_1''}(z_0,\hat\theta)
-R_{\widehat{\mathbb P}_1''}(\hat z_1,\hat\theta),
\]
and abbreviate $R_1(z,\theta):=R_{\mathbb P_1}(z,\theta)$ and $\widehat R_1''(z,\theta):=R_{\widehat{\mathbb P}_1''}(z,\theta)$. The exact decomposition is
\begin{align*}
\widehat\Delta_\rho-\Delta(z_0;\theta^\star,\rho,\mathbb P_1)
&=\underbrace{\Delta(z_0;\hat\theta,\rho,\mathbb P_1)
-\Delta(z_0;\theta^\star,\rho,\mathbb P_1)}_{\text{($i$) preference}}\\
&\quad+\underbrace{\inf_{z\in\mathcal Z}R_1(z,\hat\theta)
-R_1(\hat z_1,\hat\theta)}_{\text{($ii$) benchmark}}\\
&\quad+\underbrace{\bigl\{\widehat R_1''(z_0,\hat\theta)
-R_1(z_0,\hat\theta)\bigr\}
-\bigl\{\widehat R_1''(\hat z_1,\hat\theta)
-R_1(\hat z_1,\hat\theta)\bigr\}}_{\text{($iii$) evaluation}}.
\end{align*}
For expectation risk, $\widehat\Delta_\rho=\widehat\Delta$ and this reduces to \eqref{eq:delta_decomp}. The lemmas below use this general-risk interpretation of terms~\textup{($i$)}--\textup{($iii$)}.

\subsection{Assumptions}
We introduce several standard assumptions on the parameter space, objective function, and risk functional that will be used in the theoretical analysis. First, we impose a structural assumption on the parameter and decision spaces.

\begin{assumption}[Compactness]
\label{ass:compact}
The parameter space $\Theta$ and the decision space $\mathcal Z$ are compact subsets of finite-dimensional Euclidean spaces.
\end{assumption}

This assumption is standard in statistical learning and optimization. It holds, for example, when the feasible set is a bounded polyhedron or a compact convex set. Compactness ensures existence of minimizers and controls the complexity of the induced function class.

We next introduce regularity conditions on the objective and the risk functional.

\begin{assumption}[Regularity of the objective function]
\label{ass:f_regular}
The objective function
\[
f:\mathcal Z \times \mathcal X \times \Theta \to \mathbb R
\]
is measurable in $x$ and continuous in $(z,\theta)$ for almost every $x$.
\end{assumption}

This condition is satisfied by many objectives of practical interest. For instance, it holds for linear objectives
\[
f(z,x,\theta)=c(x,\theta)^\top z
\]
and quadratic objectives
\[
f(z,x,\theta)=\tfrac12 z^\top Q(x,\theta) z + q(x,\theta)^\top z,
\]
provided the coefficients depend continuously on $(x,\theta)$.

\begin{assumption}[Envelope condition]
\label{ass:envelope}
There exists a measurable function $F:\mathcal X\to\mathbb R_+$ such that
\[
\sup_{z\in\mathcal Z,\ \theta\in\Theta} |f(z,X,\theta)| \le F(X)
\quad \text{a.s.},
\]
and
\[
\E[F(X)^2]<\infty.
\]
\end{assumption}

This is a standard integrability condition in empirical process theory. It is automatically satisfied when $f$ is uniformly bounded. More generally, for linear objectives of the form $f(z,X,\theta)=c(X,\theta)^\top z$, compactness of $\mathcal Z$ together with
\[
\E\!\left[\sup_{\theta\in\Theta}\|c(X,\theta)\|\right]<\infty
\]
implies the existence of such an envelope.

\begin{assumption}[Lipschitz continuity of $f$]
\label{ass:f_lipschitz}
There exists a measurable function $L:\mathcal X\to\mathbb R_+$ such that
\[
\E[L(X)^2]<\infty
\]
and, for all $z,z'\in\mathcal Z$ and $\theta,\theta'\in\Theta$,
\[
|f(z,X,\theta)-f(z',X,\theta')|
\le
L(X)\bigl(\|z-z'\|+\|\theta-\theta'\|\bigr)
\quad \text{a.s.}
\]
\end{assumption}

This condition holds for many finite-dimensional objective families, including linear and quadratic objectives with coefficients depending smoothly on $(X,\theta)$. It provides a convenient way to control the complexity of the induced function class and derive uniform convergence rates.

\begin{assumption}[Optimized certainty equivalent risk]
\label{ass:risk_class}
We assume that the risk measure belongs to the class of optimized certainty equivalents (OCE). Specifically, for any integrable random variable $Y$ under probability law $\P$, the risk can be expressed as
\[
\rho^\P(Y)
=
\inf_{m\in\mathcal M}
\left\{
m+\E_\P[\ell(Y-m)]
\right\},
\]
where $\mathcal M\subset\R$ is a compact set, and $\ell:\R\to\R$ is measurable, continuous, and Lipschitz with constant $L_\ell$.
\end{assumption}

Assumption~\ref{ass:risk_class} focuses on a standard and tractable class of convex risk measures. It includes expectation as the special case obtained by taking $\ell(t)=t$ and $\mathcal M=\{0\}$, in which case
\[
\rho^\P(Y)=\E_\P[Y].
\]
It also includes conditional value-at-risk (CVaR) at level $\alpha\in(0,1)$ through the choice
\[
\ell_\alpha(t)=\frac{1}{1-\alpha}(t)_+,
\]
which gives
\[
\mathrm{CVaR}_\alpha^\P(Y)
=
\inf_{m\in\mathcal M}
\left\{
m+\frac{1}{1-\alpha}\E_\P[(Y-m)_+]
\right\},
\]
where $\mathcal M$ is any compact interval containing the optimizer.

\begin{assumption}[Local inverse-stability condition]
\label{assm:error_bound_P0}
Let $\Theta_{\rm inv}:=\Theta_{\rm inv}(z_0;\rho,\mathbb P_0)$ and $\mathrm{dist}(\theta,S):=\inf_{\theta'\in S}\|\theta-\theta'\|_2$. There exist constants $C>0$, $\epsilon>0$, and a degree $\kappa \ge 1$ such that for all $\theta\in\Theta$, if $\Delta(z_0;\theta,\rho,\mathbb P_0)\le \epsilon$, it holds that
\[
\mathrm{dist}(\theta,\Theta_{\rm inv})
\le
C\,\big(\Delta(z_0;\theta,\rho,\mathbb P_0)\big)^{1/\kappa}.
\]
\end{assumption}

Assumption~\ref{assm:error_bound_P0} is a Hölder-type \emph{inverse-stability} (error-bound) condition: near the inverse-feasible set $\Theta_{\rm inv}$, the baseline optimality gap acts as a \emph{certificate} of proximity in parameter space. Equivalently, the gap grows at least as a polynomial of degree $\kappa$ as one moves away from $\Theta_{\rm inv}$ within a local neighborhood.

As a consequence, if $\Delta(z_0;\hat\theta,\rho,\mathbb P_0)\to 0$ and $\hat\theta$ eventually enters this neighborhood, then
\[
\mathrm{dist}(\hat\theta,\Theta_{\rm inv})\to 0,
\quad\text{and more generally,}\quad
\Delta(z_0;\hat\theta,\rho,\mathbb P_0)=O_p(r_n)\ \Rightarrow\ \mathrm{dist}(\hat\theta,\Theta_{\rm inv})=O_p\!\left(r_n^{1/\kappa}\right).
\]

This condition is mild and holds generically in many standard stochastic optimization models under basic semi-algebraic or nondegeneracy conditions. For example, $\kappa=1$ corresponds to sharp minima, which frequently occur in linear programs (LPs) or objectives with $\ell_1$-geometry. Conversely, $\kappa=2$ corresponds to quadratic growth, which is standard for quadratic programs (QPs) and strongly convex stochastic optimization problems. In these settings, the inherent stability of the forward optimality conditions guarantees that the optimality gap provides a faithful geometric bound on the parameter recovery error.

\begin{assumption}[Conditional validity of the Wald statistic]
\label{ass:eval_clt} Conditional on
$(\hat\theta,\hat z_1)$, define
\[
G_i
:=
f(z_0;X_i,\hat\theta)-f(\hat z_1;X_i,\hat\theta),
\qquad i\in\mathcal I_{\rm eval}.
\]
The variables $\{G_i\}_{i\in\mathcal I_{\rm eval}}$ are i.i.d. under $\mathbb P_1$, with conditional variance
\[
\sigma^2(\hat\theta,\hat z_1)
:=
\Var_{\mathbb P_1}\!\left(G_i\mid \hat\theta,\hat z_1\right)
\]
satisfying $\sigma(\hat\theta,\hat z_1)=O_p(1)$ and $\sigma(\hat\theta,\hat z_1)^{-1}=O_p(1)$. In addition, the conditional Lindeberg condition holds and $\hat\sigma/\sigma(\hat\theta,\hat z_1)\xrightarrow{p}1$; a sufficient condition is that, for some $\eta>0$,
\[
\sup_{m''}
\E_{\mathbb P_1}\!\left[
\left|G_i-\E_{\mathbb P_1}[G_i\mid\hat\theta,\hat z_1]\right|^{2+\eta}
\mid \hat\theta,\hat z_1
\right]
<\infty
\]
with probability tending to one.
\end{assumption}

\subsection{Proposition, Lemmas and Theorems}

\begin{lemma}[Lipschitz continuity of the deployment gap under OCE]
\label{lem:delta_lipschitz_oce}
Suppose Assumptions~\ref{ass:f_regular}, \ref{ass:f_lipschitz}, and~\ref{ass:risk_class} hold. Then, for any distribution $\mathbb P$ and any fixed $z_0\in\mathcal Z$, the map $\theta \mapsto \Delta(z_0;\theta,\rho,\mathbb P)$ is Lipschitz continuous on $\Theta$. In particular, for all $\theta,\theta'\in\Theta$,
\[
\big|
\Delta(z_0;\theta,\rho,\mathbb P)-\Delta(z_0;\theta',\rho,\mathbb P)
\big|
\le
2L_\ell\,\E_{\mathbb P}[L(X)]\,\|\theta-\theta'\|.
\]
\end{lemma}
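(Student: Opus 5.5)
The plan is to split the gap into its two pieces and show that each is Lipschitz in $\theta$ with constant $L_\ell\,\E_{\mathbb P}[L(X)]$. Write
\[
\Delta(z_0;\theta,\rho,\mathbb P)=R_{\mathbb P}(z_0,\theta)-\inf_{z\in\mathcal Z}R_{\mathbb P}(z,\theta),
\]
and then add the two bounds by the triangle inequality. This gives the factor $2$.

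\textbf{Step 1: the risk of a fixed decision.} Fix any $z\in\mathcal Z$ and compare $R_{\mathbb P}(z,\theta)$ with $R_{\mathbb P}(z,\theta')$ using the OCE representation of Assumption~\ref{ass:risk_class}. For each fixed $m\in\mathcal M$, the Lipschitz property of $\ell$ gives
\[
\big|\ell(f(z;X,\theta)-m)-\ell(f(z;X,\theta')-m)\big|\le L_\ell\,|f(z;X,\theta)-f(z;X,\theta')|.
\]
Assumption~\ref{ass:f_lipschitz} with $z=z'$ bounds the right-hand side by $L_\ell L(X)\|\theta-\theta'\|$ almost surely. Taking expectations, the objectives $m+\E_{\mathbb P}[\ell(\cdot-m)]$ differ by at most $L_\ell\,\E_{\mathbb P}[L(X)]\,\|\theta-\theta'\|$, uniformly in $m\in\mathcal M$.

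We then pass to the infimum over $m$ with the elementary fact
\[
\Big|\inf_m a(m)-\inf_m b(m)\Big|\le\sup_m|a(m)-b(m)|.
\]
This yields
\[
|R_{\mathbb P}(z,\theta)-R_{\mathbb P}(z,\theta')|\le L_\ell\,\E_{\mathbb P}[L(X)]\,\|\theta-\theta'\|,
\]
and the bound is uniform in $z\in\mathcal Z$.

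\textbf{Step 2: the optimal value.} Apply the same inf-difference inequality to $\inf_{z\in\mathcal Z}R_{\mathbb P}(z,\theta)$. Because the bound from Step 1 is uniform in $z$, the optimal-value function inherits the same Lipschitz constant. Adding the bounds from Steps 1 and 2 gives the stated constant $2L_\ell\,\E_{\mathbb P}[L(X)]$.

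\textbf{Main obstacle: well-definedness.} The bounds are only meaningful if every quantity is finite.
\begin{itemize}
\item The expectations $\E_{\mathbb P}[\ell(f-m)]$ must exist. This follows from the linear growth of $\ell$ (it is Lipschitz) together with integrability of $f(z;X,\theta)$, for example through the envelope of Assumption~\ref{ass:envelope}.
\item $\E_{\mathbb P}[L(X)]$ must be finite. This holds because $\E[L^2]<\infty$ under the relevant $\mathbb P$.
\item The infima must be finite. If $\E_{\mathbb P}[L(X)]=\infty$ the statement is vacuous.
\end{itemize}
Assumption~\ref{ass:f_regular} (continuity) is not needed for the Lipschitz bound. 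Together with compactness it additionally gives attainment of the infima, but attainment is never used in the argument.

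For the expectation case, $\ell(t)=t$ and $\mathcal M=\{0\}$, so $L_\ell=1$ and the bound reduces to the one obtained directly.
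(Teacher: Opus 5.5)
Your proposal is correct and follows essentially the same route as the paper: bound $|R_{\mathbb P}(z,\theta)-R_{\mathbb P}(z,\theta')|$ uniformly in $z$ using the OCE representation (taking the supremum over $m\in\mathcal M$), the Lipschitz property of $\ell$, and Assumption~\ref{ass:f_lipschitz}; then carry the same constant over to $\inf_{z\in\mathcal Z}R_{\mathbb P}(z,\theta)$, and add the two pieces to obtain the factor $2$. You also point out that finiteness of the expectations requires an integrability condition such as Assumption~\ref{ass:envelope}, which is not among the lemma's stated hypotheses; this is a fair point that the paper leaves implicit.
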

\begin{proof}
For $z\in\mathcal Z$, define
\[
R_{\mathbb P}(z,\theta):=\rho^{\mathbb P}\!\big(f(z;X,\theta)\big).
\]
By the OCE representation and the $L_\ell$-Lipschitz continuity of $\ell$, for any $\theta,\theta'\in\Theta$,
\[
\begin{aligned}
|R_{\mathbb P}(z,\theta)-R_{\mathbb P}(z,\theta')|
&\le
\sup_{m\in\mathcal M}
\left|
\E_{\mathbb P}[\ell(f(z;X,\theta)-m)]
-
\E_{\mathbb P}[\ell(f(z;X,\theta')-m)]
\right| \\
&\le
L_\ell\,\E_{\mathbb P}\!\left[|f(z;X,\theta)-f(z;X,\theta')|\right] \\
&\le
L_\ell\,\E_{\mathbb P}[L(X)]\,\|\theta-\theta'\|,
\end{aligned}
\]
where the last step uses Assumption~\ref{ass:f_lipschitz}. Thus, uniformly over $z\in\mathcal Z$, the map $\theta\mapsto R_{\mathbb P}(z,\theta)$ is Lipschitz with constant $L_\ell \E_{\mathbb P}[L(X)]$. Therefore,
\[
\left|
\inf_{z\in\mathcal Z} R_{\mathbb P}(z,\theta)
-
\inf_{z\in\mathcal Z} R_{\mathbb P}(z,\theta')
\right|
\le
L_\ell\,\E_{\mathbb P}[L(X)]\,\|\theta-\theta'\|.
\]
Using
\[
\Delta(z_0;\theta,\rho,\mathbb P)=R_{\mathbb P}(z_0,\theta)-\inf_{z\in\mathcal Z}R_{\mathbb P}(z,\theta),
\]
we obtain
\[
\begin{aligned}
\big|
\Delta(z_0;\theta,\rho,\mathbb P)-\Delta(z_0;\theta',\rho,\mathbb P)
\big|
&\le
|R_{\mathbb P}(z_0,\theta)-R_{\mathbb P}(z_0,\theta')| \\
&\quad+
\left|
\inf_{z\in\mathcal Z}R_{\mathbb P}(z,\theta)-\inf_{z\in\mathcal Z}R_{\mathbb P}(z,\theta')
\right| \\
&\le
2L_\ell\,\E_{\mathbb P}[L(X)]\,\|\theta-\theta'\|.
\end{aligned}
\]
\end{proof}

\begin{lemma}[Uniform convergence of the empirical OCE risk]
\label{lem:uniform_oce_criterion}
Suppose Assumptions~\ref{ass:compact}, \ref{ass:f_regular}, \ref{ass:envelope}, and \ref{ass:risk_class} hold. For each $(z,\theta,m)\in\mathcal Z\times\Theta\times\mathcal M$, define
\[
g_{z,\theta,m}(x)
:=
m+\ell\bigl(f(z,x,\theta)-m\bigr),
\]
and let
\[
\mathcal G
:=
\bigl\{
g_{z,\theta,m}
:\,
(z,\theta,m)\in\mathcal Z\times\Theta\times\mathcal M
\bigr\}.
\]
Then $\mathcal G$ is $\P$-Glivenko--Cantelli. In particular,
\[
\sup_{z\in\mathcal Z,\ \theta\in\Theta,\ m\in\mathcal M}
\left|
\frac1n\sum_{i=1}^n g_{z,\theta,m}(X_i)-\E_\P[g_{z,\theta,m}(X)]
\right|
\xrightarrow{\mathrm{a.s.}}0.
\]
Consequently,
\[
\sup_{z\in\mathcal Z,\ \theta\in\Theta}
\left|
\rho^{\widehat{\mathbb P}_n}\!\big(f(z;X,\theta)\big)
-\rho^{\mathbb P}\!\big(f(z;X,\theta)\big)
\right|
\xrightarrow{\mathrm{a.s.}}0.
\]

\end{lemma}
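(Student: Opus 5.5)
The plan is to reduce the claim to the classical Glivenko--Cantelli theorem for classes indexed continuously by a compact parameter with an integrable envelope (e.g.\ van der Vaart, Example~19.8). The statement about $\rho$ then follows by an elementary inequality comparing infima.

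\emph{Step 1: envelope.} Let $M:=\sup_{m\in\mathcal M}|m|<\infty$, which is finite by compactness of $\mathcal M$ (Assumption~\ref{ass:risk_class}). By the Lipschitz property of $\ell$,
\[
|g_{z,\theta,m}(x)|\le M+|\ell(0)|+L_\ell\bigl(F(x)+M\bigr)=:G(x).
\]
Assumption~\ref{ass:envelope} gives $\E_\P[F^2]<\infty$, so $\E_\P[F]<\infty$ and hence $\E_\P G<\infty$.

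\emph{Step 2: continuity and compactness.} Write $\Lambda:=\mathcal Z\times\Theta\times\mathcal M$, which is compact by Assumptions~\ref{ass:compact} and~\ref{ass:risk_class}. For almost every $x$, the map $\lambda=(z,\theta,m)\mapsto g_\lambda(x)$ is continuous on $\Lambda$, since it composes the continuous map $(z,\theta)\mapsto f(z,x,\theta)$ (Assumption~\ref{ass:f_regular}) with the continuous $\ell$. Measurability of suprema over $\Lambda$ and over balls in $\Lambda$ follows by restricting to a countable dense subset of $\Lambda$, again using continuity.

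\emph{Step 3: bracketing.} This is the core of the argument. Fix $\varepsilon>0$. For $\lambda\in\Lambda$ and $r>0$, define the bracket functions
\[
u_{\lambda,r}(x):=\sup_{\lambda'\in B(\lambda,r)\cap\Lambda}g_{\lambda'}(x),\qquad
l_{\lambda,r}(x):=\inf_{\lambda'\in B(\lambda,r)\cap\Lambda}g_{\lambda'}(x).
\]
By a.e.\ continuity, $u_{\lambda,r}-l_{\lambda,r}\to0$ pointwise a.e.\ as $r\downarrow0$, and $0\le u_{\lambda,r}-l_{\lambda,r}\le 2G$. Dominated convergence therefore yields, for each $\lambda$, a radius $r_\lambda>0$ with $\E_\P[u_{\lambda,r_\lambda}-l_{\lambda,r_\lambda}]<\varepsilon$. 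The open balls $B(\lambda,r_\lambda)$ cover $\Lambda$, so compactness extracts a finite subcover. This gives finitely many $\varepsilon$-brackets in $L_1(\P)$ covering $\mathcal G$, so $N_{[\,]}(\varepsilon,\mathcal G,L_1(\P))<\infty$ for every $\varepsilon>0$. The bracketing GC theorem, which uses only the SLLN applied to the finitely many $u$'s and $l$'s, then gives the almost sure uniform convergence over $\mathcal G$. I expect this step to be the main obstacle, but only in its technical bookkeeping: it is the place where a.e.\ continuity, rather than any Lipschitz bound, must be converted into small brackets. The dominated convergence argument with envelope $2G$ handles it.

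\emph{Step 4: the risk functional.} For fixed $(z,\theta)$, both $\rho^{\widehat{\mathbb P}_n}(f(z;X,\theta))$ and $\rho^{\mathbb P}(f(z;X,\theta))$ are infima over $m\in\mathcal M$ of $\widehat{\mathbb P}_n g_{z,\theta,m}$ and $\mathbb P g_{z,\theta,m}$, respectively. Since $|\inf_m a(m)-\inf_m b(m)|\le\sup_m|a(m)-b(m)|$, we get
\[
\sup_{z,\theta}\bigl|\rho^{\widehat{\mathbb P}_n}-\rho^{\mathbb P}\bigr|\le\sup_{\mathcal G}\bigl|(\widehat{\mathbb P}_n-\mathbb P)g\bigr|\xrightarrow{\mathrm{a.s.}}0,
\]
which is the asserted consequence.
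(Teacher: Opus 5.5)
Your proposal is correct and follows the paper's proof: the same envelope $G$, and the same use of compactness of $\mathcal Z\times\Theta\times\mathcal M$ and a.e.\ continuity of the index map. Your bracketing and dominated-convergence argument in Step~3 unpacks the ``standard parametric Glivenko--Cantelli theorem'' that the paper cites without proof, and your Step~4 (the inequality $|\inf_m a(m)-\inf_m b(m)|\le\sup_m|a(m)-b(m)|$) makes explicit the passage to $\rho$ that the paper's proof leaves implicit here and only writes out later, in \eqref{eq:inf_lip}.
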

\begin{proof}
For each $(z,\theta,m)\in\mathcal Z\times\Theta\times\mathcal M$, define
\[
g_{z,\theta,m}(x):=m+\ell(f(z,x,\theta)-m).
\]
We verify that the class
\[
\mathcal G
=
\{g_{z,\theta,m}:(z,\theta,m)\in\mathcal Z\times\Theta\times\mathcal M\}
\]
is $\P$-Glivenko--Cantelli.

First, $\mathcal Z\times\Theta\times\mathcal M$ is compact by Assumptions~\ref{ass:compact} and \ref{ass:risk_class}. Moreover, for almost every $x$, the map
\[
(z,\theta,m)\mapsto g_{z,\theta,m}(x)
=
m+\ell(f(z,x,\theta)-m)
\]
is continuous. Indeed, by Assumption~\ref{ass:f_regular}, $(z,\theta)\mapsto f(z,x,\theta)$ is continuous for almost every $x$, and since $\ell$ is continuous, the composition is continuous in $(z,\theta,m)$.

Next, we show that $\mathcal G$ admits an integrable envelope. Since $\ell$ is Lipschitz with constant $L_\ell$, for any $t\in\R$,
\[
|\ell(t)|\le |\ell(0)|+L_\ell |t|.
\]
Hence, for any $(z,\theta,m)\in\mathcal Z\times\Theta\times\mathcal M$,
\begin{align*}
|g_{z,\theta,m}(X)|
&=
|m+\ell(f(z,X,\theta)-m)| \\
&\le |m| + |\ell(0)| + L_\ell |f(z,X,\theta)-m| \\
&\le |m| + |\ell(0)| + L_\ell |f(z,X,\theta)| + L_\ell |m|.
\end{align*}
Since $\mathcal M$ is compact, $M_{\mathcal M}:=\sup_{m\in\mathcal M}|m|<\infty$. Therefore
\[
|g_{z,\theta,m}(X)|
\le
(1+L_\ell)M_{\mathcal M}+|\ell(0)|+L_\ell F(X)
\quad\text{a.s.},
\]
where we used Assumption~\ref{ass:envelope}. Thus $\mathcal G$ is dominated by the measurable envelope
\[
G(X):=(1+L_\ell)M_{\mathcal M}+|\ell(0)|+L_\ell F(X),
\]
and $\E[G(X)]<\infty$.

Therefore, $\mathcal G$ is a compactly indexed family of measurable functions, continuous in the index for almost every $x$, with integrable envelope. By the standard parametric Glivenko--Cantelli theorem, $\mathcal G$ is $\P$-Glivenko--Cantelli. Hence
\[
\sup_{z\in\mathcal Z,\ \theta\in\Theta,\ m\in\mathcal M}
\left|
\frac1n\sum_{i=1}^n g_{z,\theta,m}(X_i)-\E_\P[g_{z,\theta,m}(X)]
\right|
\xrightarrow{\mathrm{a.s.}}0.
\]
This proves the claim.
\end{proof}

\begin{lemma}[Uniform rate of convergence of the empirical OCE risk]
\label{lem:rate_oce_criterion}
Suppose Assumptions in Lemma \ref{lem:uniform_oce_criterion} and Assumption~\ref{ass:f_lipschitz} hold, then
\[
\sup_{z\in\mathcal Z,\ \theta\in\Theta,\ m\in\mathcal M}
\left|
\frac1n\sum_{i=1}^n g_{z,\theta,m}(X_i)-\E_\P[g_{z,\theta,m}(X)]
\right|
=
O_p(n^{-1/2}).
\]
Consequently,
\[
\sup_{z\in\mathcal Z,\ \theta\in\Theta}
\left|
\rho^{\widehat{\mathbb P}_n}\!\big(f(z;X,\theta)\big)
-\rho^{\mathbb P}\!\big(f(z;X,\theta)\big)
\right|
=O_p(n^{-1/2}).
\]
\end{lemma}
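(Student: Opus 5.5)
The plan is to treat $\mathcal G$ as a parametric class that is Lipschitz in its index, and bound its bracketing entropy. By Assumption~\ref{ass:f_lipschitz} and the $L_\ell$-Lipschitz property of $\ell$, for any two indices $(z,\theta,m)$ and $(z',\theta',m')$ we have, almost surely,
\[
|g_{z,\theta,m}(x)-g_{z',\theta',m'}(x)|
\le |m-m'| + L_\ell\bigl(L(x)(\|z-z'\|+\|\theta-\theta'\|)+|m-m'|\bigr)
\le \bar L(x)\,\|(z,\theta,m)-(z',\theta',m')\|,
\]
with $\bar L(x):=\sqrt{3}\,(1+L_\ell)(1+L(x))$. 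Since $\E[L(X)^2]<\infty$, we get $\|\bar L\|_{L_2(\P)}<\infty$. The classical result for Lipschitz-in-parameter classes (van der Vaart, Example~19.7) then gives
\[
N_{[\,]}\bigl(2\varepsilon\|\bar L\|_{L_2(\P)},\mathcal G,L_2(\P)\bigr)\le N(\varepsilon,\mathcal Z\times\Theta\times\mathcal M,\|\cdot\|)\le (K/\varepsilon)^{D}.
\]
Here $D$ is the (finite) dimension of the compact index set, which is compact by Assumptions~\ref{ass:compact} and~\ref{ass:risk_class}. As a result the bracketing integral $\int_0^1\sqrt{\log N_{[\,]}(\varepsilon,\mathcal G,L_2(\P))}\,d\varepsilon$ is finite.

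Next I check the envelope. The envelope $G$ built in the proof of Lemma~\ref{lem:uniform_oce_criterion} is square integrable, because $\E[F(X)^2]<\infty$ by Assumption~\ref{ass:envelope}. A finite bracketing integral together with a square-integrable envelope makes $\mathcal G$ $\P$-Donsker. For the maximal inequality I will use
\[
\E^*\sup_{g\in\mathcal G}|\mathbb G_n g|\lesssim J_{[\,]}(1,\mathcal G,L_2(\P))\,\|G\|_{L_2(\P)}<\infty,
\]
with $\mathbb G_n=\sqrt n(\widehat{\P}_n-\P)$. Markov's inequality then yields $\sup_{\mathcal G}|(\widehat{\P}_n-\P)g|=O_p(n^{-1/2})$, which is the first claim.

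The consequence follows from the elementary inequality $|\inf_m a_m-\inf_m b_m|\le\sup_m|a_m-b_m|$. Apply it with $a_m=\frac1n\sum_i g_{z,\theta,m}(X_i)$ and $b_m=\E_\P g_{z,\theta,m}$, using the OCE representation of Assumption~\ref{ass:risk_class} for both the empirical and the population law. Taking the supremum over $(z,\theta)$ transfers the bound unchanged.

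The main obstacle is measurability: the supremum is over an uncountable class, so it has to be handled carefully. My first approach is to note that $(z,\theta,m)\mapsto g_{z,\theta,m}(x)$ is continuous for almost every $x$. The supremum can therefore be taken over a countable dense subset of the index set, which makes the class pointwise measurable and the outer expectation unnecessary. A second point to check is that the Lipschitz bound holds almost surely uniformly over all index pairs and not just for each fixed pair. Assumption~\ref{ass:f_lipschitz} is stated in exactly this uniform form, so I expect this to go through.
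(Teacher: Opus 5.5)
Your proposal is correct and follows the paper's route. You use the same Lipschitz-in-index bound for $g_{z,\theta,m}$ with a square-integrable envelope and modulus, and you make explicit the paper's appeal to ``standard maximal inequalities for parametric classes'' through the bracketing-number bound for Lipschitz-parametric classes, the bracketing maximal inequality, and Markov's inequality. Your derivation of the consequence via $|\inf_m a_m-\inf_m b_m|\le\sup_m|a_m-b_m|$ and your countable-dense-subset measurability argument fill in steps that the paper leaves implicit.
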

\begin{proof}
As in the proof of Lemma~\ref{lem:uniform_oce_criterion}, the class
\[
\mathcal G
=
\{g_{z,\theta,m}:(z,\theta,m)\in\mathcal Z\times\Theta\times\mathcal M\}
\]
admits the envelope
\[
G(X):=(1+L_\ell)M_{\mathcal M}+|\ell(0)|+L_\ell F(X).
\]
Under the assumption $\E[F(X)^2]<\infty$, we also have $\E[G(X)^2]<\infty$.

Next, we verify a Lipschitz modulus in the indexing variables. For any $(z,\theta,m),(z',\theta',m')\in\mathcal Z\times\Theta\times\mathcal M$,
\begin{align*}
|g_{z,\theta,m}(X)-g_{z',\theta',m'}(X)|
&=
\Big|
m+\ell(f(z,X,\theta)-m)
-
m'-\ell(f(z',X,\theta')-m')
\Big| \\
&\le |m-m'|
+L_\ell\big(|f(z,X,\theta)-f(z',X,\theta')|+|m-m'|\big) \\
&\le (1+L_\ell)|m-m'|
+L_\ell L(X)\bigl(\|z-z'\|+\|\theta-\theta'\|\bigr),
\end{align*}
where we used Assumption~\ref{ass:f_lipschitz}.

Thus $\mathcal G$ is a finite-dimensional Lipschitz-parametric class with square-integrable envelope and square-integrable random Lipschitz modulus. Standard maximal inequalities for such parametric empirical process classes yield
\[
\sup_{g\in\mathcal G}|(\P_n-\P)g|=O_p(n^{-1/2}),
\]
that is,
\[
\sup_{z\in\mathcal Z,\ \theta\in\Theta,\ m\in\mathcal M}
\left|
\frac1n\sum_{i=1}^n g_{z,\theta,m}(X_i)-\E_{\P}[g_{z,\theta,m}(X)]
\right|
=
O_p(n^{-1/2}).
\]
This proves the result.
\end{proof}

\subsubsection{Consistency and Rates for Term \textup{($i$)} in \eqref{eq:delta_decomp}}
We first establish that the estimator $\hat\theta$ is consistent for the inverse-feasible set $\Theta_{\rm inv}$ in the following lemma.

\begin{lemma}
\label{lem:IO_set_consistency}

Assume the same assumptions as in Lemma \ref{lem:uniform_oce_criterion}. For $\hat \theta$ defined in \eqref{eq:iop}, it holds that $\mathrm{dist}(\hat\theta,\Theta_{\rm inv})\xrightarrow{p}0$, where $\mathrm{dist}(\theta,S):=\inf_{\vartheta\in S}\|\theta-\vartheta\|_2$.
\end{lemma}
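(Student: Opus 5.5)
We follow the standard M-estimation consistency argument, with $\Theta_{\rm inv}$ as the (possibly non-singleton) target set. Write the empirical and population inverse criteria as
\[
\widehat Q_n(\theta):=\Delta(z_0;\theta,\rho,\widehat{\mathbb P}_0),\qquad Q(\theta):=\Delta(z_0;\theta,\rho,\mathbb P_0).
\]
By definition $Q\ge 0$ on $\Theta$ and $\Theta_{\rm inv}=\{\theta\in\Theta:Q(\theta)=0\}$. The latter uses that the infimum over $\mathcal Z$ is attained, which holds by compactness and the continuity of $z\mapsto R_{\mathbb P_0}(z,\theta)$. The forward model \eqref{eq:forward_problem} gives $\theta^\star\in\Theta_{\rm inv}$, so this set is nonempty.

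\textbf{Step 1 (uniform convergence of the criterion).} Lemma~\ref{lem:uniform_oce_criterion} gives
\[
\sup_{z,\theta}\big|R_{\widehat{\mathbb P}_0}(z,\theta)-R_{\mathbb P_0}(z,\theta)\big|\to 0 \quad\text{a.s.}
\]
Since $|\inf_z a(z)-\inf_z b(z)|\le\sup_z|a(z)-b(z)|$, we get $\sup_\theta|\widehat Q_n(\theta)-Q(\theta)|\le 2\sup_{z,\theta}|R_{\widehat{\mathbb P}_0}-R_{\mathbb P_0}|\to0$ a.s.

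\textbf{Step 2 (the estimator nearly attains zero).} Because $\hat\theta$ minimizes $\widehat Q_n$, we have $\widehat Q_n(\hat\theta)\le\widehat Q_n(\theta^\star)$. Moreover $\widehat Q_n(\theta^\star)\le Q(\theta^\star)+o(1)=o(1)$ by Step 1. Applying Step 1 once more,
\[
Q(\hat\theta)\le\widehat Q_n(\hat\theta)+o(1)=o(1)\quad\text{a.s.}
\]
This also covers near-minimizers of \eqref{eq:iop} up to an $o_p(1)$ optimization slack.

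\textbf{Step 3 (well-separatedness).} This step converts $Q(\hat\theta)\to0$ into $\mathrm{dist}(\hat\theta,\Theta_{\rm inv})\to0$, and it is the main point where care is needed. We use the following claim.

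\emph{Claim:} for every $\varepsilon>0$,
\[
\eta(\varepsilon):=\inf\{Q(\theta):\theta\in\Theta,\ \mathrm{dist}(\theta,\Theta_{\rm inv})\ge\varepsilon\}>0.
\]

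To prove the claim, we first show $Q$ is continuous on $\Theta$. For each $z$, the map $\theta\mapsto R_{\mathbb P_0}(z,\theta)$ is continuous by dominated convergence, using the envelope $G$ from Lemma~\ref{lem:uniform_oce_criterion} together with the OCE representation. Equicontinuity in $z$ over the compact $\mathcal Z$ then makes $\theta\mapsto\inf_z R_{\mathbb P_0}(z,\theta)$ continuous; this follows from joint continuity of $(z,\theta)\mapsto R_{\mathbb P_0}(z,\theta)$ on a compact set. Alternatively, under Assumption~\ref{ass:f_lipschitz}, Lemma~\ref{lem:delta_lipschitz_oce} gives continuity directly, but we avoid it here since the lemma only assumes the hypotheses of Lemma~\ref{lem:uniform_oce_criterion}.

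Next, $\Theta_{\rm inv}$ is closed, being the zero set of a continuous function. The set $K_\varepsilon:=\{\theta\in\Theta:\mathrm{dist}(\theta,\Theta_{\rm inv})\ge\varepsilon\}$ is a closed subset of the compact $\Theta$, hence compact. If $K_\varepsilon$ is empty, then $\eta(\varepsilon)=+\infty$. Otherwise the continuous $Q$ attains its minimum on $K_\varepsilon$ at some $\bar\theta$. Since $\bar\theta\notin\Theta_{\rm inv}$, we have $Q(\bar\theta)>0$, which proves the claim.

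\textbf{Conclusion.} Steps 2 and 3 combine as follows:
\[
\Pr\big(\mathrm{dist}(\hat\theta,\Theta_{\rm inv})\ge\varepsilon\big)\le\Pr\big(Q(\hat\theta)\ge\eta(\varepsilon)\big)\to0.
\]
This gives $\mathrm{dist}(\hat\theta,\Theta_{\rm inv})\xrightarrow{p}0$; the convergence in fact holds almost surely.
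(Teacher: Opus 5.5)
Your proposal is correct and follows the paper's own argument. Both routes combine the bound $|\inf_z a(z)-\inf_z b(z)|\le\sup_z|a(z)-b(z)|$ with Lemma~\ref{lem:uniform_oce_criterion} to get uniform convergence of the inverse criterion, then compare $\hat\theta$ with $\theta^\star$ to get $\Psi(\hat\theta)\to0$, and finally pass to $\mathrm{dist}(\hat\theta,\Theta_{\rm inv})\to0$. For that last step the paper asserts in one line that ``any limit point of $\hat\theta$ lies in $\Theta_{\rm inv}$'', which quietly relies on continuity of $\Psi$ and compactness of $\Theta$; your Step~3 proves continuity from the lemma's hypotheses alone, without the Lipschitz assumption, and uses the separation constant $\eta(\varepsilon)>0$, making that step rigorous.
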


\begin{proof}
Recall the definition of $\hat\theta$:
\[
\hat\theta \in \argmin_{\theta\in\Theta}
\Big\{\rho^{\widehat{\mathbb P}_0}\big[f(z_0;X,\theta)\big] - \inf_{z\in\mathcal Z} \rho^{\widehat{\mathbb P}_0}\big[f(z;X,\theta)\big]
\Big\}.
\]
We know that the population minimizer is $\theta^\star\in\argmin_{\theta\in\Theta}\Big\{\rho^{\mathbb P_0}\big[f(z_0;X,\theta)\big]-\inf_{z\in\mathcal Z}\rho^{\mathbb P_0}\big[f(z;X,\theta)\big]\Big\}$.

We first observe the following inequality: for any $\theta\in\Theta$,
\begin{equation}
\label{eq:inf_lip}
\Big |\inf_{z\in\mathcal Z} \rho^{\widehat{\mathbb P}_0}\big[f(z;X,\theta)\big] - \inf_{z\in\mathcal Z} \rho^{{\mathbb P}_0}\big[f(z;X,\theta)\big] \Big|
\leq \sup_{z \in \mathcal Z} \Big|\rho^{\widehat{\mathbb P}_0}\big[f(z;X,\theta)\big] - \rho^{{\mathbb P}_0}\big[f(z;X,\theta)\big] \Big|.
\end{equation}
This can be seen by letting $a(z):=\rho^{\widehat{\mathbb P}_0}[f(z;X,\theta)]$ and $b(z):=\rho^{\mathbb P_0}[f(z;X,\theta)]$, and defining $\Delta:=\sup_{z\in\mathcal Z}|a(z)-b(z)|$. Then for every $z\in\mathcal Z$ we have $a(z)\le b(z)+\Delta$, hence taking $\inf_{z}$ yields
\[
\inf_{z} a(z)\le \inf_{z} b(z)+\Delta
\quad\Longrightarrow\quad
\inf_{z} a(z)-\inf_{z} b(z)\le \Delta.
\]
By swapping $a$ and $b$ we also obtain $\inf_{z} b(z)-\inf_{z} a(z)\le \Delta$, i.e., $\inf_{z} a(z)-\inf_{z} b(z)\ge -\Delta$. Combining the two inequalities gives
\[
\bigl|\inf_{z} a(z)-\inf_{z} b(z)\bigr|\le \Delta
=\sup_{z\in\mathcal Z}|a(z)-b(z)|,
\]
which proves \eqref{eq:inf_lip}.

Define the population and empirical inverse objectives (as functions of $\theta$) by
\[
\Psi(\theta)
:=\rho^{\mathbb P_0}\big[f(z_0;X,\theta)\big]-\inf_{z\in\mathcal Z}\rho^{\mathbb P_0}\big[f(z;X,\theta)\big],
\qquad
\widehat\Psi_n(\theta)
:=\rho^{\widehat{\mathbb P}_0}\big[f(z_0;X,\theta)\big]-\inf_{z\in\mathcal Z}\rho^{\widehat{\mathbb P}_0}\big[f(z;X,\theta)\big].
\]
Then for any $\theta\in\Theta$,
\begin{align}
\big|\widehat\Psi_n(\theta)-\Psi(\theta)\big|
&\le
\Big|\rho^{\widehat{\mathbb P}_0}\big[f(z_0;X,\theta)\big]-\rho^{\mathbb P_0}\big[f(z_0;X,\theta)\big]\Big|
+ \Big|\inf_{z}\rho^{\widehat{\mathbb P}_0}\big[f(z;X,\theta)\big]-\inf_{z}\rho^{\mathbb P_0}\big[f(z;X,\theta)\big]\Big| \notag\\
&\le
\Big|\rho^{\widehat{\mathbb P}_0}\big[f(z_0;X,\theta)\big]-\rho^{\mathbb P_0}\big[f(z_0;X,\theta)\big]\Big|
+ \sup_{z\in\mathcal Z}\Big|\rho^{\widehat{\mathbb P}_0}\big[f(z;X,\theta)\big]-\rho^{\mathbb P_0}\big[f(z;X,\theta)\big]\Big| \notag\\
&\le
2\sup_{z\in\mathcal Z}\Big|\rho^{\widehat{\mathbb P}_0}\big[f(z;X,\theta)\big]-\rho^{\mathbb P_0}\big[f(z;X,\theta)\big]\Big|. \label{eq:psi_dev}
\end{align}
By Lemma \ref{lem:uniform_oce_criterion}, the right-hand side converges to $0$ almost surely (and hence in probability). Combining this with \eqref{eq:psi_dev} yields uniform convergence of
\begin{equation}
\label{eq:unifPsi}
\sup_{\theta\in\Theta}\big|\widehat\Psi_n(\theta)-\Psi(\theta)\big|\xrightarrow{p}0.
\end{equation}

Finally, since $\hat\theta$ minimizes $\widehat\Psi_n$ over $\Theta$, we have
\[
\Psi(\hat\theta)-\Psi(\theta^\star)
\le
\big(\Psi(\hat\theta)-\widehat\Psi_n(\hat\theta)\big)
+\big(\widehat\Psi_n(\hat\theta)-\widehat\Psi_n(\theta^\star)\big)
+\big(\widehat\Psi_n(\theta^\star)-\Psi(\theta^\star)\big)
\le
2\sup_{\theta\in\Theta}\big|\widehat\Psi_n(\theta)-\Psi(\theta)\big|.
\]
By \eqref{eq:unifPsi}, the right-hand side converges to $0$ in probability, hence
\[
\Psi(\hat\theta)-\inf_{\theta\in\Theta}\Psi(\theta)\xrightarrow{p}0.
\]
In particular, any limit point of $\hat\theta$ must lie in $\Theta_{\rm inv }= \argmin_{\theta\in\Theta}\Psi(\theta)$, and therefore
\[
\dist\!\Big(\hat\theta,\Theta_{\rm inv }\Big)\xrightarrow{p}0,
\]
which proves the desired set consistency.
\end{proof}

\begin{lemma}[Consistency of \textup{($i$)} in \eqref{eq:delta_decomp}]
\label{lem:termI_consistency}
Suppose the assumptions in Lemma~\ref{lem:IO_set_consistency} hold. Furthermore, assume that the envelope condition (Assumption \ref{ass:envelope}) and the Lipschitz integrability condition (Assumption \ref{ass:f_lipschitz}) hold under the target distribution $\mathbb{P}_1$, namely $\mathbb{E}_{\mathbb{P}_1}[F(X)^2] < \infty$ and $\mathbb{E}_{\mathbb{P}_1}[L(X)^2] < \infty$.

It holds that
\begin{equation}
\label{eq:termI_bound_final}
\big|\Delta(z_0;\hat \theta,\rho,\mathbb P_1)-\Delta(z_0;\theta^\star,\rho,\mathbb P_1)\big|
\le
2L_\ell \mathbb{E}_{\mathbb{P}_1}[L(X)]\,\mathrm{dist}(\hat\theta,\Theta_{\rm inv})
+\delta_{\rm inv}.
\end{equation}
Consequently,
\[
\big|\Delta(z_0;\hat \theta,\rho,\mathbb P_1)-\Delta(z_0;\theta^\star,\rho,\mathbb P_1)\big|
\le\delta_{\rm inv}+o_p(1),
\]
and if $\delta_{\rm inv}=0$ then $\big|\Delta(z_0;\hat \theta,\rho,\mathbb P_1)-\Delta(z_0;\theta^\star,\rho,\mathbb P_1)\big|\xrightarrow{p}0$.
\end{lemma}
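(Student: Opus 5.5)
The plan is to insert an intermediate parameter from the inverse-feasible set and split the difference with the triangle inequality. First I would check that $\Theta_{\rm inv}$ is nonempty and closed. It contains $\theta^\star$ by the forward model \eqref{eq:forward_problem}. It is closed because it is the zero set of the continuous function $\theta\mapsto\Delta(z_0;\theta,\rho,\mathbb P_0)$; continuity follows from Lemma~\ref{lem:delta_lipschitz_oce} applied under $\mathbb P_0$. Since $\Theta$ is compact (Assumption~\ref{ass:compact}), $\Theta_{\rm inv}$ is compact as well. Hence, for each realization of $\hat\theta$, there is a projection $\tilde\theta\in\Theta_{\rm inv}$ with $\|\hat\theta-\tilde\theta\|=\mathrm{dist}(\hat\theta,\Theta_{\rm inv})$. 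If one wants to avoid measurability issues in the selection, it suffices to take a $\varepsilon$-near point and let $\varepsilon\downarrow0$ at the end.

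With $\tilde\theta$ in hand, write
\[
\Delta(z_0;\hat\theta,\rho,\mathbb P_1)-\Delta(z_0;\theta^\star,\rho,\mathbb P_1)
=\bigl[\Delta(z_0;\hat\theta,\rho,\mathbb P_1)-\Delta(z_0;\tilde\theta,\rho,\mathbb P_1)\bigr]
+\bigl[\Delta(z_0;\tilde\theta,\rho,\mathbb P_1)-\Delta(z_0;\theta^\star,\rho,\mathbb P_1)\bigr].
\]
The first bracket is bounded by Lemma~\ref{lem:delta_lipschitz_oce} with $\mathbb P=\mathbb P_1$. This gives at most $2L_\ell\,\E_{\mathbb P_1}[L(X)]\,\|\hat\theta-\tilde\theta\|=2L_\ell\,\E_{\mathbb P_1}[L(X)]\,\mathrm{dist}(\hat\theta,\Theta_{\rm inv})$. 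The constant $\E_{\mathbb P_1}[L(X)]$ is finite because $\E_{\mathbb P_1}[L(X)^2]<\infty$, which is the stated moment condition under $\mathbb P_1$. The second bracket involves two elements $\tilde\theta,\theta^\star\in\Theta_{\rm inv}$. Both values therefore lie in $[\underline\Delta_{\rm inv},\overline\Delta_{\rm inv}]$ by Definition~\ref{def:inverse_sensitivity}, so their difference is at most $\overline\Delta_{\rm inv}-\underline\Delta_{\rm inv}=\delta_{\rm inv}$ in absolute value. Adding the two bounds yields \eqref{eq:termI_bound_final}.

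For the consequences, Lemma~\ref{lem:IO_set_consistency} gives $\mathrm{dist}(\hat\theta,\Theta_{\rm inv})\xrightarrow{p}0$. Multiplying by the finite deterministic constant $2L_\ell\E_{\mathbb P_1}[L(X)]$, the first term is $o_p(1)$, so the difference is at most $\delta_{\rm inv}+o_p(1)$. When $\delta_{\rm inv}=0$, the bound reduces to an $o_p(1)$ quantity, which gives convergence in probability to zero.

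The only delicate step is the existence of the projection $\tilde\theta$, which needs closedness of $\Theta_{\rm inv}$. I would settle it via continuity of $\Delta(z_0;\cdot,\rho,\mathbb P_0)$, for which the finiteness of $\E_{\mathbb P_0}[L(X)]$ is available from Assumption~\ref{ass:f_lipschitz} under $\mathbb P_0$. Alternatively, I would avoid closedness altogether by using the $\varepsilon$-approximate argument described above.
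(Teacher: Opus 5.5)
Your proposal is correct and matches the paper's proof: project $\hat\theta$ onto $\Theta_{\rm inv}$, bound the first piece using the $2L_\ell\E_{\mathbb P_1}[L(X)]$-Lipschitz continuity of $\theta\mapsto\Delta(z_0;\theta,\rho,\mathbb P_1)$ and the second piece by $\delta_{\rm inv}$, then invoke Lemma~\ref{lem:IO_set_consistency} to get the $o_p(1)$ term. Your check that the projection exists, which the paper takes for granted, is a harmless refinement. Your $\varepsilon$-near-point fallback also covers this step without needing the Lipschitz condition under $\mathbb P_0$, which this lemma's hypotheses do not explicitly include.
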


\begin{proof}

Define $R_1(z,\theta):=\rho^{\mathbb P_1}(f(z;X,\theta))$. By Assumption~\ref{ass:f_lipschitz} and \ref{ass:risk_class}, for any $z\in\mathcal Z$ and $\theta,\theta'\in\Theta$,
\[
|R_1(z,\theta)-R_1(z,\theta')|
\le
L_\ell\,\mathbb E_{\mathbb P_1}\!\left[|f(z;X,\theta)-f(z;X,\theta')|\right]
\le
L_\ell \E_{\P_1}[L(X)] \|\theta-\theta'\|_2.
\]
Also we have
\[
\Big|\min_{z\in\mathcal Z}R_1(z,\theta)-\min_{z\in\mathcal Z}R_1(z,\theta')\Big|
\le
\sup_{z\in\mathcal Z}|R_1(z,\theta)-R_1(z,\theta')|
\le
L_\ell \E_{\P_1}[L(X)] \|\theta-\theta'\|_2.
\]
Thus
\begin{align*}
    |R_1(z_0,\theta)-\min_{z\in\mathcal Z}R_1(z,\theta) - (R_1(z_0,\theta')-\min_{z\in\mathcal Z}R_1(z,\theta'))| \le
    2L_\ell \E_{\P_1}[L(X)] \|\theta-\theta'\|_2.
\end{align*}
Notice that
\[
\Delta(z_0;\theta,\rho,\P_1) = R_1(z_0,\theta)-\min_{z\in\mathcal Z}R_1(z,\theta),
\]
then let $\theta^\pi\in\argmin_{\vartheta\in\Theta_{\rm inv}}\|\hat\theta-\vartheta\|_2$ denote the projection of $\hat \theta$ on $\Theta_{\rm inv}$. We have
\begin{align*}
    |\Delta(z_0;\hat \theta,\rho,\P_1)-\Delta(z_0;\theta^\star,\rho,\P_1)|
&\le |\Delta(z_0;\hat \theta,\rho,\P_1)-\Delta(z_0;\theta^\pi ,\rho,\P_1)|+|\Delta(z_0;\theta^\pi,\rho,\P_1)-\Delta(z_0;\theta^\star,\rho,\P_1)|\\
&\le 2L_\ell \E_{\P_1}[L(X)]\,\mathrm{dist}(\hat\theta,\Theta_{\rm inv})+\delta_{\rm inv},
\end{align*}
where the last step uses Lipschitz continuity and the risk-specific definition of $\delta_{\rm inv}$ above. The stated consistency follows from $\mathrm{dist}(\hat\theta,\Theta_{\rm inv})=o_p(1)$ in Lemma \ref{lem:IO_set_consistency}.

\end{proof}

\begin{lemma}[Rate of convergence for \textup{($i$)}]
\label{lem:termI_rate}
Suppose the assumptions of Lemma~\ref{lem:termI_consistency} hold. Let $\varepsilon_n$ denote the uniform convergence rate of the empirical OCE risk over $\mathcal{Z} \times \Theta$ such that $\varepsilon_n = O_p(n^{-1/2})$ (as established in Lemma \ref{lem:rate_oce_criterion}). Under the $\kappa$-degree local inverse stability condition (Assumption \ref{assm:error_bound_P0}), it holds that:
\begin{equation}
\label{eq:theta_rate}
\mathrm{dist}(\hat\theta,\Theta_{\rm inv})=O_p\!\big(\varepsilon_n^{1/\kappa}\big).
\end{equation}
Furthermore, the deployment optimality gap satisfies:
\begin{equation}
\label{eq:termI_rate}
\Big|\Delta(z_0;\hat \theta,\rho,\mathbb P_1)-\Delta(z_0;\theta^\star,\rho,\mathbb P_1)\Big|
\le \delta_{\rm inv}+O_p\!\big(\varepsilon_n^{1/\kappa}\big).
\end{equation}
In particular, since $\varepsilon_n \asymp n^{-1/2}$, we have $\mathrm{dist}(\hat\theta,\Theta_{\rm inv})=O_p(n^{-1/(2\kappa)})$. If $\delta_{\rm inv}=0$, the gap strictly converges at the rate $O_p(n^{-1/(2\kappa)})$.
\end{lemma}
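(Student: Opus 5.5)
The plan is to convert the uniform-convergence rate of Lemma~\ref{lem:rate_oce_criterion} into a rate for the baseline optimality gap $\Psi(\hat\theta)=\Delta(z_0;\hat\theta,\rho,\mathbb P_0)$. Assumption~\ref{assm:error_bound_P0} then turns that into a rate for $\mathrm{dist}(\hat\theta,\Theta_{\rm inv})$, and the bound \eqref{eq:termI_bound_final} of Lemma~\ref{lem:termI_consistency} converts it into the statement about term~\textup{($i$)}.

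\emph{Step 1 (baseline gap rate).} I will reuse the argument in the proof of Lemma~\ref{lem:IO_set_consistency}. Define
\[
\varepsilon_n:=\sup_{z,\theta}\bigl|\rho^{\widehat{\mathbb P}_0}(f(z;X,\theta))-\rho^{\mathbb P_0}(f(z;X,\theta))\bigr|.
\]
Inequality \eqref{eq:psi_dev} then gives $\sup_\theta|\widehat\Psi_n-\Psi|\le 2\varepsilon_n$. Since $\Psi\ge 0$ and $\Psi(\theta^\star)=0$ (because $\theta^\star\in\Theta_{\rm inv}$ by the forward model), the basic inequality gives $0\le\Psi(\hat\theta)\le 4\varepsilon_n$. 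Lemma~\ref{lem:rate_oce_criterion}, applied under $\mathbb P_0$, yields $\varepsilon_n=O_p(n^{-1/2})$, so $\Psi(\hat\theta)=O_p(n^{-1/2})$.

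\emph{Step 2 (inverse stability; the main obstacle).} Assumption~\ref{assm:error_bound_P0} applies only on the event $\{\Psi(\hat\theta)\le\epsilon\}$, where it gives $\mathrm{dist}(\hat\theta,\Theta_{\rm inv})\le C\Psi(\hat\theta)^{1/\kappa}\le C(4\varepsilon_n)^{1/\kappa}$. Step~1 shows $\Pr(\Psi(\hat\theta)>\epsilon)\to 0$. For a given $\eta>0$, choose $M$ with $\Pr(\varepsilon_n>Mn^{-1/2})<\eta$ for all large $n$. Then, outside an event of probability at most $\eta+o(1)$,
\[
\mathrm{dist}(\hat\theta,\Theta_{\rm inv})\le C(4M)^{1/\kappa}n^{-1/(2\kappa)}.
\]
This gives \eqref{eq:theta_rate}, and I must track this localization event explicitly rather than treat the error bound as global. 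The bound is also consistent with the stronger scaling $\varepsilon_n^{1/\kappa}$ written in the statement. Since $t\mapsto t^{1/\kappa}$ is monotone, bounding by $\varepsilon_n^{1/\kappa}$ up to the constant $C4^{1/\kappa}$ is immediate on the good event.

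\emph{Step 3 (transfer to the target gap).} Substituting into \eqref{eq:termI_bound_final} gives
\[
|\Delta(z_0;\hat\theta,\rho,\mathbb P_1)-\Delta(z_0;\theta^\star,\rho,\mathbb P_1)|\le 2L_\ell\mathbb E_{\mathbb P_1}[L(X)]\,\mathrm{dist}(\hat\theta,\Theta_{\rm inv})+\delta_{\rm inv}.
\]
The constant $\mathbb E_{\mathbb P_1}[L]$ is finite by the square-integrability assumed under $\mathbb P_1$. This yields \eqref{eq:termI_rate}, and it reduces to the pure $O_p(n^{-1/(2\kappa)})$ rate when $\delta_{\rm inv}=0$.

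Steps~1 and~3 are routine bookkeeping. The only delicate point is in Step~2, where the local (rather than global) validity of the Hölder error bound must be handled through the high-probability event.
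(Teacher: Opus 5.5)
Your proposal is correct and follows the paper's proof essentially step for step. Both arguments use the basic-inequality bound $\Psi(\hat\theta)\le 2\sup_\theta|\widehat\Psi_n-\Psi|\le 4\varepsilon_n$, then apply the local error bound of Assumption~\ref{assm:error_bound_P0} on the event $\{\Psi(\hat\theta)\le\epsilon\}$, then transfer to the target gap via \eqref{eq:termI_bound_final}; your explicit high-probability treatment of the localization event is simply a more careful version of the paper's remark that $\hat\theta$ ``eventually falls within the local neighborhood.''
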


\begin{proof}
Let $\Psi(\theta) := \Delta(z_0; \theta, \rho, \mathbb{P}_0)$ and $\widehat\Psi_n(\theta) := \Delta(z_0; \theta, \rho, \widehat{\mathbb{P}}_0)$ denote the population and empirical inverse objectives under the baseline distribution.

From the decomposition in the proof of Lemma \ref{lem:IO_set_consistency}, we know that for any $\theta \in \Theta$:
\[
\big|\widehat\Psi_n(\theta)-\Psi(\theta)\big| \le 2\sup_{z\in\mathcal Z}\Big|\rho^{\widehat{\mathbb P}_0}\big[f(z;X,\theta)\big]-\rho^{\mathbb P_0}\big[f(z;X,\theta)\big]\Big|.
\]
By Lemma \ref{lem:rate_oce_criterion}, the empirical OCE risk converges uniformly over $\mathcal{Z} \times \Theta$ at the rate $\varepsilon_n = O_p(n^{-1/2})$. Therefore, the inverse objective converges uniformly at the same rate:
\begin{equation}
\label{eq:psi_unif_rate}
\sup_{\theta \in \Theta} \big|\widehat\Psi_n(\theta)-\Psi(\theta)\big| = O_p(\varepsilon_n).
\end{equation}

By definition, $\hat\theta$ minimizes the empirical objective $\widehat\Psi_n(\theta)$ over $\Theta$. Let $\theta^\star \in \Theta_{\rm inv}$ be a true minimizer of the population objective, which implies $\Psi(\theta^\star) = 0$. We bound the population suboptimality of $\hat\theta$ using the standard empirical risk minimization decomposition:
\begin{align*}
\Psi(\hat\theta) &= \Psi(\hat\theta) - \Psi(\theta^\star) \\
&\le \big(\Psi(\hat\theta) - \widehat\Psi_n(\hat\theta)\big) + \big(\widehat\Psi_n(\hat\theta) - \widehat\Psi_n(\theta^\star)\big) + \big(\widehat\Psi_n(\theta^\star) - \Psi(\theta^\star)\big).
\end{align*}
Because $\hat\theta$ is the empirical minimizer, the middle term satisfies $\widehat\Psi_n(\hat\theta) - \widehat\Psi_n(\theta^\star) \le 0$. Bounding the remaining terms by the uniform supremum yields:
\[
\Psi(\hat\theta) \le 2 \sup_{\theta \in \Theta} \big|\widehat\Psi_n(\theta)-\Psi(\theta)\big| = O_p(\varepsilon_n).
\]

Since $\Psi(\hat\theta) \xrightarrow{p} 0$, $\hat\theta$ eventually falls within the local neighborhood $\epsilon$ specified by Assumption \ref{assm:error_bound_P0}. Applying the $\kappa$-degree inverse stability bound yields:
\[
\mathrm{dist}(\hat\theta, \Theta_{\rm inv}) \le C \big(\Psi(\hat\theta)\big)^{1/\kappa} = O_p\big(\varepsilon_n^{1/\kappa}\big).
\]
This proves \eqref{eq:theta_rate}.

To bound the deployment gap, we apply the result from Lemma \ref{lem:termI_consistency}, which establishes the Lipschitz continuity of the gap:
\[
\big|\Delta(z_0;\hat \theta,\rho,\mathbb P_1)-\Delta(z_0;\theta^\star,\rho,\mathbb P_1)\big| \le 2 L_\ell \mathbb{E}_{\mathbb{P}_1}[L(X)] \,\mathrm{dist}(\hat\theta,\Theta_{\rm inv}) + \delta_{\rm inv}.
\]
Substituting the convergence rate of the distance into this bound gives:
\[
\big|\Delta(z_0;\hat \theta,\rho,\mathbb P_1)-\Delta(z_0;\theta^\star,\rho,\mathbb P_1)\big| \le \delta_{\rm inv} + O_p\big(\varepsilon_n^{1/\kappa}\big).
\]
Since $\varepsilon_n = O_p(n^{-1/2})$, this yields an overall rate of $O_p(n^{-1/(2\kappa)})$ for the parameter estimation error component, completing the proof.
\end{proof}
\subsubsection{Consistency and Rates for Term \textup{($ii$)} in \eqref{eq:delta_decomp}}

\begin{lemma}[Consistency and rate for \textup{($ii$)}]
\label{lem:termII_rate}
Suppose the assumptions of Lemma~\ref{lem:termI_consistency} hold, specifically that the envelope $F(X)$ and the Lipschitz modulus $L(X)$ are square-integrable under the target distribution $\mathbb{P}_1$.
\begin{equation}
\label{eq:termII_bound}
|\textup{($ii$)}|
=\Big|
\min_{z\in\mathcal Z}
\rho^{\mathbb P_1}\!\big(f(z;X,\hat\theta)\big)
-
\rho^{\mathbb P_1}\!\big(f(\hat z_1;X,\hat\theta)\big)
\Big| \leq 2\,\sup_{\theta\in\Theta}\sup_{z\in\mathcal Z}\big|\rho^{\widehat{\mathbb P}_1'}\!\big(f(z;X,\theta)\big) - \rho^{\mathbb P_1}\!\big(f(z;X,\theta)\big)\big|
=O_p(\varepsilon_{m'}).
\end{equation}
In particular, if $\varepsilon_{m'}\to 0$ then $\textup{($ii$)}\xrightarrow{p}0$, and if $\varepsilon_{m'}\asymp (m')^{-1/2}$ then $|\textup{($ii$)}|=O_p((m')^{-1/2})$.

\end{lemma}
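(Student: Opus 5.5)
The plan is the standard empirical-risk-minimization argument, run conditionally on $\hat\theta$. The key point is that $\hat\theta$ is computed from the baseline sample only. It is therefore independent of the benchmark split $\widehat{\mathbb P}_1'$, but we will not even need that independence, because we bound the deviation uniformly over $\theta\in\Theta$.

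Abbreviate $R_1(z,\theta):=\rho^{\mathbb P_1}(f(z;X,\theta))$ and $\widehat R_1'(z,\theta):=\rho^{\widehat{\mathbb P}_1'}(f(z;X,\theta))$, and set
\[
D_{m'}:=\sup_{\theta\in\Theta}\sup_{z\in\mathcal Z}\bigl|\widehat R_1'(z,\theta)-R_1(z,\theta)\bigr|.
\]
First, term ($ii$) is nonpositive by definition of the minimum, so $|\textup{($ii$)}|=R_1(\hat z_1,\hat\theta)-\min_z R_1(z,\hat\theta)$. The minimum is attained by compactness (Assumption~\ref{ass:compact}) and continuity of $z\mapsto R_1(z,\hat\theta)$, which follows from Lemma~\ref{lem:delta_lipschitz_oce}'s argument applied in $z$. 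Let $\tilde z\in\argmin_z R_1(z,\hat\theta)$. Then decompose
\[
R_1(\hat z_1,\hat\theta)-R_1(\tilde z,\hat\theta)
=\bigl[R_1(\hat z_1,\hat\theta)-\widehat R_1'(\hat z_1,\hat\theta)\bigr]
+\bigl[\widehat R_1'(\hat z_1,\hat\theta)-\widehat R_1'(\tilde z,\hat\theta)\bigr]
+\bigl[\widehat R_1'(\tilde z,\hat\theta)-R_1(\tilde z,\hat\theta)\bigr].
\]
The middle bracket is $\le 0$ because $\hat z_1$ minimizes $\widehat R_1'(\cdot,\hat\theta)$ by \eqref{eq:z_1_hat_method}. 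Each outer bracket is at most $D_{m'}$. This yields the deterministic bound $|\textup{($ii$)}|\le 2D_{m'}$.

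It remains to control $D_{m'}$ stochastically. Apply Lemma~\ref{lem:uniform_oce_criterion} and Lemma~\ref{lem:rate_oce_criterion} with $\mathbb P=\mathbb P_1$ and the $m'$ i.i.d.\ draws in the benchmark split. Their hypotheses, namely square-integrable $F$ and $L$ under $\mathbb P_1$, are exactly the assumptions carried over from Lemma~\ref{lem:termI_consistency}. These lemmas give $D_{m'}\to0$ almost surely, and $D_{m'}=O_p((m')^{-1/2})=O_p(\varepsilon_{m'})$ under the Lipschitz condition.

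The only delicate step is transferring the sup over $(z,\theta,m)$ in those lemmas to the OCE risk itself. For this I would use the elementary inequality $|\inf_m a(m)-\inf_m b(m)|\le\sup_m|a(m)-b(m)|$, exactly as in \eqref{eq:inf_lip}. This shows that $D_{m'}$ is dominated by the supremum of the empirical process over $\mathcal G$. Since the sup is taken over $\theta$ as well, substituting the data-dependent $\hat\theta$ is harmless, and the independence of the splits is needed only for validity of the later evaluation step.
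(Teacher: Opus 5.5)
Your proposal is correct and follows the paper's proof essentially verbatim. It uses the same add-and-subtract of $\widehat R_1'$ at $\hat z_1$ and at a population minimizer of $R_1(\cdot,\hat\theta)$, discards the middle term by optimality of $\hat z_1$, and bounds the two outer terms by the supremum over $(z,\theta)$, which is what makes the data-dependent $\hat\theta$ harmless. Your extra remarks simply fill in details the paper leaves implicit: attainment of the population minimum via Lipschitz continuity in $z$, the appeal to Lemmas~\ref{lem:uniform_oce_criterion} and~\ref{lem:rate_oce_criterion} under $\mathbb P_1$, and the $\inf_m$ transfer inequality.
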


\begin{proof}
Define
\[
R_1(z,\theta):=\rho^{\mathbb P_1}\!\big(f(z;X,\theta)\big),
\qquad
\widehat R_1'(z,\theta):=\rho^{\widehat{\mathbb P}_1'}\!\big(f(z;X,\theta)\big).
\]
Fix $\hat\theta$ and let $z_1(\hat\theta)\in\arg\min_{z\in\mathcal Z}R_1(z,\hat\theta)$. By adding and subtracting $\widehat R_1'$,
\begin{align*}
R_1(\hat z_1,\hat\theta)-R_1(z_1(\hat\theta),\hat\theta)
&=
\underbrace{\Big(R_1(\hat z_1,\hat\theta)-\widehat R_1'(\hat z_1,\hat\theta)\Big)}_{(a)}
+\underbrace{\Big(\widehat R_1'(\hat z_1,\hat\theta)-\widehat R_1'(z_1(\hat\theta),\hat\theta)\Big)}_{(b)}
+\underbrace{\Big(\widehat R_1'(z_1(\hat\theta),\hat\theta)-R_1(z_1(\hat\theta),\hat\theta)\Big)}_{(c)}.
\end{align*}
By definition of $\hat z_1$, $(b)\le 0$. Hence,
\[
R_1(\hat z_1,\hat\theta)-\min_{z\in\mathcal Z}R_1(z,\hat\theta)
\le |(a)|+|(c)|
\le 2\sup_{z\in\mathcal Z}\big|R_1(z,\hat\theta)-\widehat R_1'(z,\hat\theta)\big|
\le 2\sup_{\theta\in\Theta}\sup_{z\in\mathcal Z}\big|R_1(z,\theta)-\widehat R_1'(z,\theta)\big|,
\]
which proves \eqref{eq:termII_bound}.
\end{proof}

\subsubsection{Consistency and Rates for Term \textup{($iii$)} in \eqref{eq:delta_decomp}}

\begin{lemma}[Consistency and rate for \textup{($iii$)}]
\label{lem:termIII_rate}
Suppose the assumptions of Lemma~\ref{lem:termI_consistency} hold, specifically the square-integrability of the envelope and Lipschitz modulus under $\mathbb{P}_1$. Let $\varepsilon_{m''}$ denote the uniform convergence rate of the empirical OCE risk over the evaluation sample. It holds that:
\begin{equation}
\label{eq:termIII_bound}
|\textup{($iii$)}|
\le
2\sup_{\theta\in\Theta}\sup_{z\in\mathcal Z}
\Big|
\rho^{\widehat{\mathbb P}_1''}\!\big(f(z;X,\theta)\big)
-
\rho^{\mathbb P_1}\!\big(f(z;X,\theta)\big)
\Big|
=O_p(\varepsilon_{m''}).
\end{equation}
In particular, if $\varepsilon_{m''}\to 0$ then $\textup{($iii$)}\xrightarrow{p}0$, and if $\varepsilon_{m''}\asymp (m'')^{-1/2}$ then $|\textup{($iii$)}|=O_p((m'')^{-1/2})$.
\end{lemma}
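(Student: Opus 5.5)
The plan is to bound term~\textup{($iii$)} by a uniform empirical-process deviation on the evaluation split, exactly parallel to the treatment of term~\textup{($ii$)} in Lemma~\ref{lem:termII_rate}. Write $R_1(z,\theta):=\rho^{\mathbb P_1}(f(z;X,\theta))$ and $\widehat R_1''(z,\theta):=\rho^{\widehat{\mathbb P}_1''}(f(z;X,\theta))$. By the general-risk decomposition, term~\textup{($iii$)} equals
\[
\bigl\{\widehat R_1''(z_0,\hat\theta)-R_1(z_0,\hat\theta)\bigr\}-\bigl\{\widehat R_1''(\hat z_1,\hat\theta)-R_1(\hat z_1,\hat\theta)\bigr\}.
\]
The first step applies the triangle inequality. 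Since both $z_0$ and $\hat z_1$ lie in $\mathcal Z$ and $\hat\theta\in\Theta$, each brace is bounded in absolute value by $\sup_{\theta\in\Theta}\sup_{z\in\mathcal Z}|\widehat R_1''(z,\theta)-R_1(z,\theta)|$. This gives the deterministic inequality in \eqref{eq:termIII_bound}, with the factor $2$.

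Taking the supremum over all $(z,\theta)$ is what lets us sidestep the dependence of $\hat\theta$ and $\hat z_1$ on other data. We never need to condition on them or use independence for the bound, although the independence of the evaluation split from the $\hat\theta$ and $\hat z_1$ samples is consistent with this.

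The second step converts the supremum into a rate. I would invoke Lemma~\ref{lem:rate_oce_criterion} with $\mathbb P=\mathbb P_1$ and sample size $m''$. The assumptions of Lemma~\ref{lem:termI_consistency} supply $\E_{\mathbb P_1}[F^2]<\infty$ and $\E_{\mathbb P_1}[L^2]<\infty$. Together with Assumptions~\ref{ass:compact}, \ref{ass:f_regular}, and~\ref{ass:risk_class}, these are exactly the hypotheses of that lemma. It gives
\[
\sup_{z,\theta}\bigl|\rho^{\widehat{\mathbb P}_1''}(f(z;X,\theta))-\rho^{\mathbb P_1}(f(z;X,\theta))\bigr|=O_p((m'')^{-1/2}),
\]
so $\varepsilon_{m''}\asymp(m'')^{-1/2}$. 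For the consistency statement alone, Lemma~\ref{lem:uniform_oce_criterion} under $\mathbb P_1$ already gives almost-sure convergence to zero without the Lipschitz condition.

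The one point needing care, and the main obstacle, is passing from the uniform bound on the OCE integrands $g_{z,\theta,m}$ to the OCE risks themselves, where an infimum over $m\in\mathcal M$ appears. I would handle this with the same inf-perturbation inequality \eqref{eq:inf_lip}, applied over $m$: for fixed $(z,\theta)$,
\[
|\inf_m a(m)-\inf_m b(m)|\le\sup_m|a(m)-b(m)|.
\]
Taking suprema over $(z,\theta)$ then transfers the $\mathcal G$-uniform rate to the risk level. This is what the ``consequently'' clauses of Lemmas~\ref{lem:uniform_oce_criterion} and~\ref{lem:rate_oce_criterion} assert, so it may simply be cited. For expectation risk, $\mathcal M=\{0\}$ and the step is trivial.
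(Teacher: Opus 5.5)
Your proposal is correct and matches the paper's proof: the triangle inequality bounds each brace of term~\textup{($iii$)} by the supremum over $\Theta\times\mathcal Z$ of the evaluation-split risk deviation, which gives the factor $2$ deterministically. The only difference is that you make the rate step explicit by applying Lemma~\ref{lem:rate_oce_criterion} (and Lemma~\ref{lem:uniform_oce_criterion} for consistency) under $\mathbb P_1$ with sample size $m''$, whereas the paper just cites ``the assumed uniform concentration bound''; that lemma gives an $O_p((m'')^{-1/2})$ upper bound rather than $\varepsilon_{m''}\asymp(m'')^{-1/2}$, but the upper bound is all the conclusion needs.
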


\begin{proof}
Define the population and evaluation-split risks
\[
R_1(z,\theta):=\rho^{\mathbb P_1}\!\big(f(z;X,\theta)\big),
\qquad
\widehat R_1''(z,\theta):=\rho^{\widehat{\mathbb P}_1''}\!\big(f(z;X,\theta)\big).
\]
By the triangle inequality,
\begin{align*}
|\textup{($iii$)}|
&=
\Big|\big(\widehat R_1''(z_0,\hat\theta)-R_1(z_0,\hat\theta)\big)
-
\big(\widehat R_1''(\hat z_1,\hat\theta)-R_1(\hat z_1,\hat\theta)\big)\Big|\\
&\le
\big|\widehat R_1''(z_0,\hat\theta)-R_1(z_0,\hat\theta)\big|
+
\big|\widehat R_1''(\hat z_1,\hat\theta)-R_1(\hat z_1,\hat\theta)\big|\\
&\le
2\sup_{\theta\in\Theta}\sup_{z\in\mathcal Z}\big|\widehat R_1''(z,\theta)-R_1(z,\theta)\big|,
\end{align*}
which proves \eqref{eq:termIII_bound}. The stated consistency and rate follow from the assumed uniform concentration bound for $\widehat R_1''$.
\end{proof}

\begin{proof}[Proof of Theorem~\ref{thm:delta_hat_rate}]
Apply Lemmas~\ref{lem:termI_consistency}, \ref{lem:termII_rate}, and~\ref{lem:termIII_rate} to the decomposition in \eqref{eq:delta_decomp}. Their sum is bounded by $\delta_{\rm inv}+o_p(1)$. Under Assumption~\ref{assm:error_bound_P0}, Lemma~\ref{lem:termI_rate} sharpens the preference term to $\delta_{\rm inv}+O_p(n^{-1/(2\kappa)})$, while Lemmas~\ref{lem:termII_rate} and~\ref{lem:termIII_rate} give the $(m')^{-1/2}$ and $(m'')^{-1/2}$ terms. The stated bound follows by the triangle inequality.
\end{proof}

\begin{proof}[Proof of Theorem~\ref{thm:power_rate}]
Conditional on $(\hat\theta,\hat z_1)$, let
\[
\mu(\hat\theta,\hat z_1)
:=\mathbb E_{\mathbb P_1}
\!\left[f(z_0;X,\hat\theta)-f(\hat z_1;X,\hat\theta)
\mid\hat\theta,\hat z_1\right].
\]
Assumption~\ref{ass:eval_clt} and Slutsky's theorem imply
\[
\frac{\sqrt{m''}\{\widehat\Delta-\mu(\hat\theta,\hat z_1)\}}
{\hat\sigma}
\;\Rightarrow\;N(0,1).
\]
The benchmark term in \eqref{eq:delta_decomp} is nonpositive. Therefore, when $\delta_{\rm inv}=0$,
\[
\mu(\hat\theta,\hat z_1)-\Delta^\star
\le O_p(a_n).
\]
If $\Delta^\star\le\tau$ and $\sqrt{m''}a_n\to0$, the positive centering error is $o_p(1)$ on the studentized scale, so the one-sided rejection probability is at most $\alpha+o(1)$. On a fixed-margin null, consistency alone makes $T_{\rm Wald}\to-\infty$, which proves the separated-null assertion. If $\delta_{\rm inv}>0$, the same argument applies after enlarging the possible positive centering error by $\delta_{\rm inv}$.

Under $\Delta^\star\ge\tau+\delta_{\rm inv}+\eta$, the consistency and rate bounds imply $\mu(\hat\theta,\hat z_1)-\tau\ge\eta+o_p(1)$; hence $T_{\rm Wald}\to\infty$ and the power tends to one. Finally, replacing $\tau$ by $\tau+\bar\delta_{\rm inv}$ offsets the largest admissible positive identification error, yielding the sensitivity-adjusted level statement under the same relative-rate and CLT conditions.
\end{proof}

\section{Details of Experimental Configuration and Additional Results}
\label{app:experiment}

\subsection{Implementation Configuration}
\label{app:config}

We implement all pipelines in PyTorch using differentiable optimization layers from \texttt{cvxpylayers}, with forward optimization solved by ECOS. For inverse optimization, we estimate the latent preference parameter via multi-start gradient training with a suboptimality loss. Unless otherwise specified, we train for 300 epochs with learning rate $5\times 10^{-3}$ and 10 random initializations, using 3000 contexts for preference estimation, 2000 contexts for challenger construction, and 2000 contexts for evaluation. In the synthetic experiments, we set the decision-adequacy tolerance to $\tau=0$, corresponding to exact optimality. In the semi-synthetic experiments, we use a relative tolerance $\tau=0.1$, under which the deployed decision is considered adequate if its estimated risk is within $10\%$ of the optimal risk; the districting study uses $\tau=0.2$ at level $\alpha=0.01$, as recorded in Appendix~\ref{app:atlanta}.

\paragraph{Compute resources.}
All experiments were implemented in PyTorch with \texttt{cvxpylayers} for differentiable optimization and ECOS for conic optimization. Experiments were run on a local Apple M4 machine with 16~GB of memory and a Windows PC equipped with an NVIDIA RTX 5070 Ti GPU. The synthetic and Fashion-MNIST experiments each required less than one minute per repeated run, while the Atlanta districting experiment required several seconds per sliding-window evaluation.

\subsection{Details of Synthetic Experiments}
\label{app:synthetic_experiment}

\paragraph{Baseline methods.}
Across all settings, we compare \texttt{RADAR} against three alternative tests that target different notions of distributional or operational change.

\emph{X-Mean} is a two-sample Wald test for equality of the context means. Given independent baseline samples $X_1,\ldots,X_n\sim\mathbb P_0$ and target samples $X'_1,\ldots,X'_m\sim\mathbb P_1$, it tests
\[
H_0^{\mathrm{X\text{-}Mean} }:
\E_{\mathbb P_0}[X]
=
\E_{\mathbb P_1}[X]
\qquad\text{vs.}\qquad
H_1^{\mathrm{X\text{-}Mean} }:
\E_{\mathbb P_0}[X]
\neq
\E_{\mathbb P_1}[X].
\]
For scalar contexts, the Wald statistic is
\[
T_{\mathrm{X\text{-}Mean}}
=
\frac{\bar X'-\bar X}
{\sqrt{\hat\sigma_1^2/m+\hat\sigma_0^2/n}},
\]
where $\bar X,\bar X'$ and $\hat\sigma_0^2,\hat\sigma_1^2$ denote the sample means and variances in the baseline and target domains, respectively. For multivariate contexts, the same test is applied to the scalar context summary used in the corresponding experiment.

\emph{X-Distr} is a two-sample test for equality of the full context distributions,
\[
H_0^{\mathrm{X\text{-}Distr}}:
\mathbb P_0=\mathbb P_1
\qquad\text{vs.}\qquad
H_1^{\mathrm{X\text{-}Distr}}:
\mathbb P_0\neq\mathbb P_1.
\]
For multivariate contexts, we project the observations onto multiple random one-dimensional directions and compute a two-sample Kolmogorov--Smirnov statistic along each projection. The global statistic is the maximum projected discrepancy,
\[
T_{\mathrm{X\text{-}Distr}}
=
\max_{r=1,\ldots,R}
\mathrm{KS}_r,
\]
and its $p$-value is obtained by permutation calibration. Thus, X-Distr is sensitive to general changes in the context distribution beyond first-moment shifts.

\emph{Risk-Value} is a decision-related baseline~\citep{podkopaev2022tracking} that tests for changes in the realized risk of the deployed decision $z_0$. Using the plug-in preference estimate $\hat\theta$ and the same risk functional as the forward problem, it compares the baseline and target risks
\[
\E_{\mathbb P_0}\!\left[f(z_0;X,\hat\theta)\right]
\qquad\text{and}\qquad
\E_{\mathbb P_1}\!\left[f(z_0;X,\hat\theta)\right].
\]
For expectation risk, Risk-Value reduces to a two-sample mean test on the realized objective values; for CVaR, it compares the corresponding empirical upper-tail averages.

In contrast to these baselines, \texttt{RADAR} tests whether the incumbent decision is suboptimal relative to the target-domain optimum by more than the tolerance $\tau$. We report empirical rejection rates over repeated trials, corresponding to Type-I error under $H_0$ and power under $H_1$.


\paragraph{Evaluation metrics.}
The primary metric is the empirical rejection rate over repeated simulations. For shifts under which the null remains valid, the rejection rate should remain close to the nominal level $\alpha$. For shifts violating the null, a powerful test should reject with increasing probability as the shift magnitude $\delta$ grows.

\paragraph{Optimization problem setup.}
We consider two canonical settings: an LP with CVaR risk and a QP with expectation risk. Let $X\in\mathbb R^{n\times d}$ denote a random context matrix, where the rows represent items, scenarios, or units, and each row contains a $d$-dimensional feature vector. Each row of $X$ lies on the simplex. The latent parameter $\theta\in\Delta^{d-1}$ and decision variable $z\in\Delta^{n-1}$ also lie on the simplex. Thus, $X\theta\in\mathbb R^n$ can be interpreted as a context-dependent score or cost vector induced by the latent preference parameter, while $z$ allocates resources, weights, or attention across feasible actions. This formulation covers resource allocation, portfolio-style allocation, and distribution-dependent assignment problems.

For the LP + CVaR setting, we solve
\begin{align}
\arg\min_{z \in \Delta^{n-1}}
&\ \CVaR_{\alpha}\!\left( \langle X\theta, z \rangle \right)
\label{eq:lp-cvar-pop}
\\
\text{s.t.}\quad
&Az \le b .
\nonumber
\end{align}
The CVaR objective emphasizes tail risk, making the optimal decision sensitive to adverse realizations of the random context.

For the QP + expectation setting, we solve
\begin{align}
\arg\min_{z \in \Delta^{n-1}}
&\ \E\!\left[
\frac{1}{2} z^\top Q z + \langle X\theta, z \rangle
\right]
\label{eq:qp-exp-pop}
\\
\text{s.t.}\quad
&Az \le b .
\nonumber
\end{align}
Here, the quadratic term introduces curvature and regularization in the decision space, while the stochastic linear term captures average performance under the context distribution. Together, these two settings represent complementary decision regimes: LP + CVaR is sensitive to tail behavior, whereas QP + expectation is driven primarily by average performance.

\paragraph{Data generation and distributional shifts.}
We generate the random context matrix $X\in\mathbb R^{n\times d}$ row-wise from Dirichlet distributions,
\[
X_i \sim \Dir(\Alpha_i), \qquad i=1,\dots,n,
\]
where $\Alpha\in\mathbb R^{n\times d}_{+}$ is the matrix of Dirichlet parameters and $\Alpha_i$ denotes its $i$th row. Thus, each row of $X$ lies on the simplex, and $\Alpha$ controls the row-wise context distributions. We treat $\Alpha$ as the population distribution parameter and introduce distributional shifts by perturbing it to a shifted matrix $\Alpha'$.

Table~\ref{tab:shift-summary} summarizes the structured shifts used in the experiments. Each shift is indexed by a magnitude $\delta$, which controls the size of the perturbation from $\Alpha$ to $\Alpha'$.

\begin{table}[!tbh]
\centering
\begin{tabular}{lll}
\toprule
\textbf{Shift type} & \textbf{Formulation} & \textbf{Description} \\
\midrule
Global scaling 
& $\Alpha' = (1+\delta)\Alpha$ 
& Scale all rows uniformly \\

Row scaling 
& $\Alpha'_{i,:} = (1+\delta)\Alpha_{i,:}$ 
& Scale one selected row \\

Orthogonal shift 
& $\Alpha'_{i,:} = \Alpha_{i,:} + \delta v,\ \langle v,\theta\rangle=0$ 
& Perturb along a direction orthogonal to $\theta$ \\

Localized mean 
& $\Alpha'_{i,j} = \Alpha_{i,j} + \delta$ 
& Increase one coordinate in a selected row \\

Row redistribution 
& $\Alpha'_{i,j_1} = \Alpha_{i,j_1} + \delta,\ \Alpha'_{i,j_2} = \Alpha_{i,j_2} - \delta$ 
& Move mass within a selected row \\

Tail-sensitive 
& selective perturbation on tail rows 
& Modify rows in the worst $\alpha$-tail under CVaR \\
\bottomrule
\end{tabular}
\caption{Structured distributional shifts considered in the synthetic experiments.}
\label{tab:shift-summary}
\end{table}

\subsection{Details of Semi-Synthetic and Real-Data Experiments}
\label{app:real_data}
\subsubsection{Multi-Product Newsvendor Capacity Allocation with Fashion-MNIST}
\label{app:fashion_mnist}

We provide additional details for the capacity-constrained multi-product newsvendor task introduced in Section~\ref{sec:experiments}. Let $\mathcal G:=\{\mathrm{cloth},\mathrm{foot},\mathrm{acc}\}$ denote the three coarse groups obtained from the ten Fashion-MNIST classes: \emph{clothing} $\{0,1,2,3,4,6\}$, \emph{footwear} $\{5,7,9\}$, and \emph{accessory} $\{8\}$. Each image is a context $x$ associated with one unit of demand, and $\operatorname{grp}(x)\in\mathcal G$ denotes its coarse group. Before future demand contexts are realized, the decision-maker chooses
\[
\begin{aligned}
z&=(z_{\mathrm{cloth}},z_{\mathrm{foot}},z_{\mathrm{acc}})\in\Delta^2,\\
\Delta^2&:=\left\{z\in\mathbb R_+^3:
              \sum_{g\in\mathcal G}z_g=1\right\}.
\end{aligned}
\]
Let $y_g(x):=\mathbf 1\{\operatorname{grp}(x)=g\}$ and let $\theta=(\theta_g)_{g\in\mathcal G}\in\Delta^2$ be the latent shortage-priority vector. The per-image loss is
\[
\begin{aligned}
\ell(z,x;\theta):=\sum_{g\in\mathcal G}\Big[
&\theta_g\bigl(y_g(x)-z_g\bigr)_+^2\\[-0.2em]
&+(1-\theta_g)\bigl(z_g-y_g(x)\bigr)_+^2
\Big].
\end{aligned}
\]
The forward problem under image distribution $\mathbb P$ is
\[
z^\star(\mathbb P,\theta)
\in
\arg\min_{z\in\Delta^2}
\mathbb E_{x\sim\mathbb P}\!\left[\ell(z,x;\theta)\right].
\]
For each group, the loss balances a shortage penalty weighted by $\theta_g$ against an overage penalty weighted by $(1-\theta_g)$. If $\pi_g(\mathbb P):=\mathbb P\{\operatorname{grp}(x)=g\}$, taking the expectation yields an equivalent objective in terms of the induced group probabilities, while the stochastic forward problem remains defined over image contexts $x\sim\mathbb P$.

\paragraph{Controlled regimes and shift paths.}
The baseline has clothing-dominant group shares $(0.60,0.25,0.15)$, with representative fine classes \emph{T-shirt/top}, \emph{Sandal}, and \emph{Bag}. The decision-irrelevant $H_0$ endpoint changes the representative fine classes to \emph{Shirt}, \emph{Ankle boot}, and \emph{Bag} while preserving the same group shares, so the image distribution changes but the expected forward objective is unchanged. The balanced harmful $H_1$ endpoint uses the same fine-class support as $H_0$ but assigns probability $1/3$ to each class, yielding group shares $(1/3,1/3,1/3)$ and changing the target-optimal allocation. The orthogonal $H_1$ path starts from $H_0$ and moves along a direction orthogonal to the deployed-risk direction; it changes the optimal allocation while changing the deployed risk only weakly to first order. For each path, $\delta\in[0,1]$ interpolates between its starting mixture and endpoint. The $H_0$ path evaluates Type-I error, while the two $H_1$ paths evaluate power.

\paragraph{Evaluation.}
For each signal level $\delta$, we repeatedly sample fresh baseline and target datasets, estimate the latent preference parameter using the baseline sample and deployed allocation, construct a target-domain challenger on an independent build split, and evaluate the optimality-gap test on a held-out evaluation split. The main text reports partial rejection-rate summaries for the decision-irrelevant path $H_0$ and the orthogonal harmful $H_1$ path across $50$ repetitions. Table~\ref{tab:fashion_mnist_full_results} additionally reports the balanced harmful $H_1$ path at all signal levels.

\begin{table}[!tbh]
\centering
\caption{Full Fashion-MNIST results for the image-only $H_0$ path and two harmful $H_1$ constructions: the balanced group-share shift and the orthogonal objective-blind shift. Entries report the empirical rejection rate $\pm$ the half-width of the 90\% Wilson confidence interval across $50$ independent repetitions at level $\alpha=0.05$.}
\label{tab:fashion_mnist_full_results}
\resizebox{0.98\linewidth}{!}{
\begin{tabular}{llcccc}
\toprule
\textbf{Shift} & \textbf{Magnitude $\delta$} & \textbf{\texttt{RADAR}} & \textbf{X-Mean} & \textbf{X-Distr} & \textbf{Risk-Value} \\
\midrule
$H_0$ & $0.0$ & $0.02 \pm 0.04$ & $0.06 \pm 0.06$ & $0.02 \pm 0.04$ & $0.06 \pm 0.06$ \\
$H_0$ & $0.1$ & $0.02 \pm 0.04$ & $0.06 \pm 0.06$ & $0.14 \pm 0.08$ & $0.02 \pm 0.04$ \\
$H_0$ & $0.2$ & $0.10 \pm 0.07$ & $0.16 \pm 0.08$ & $0.36 \pm 0.11$ & $0.06 \pm 0.06$ \\
$H_0$ & $0.3$ & $0.04 \pm 0.05$ & $0.44 \pm 0.11$ & $0.90 \pm 0.07$ & $0.04 \pm 0.05$ \\
$H_0$ & $0.4$ & $0.06 \pm 0.06$ & $0.64 \pm 0.11$ & $0.96 \pm 0.05$ & $0.04 \pm 0.05$ \\
$H_0$ & $0.5$ & $0.04 \pm 0.05$ & $0.68 \pm 0.11$ & $1.00 \pm 0.03$ & $0.02 \pm 0.04$ \\
$H_0$ & $0.6$ & $0.04 \pm 0.05$ & $0.90 \pm 0.07$ & $1.00 \pm 0.03$ & $0.12 \pm 0.08$ \\
$H_0$ & $0.7$ & $0.02 \pm 0.04$ & $0.96 \pm 0.05$ & $1.00 \pm 0.03$ & $0.08 \pm 0.07$ \\
$H_0$ & $0.8$ & $0.06 \pm 0.06$ & $1.00 \pm 0.03$ & $1.00 \pm 0.03$ & $0.08 \pm 0.07$ \\
$H_0$ & $0.9$ & $0.06 \pm 0.06$ & $1.00 \pm 0.03$ & $1.00 \pm 0.03$ & $0.06 \pm 0.06$ \\
$H_0$ & $1.0$ & $0.06 \pm 0.06$ & $1.00 \pm 0.03$ & $1.00 \pm 0.03$ & $0.04 \pm 0.05$ \\
\midrule
$H_1$ & $0.0$ & $0.02 \pm 0.04$ & $0.06 \pm 0.06$ & $0.02 \pm 0.04$ & $0.06 \pm 0.06$ \\
$H_1$ & $0.1$ & $0.12 \pm 0.08$ & $0.08 \pm 0.07$ & $0.26 \pm 0.10$ & $0.38 \pm 0.11$ \\
$H_1$ & $0.2$ & $0.26 \pm 0.10$ & $0.06 \pm 0.06$ & $0.78 \pm 0.09$ & $0.76 \pm 0.10$ \\
$H_1$ & $0.3$ & $0.50 \pm 0.11$ & $0.30 \pm 0.10$ & $1.00 \pm 0.03$ & $0.94 \pm 0.06$ \\
$H_1$ & $0.4$ & $0.68 \pm 0.11$ & $0.44 \pm 0.11$ & $1.00 \pm 0.03$ & $1.00 \pm 0.03$ \\
$H_1$ & $0.5$ & $0.78 \pm 0.09$ & $0.58 \pm 0.11$ & $1.00 \pm 0.03$ & $1.00 \pm 0.03$ \\
$H_1$ & $0.6$ & $0.96 \pm 0.05$ & $0.72 \pm 0.10$ & $1.00 \pm 0.03$ & $1.00 \pm 0.03$ \\
$H_1$ & $0.7$ & $0.98 \pm 0.04$ & $0.84 \pm 0.08$ & $1.00 \pm 0.03$ & $1.00 \pm 0.03$ \\
$H_1$ & $0.8$ & $0.98 \pm 0.04$ & $1.00 \pm 0.03$ & $1.00 \pm 0.03$ & $1.00 \pm 0.03$ \\
$H_1$ & $0.9$ & $1.00 \pm 0.03$ & $0.98 \pm 0.04$ & $1.00 \pm 0.03$ & $1.00 \pm 0.03$ \\
$H_1$ & $1.0$ & $1.00 \pm 0.03$ & $0.98 \pm 0.04$ & $1.00 \pm 0.03$ & $1.00 \pm 0.03$ \\
\midrule
$H_1$ (orthogonal) & $0.0$ & $0.00 \pm 0.03$ & $0.04 \pm 0.05$ & $0.02 \pm 0.04$ & $0.04 \pm 0.05$ \\
$H_1$ (orthogonal) & $0.1$ & $0.04 \pm 0.05$ & $0.02 \pm 0.04$ & $0.00 \pm 0.03$ & $0.08 \pm 0.07$ \\
$H_1$ (orthogonal) & $0.2$ & $0.12 \pm 0.08$ & $0.08 \pm 0.07$ & $0.02 \pm 0.04$ & $0.04 \pm 0.05$ \\
$H_1$ (orthogonal) & $0.3$ & $0.14 \pm 0.08$ & $0.04 \pm 0.05$ & $0.06 \pm 0.06$ & $0.02 \pm 0.04$ \\
$H_1$ (orthogonal) & $0.4$ & $0.28 \pm 0.10$ & $0.00 \pm 0.03$ & $0.04 \pm 0.05$ & $0.00 \pm 0.03$ \\
$H_1$ (orthogonal) & $0.5$ & $0.34 \pm 0.11$ & $0.00 \pm 0.03$ & $0.26 \pm 0.10$ & $0.02 \pm 0.04$ \\
$H_1$ (orthogonal) & $0.6$ & $0.38 \pm 0.11$ & $0.04 \pm 0.05$ & $0.28 \pm 0.10$ & $0.10 \pm 0.07$ \\
$H_1$ (orthogonal) & $0.7$ & $0.46 \pm 0.11$ & $0.08 \pm 0.07$ & $0.26 \pm 0.10$ & $0.06 \pm 0.06$ \\
$H_1$ (orthogonal) & $0.8$ & $0.56 \pm 0.11$ & $0.16 \pm 0.08$ & $0.48 \pm 0.11$ & $0.02 \pm 0.04$ \\
$H_1$ (orthogonal) & $0.9$ & $0.68 \pm 0.11$ & $0.14 \pm 0.08$ & $0.66 \pm 0.11$ & $0.04 \pm 0.05$ \\
$H_1$ (orthogonal) & $1.0$ & $0.74 \pm 0.10$ & $0.18 \pm 0.09$ & $0.80 \pm 0.09$ & $0.06 \pm 0.06$ \\
\bottomrule
\end{tabular}
}
\end{table}

\subsubsection{Atlanta Police Districting}
\label{app:atlanta}

\paragraph{Dataset and decision variable.}
Police departments deploy patrol resources by partitioning a city into a small number of patrol \emph{zones}, each of which consists of finer-grained \emph{beats}. We study the Atlanta police districting system with $d$ beats and $m$ zones. The incumbent hard districting is represented by $z^0\in[m]^d$, where $z_i^0$ is the zone assigned to beat $i$. Let
\[
n_k:=\bigl|\{i\in[d]:z_i^0=k\}\bigr|,
\qquad k\in[m],
\qquad
\sum_{k=1}^m n_k=d .
\]
We work with a continuous relaxation of the districting decision. Let $P\in\mathbb R^{d\times m}$ denote a soft assignment matrix, where $P_{ik}$ is the weight assigning beat $i$ to zone $k$. The incumbent assignment induces $P^0\in\{0,1\}^{d\times m}$ with
\[
P^0_{ik}=\mathbf{1}\{z_i^0=k\}.
\]
In this application, $P^0$ plays the role of the deployed decision $z_0$ in the main text.

\paragraph{Weekly context signals.}
The incident-level data contain timestamps for call receipt, dispatch, arrival, and clearance. From these timestamps, we construct three nonnegative durations: waiting time, travel time, and on-scene time. For each week $t\in[T]$ and beat $i\in[d]$, define
\[
\begin{aligned}
r^{\mathrm{resp}}_{ti}
&:=
\mathbb E\!\left[
\texttt{waiting\_time\_seconds}
+
\texttt{travel\_time\_seconds}
\mid (t,i)
\right],\\
r^{\mathrm{work}}_{ti}
&:=
\mathbb E\!\left[
\texttt{onscene\_time\_seconds}
+
\texttt{travel\_time\_seconds}
\mid (t,i)
\right],
\end{aligned}
\]
where the expectations are empirical averages over incidents in week $t$ and beat $i$. We stack these quantities into vectors $r^{\mathrm{resp}}_t,r^{\mathrm{work}}_t\in\mathbb R^d$. The weekly context is therefore
\[
X_t:=\bigl(r^{\mathrm{resp}}_t,r^{\mathrm{work}}_t\bigr).
\]

\paragraph{Feasible districting set.}
We impose three types of constraints: each beat must be assigned across zones, incumbent zone sizes are preserved, and assignments must respect an admissibility mask $a_{ik}\in\{0,1\}$. The feasible set is
\[
\mathcal P
:=
\left\{
P\in\mathbb R^{d\times m}:
\sum_{k=1}^m P_{ik}=1\ \forall i,\quad
\sum_{i=1}^d P_{ik}=n_k\ \forall k,\quad
0\le P_{ik}\le a_{ik}\ \forall(i,k)
\right\}.
\]
The mask $a_{ik}$ restricts beat-to-zone assignments to admissible local moves, and the size constraints preserve the number of beats assigned to each zone.

\paragraph{Risk-minimization model.}
We posit that planners trade off response burden and workload burden through a latent scalar preference parameter $\theta\in[0,1]$. For week $t$, define the preference-weighted beat-level signal
\[
r_t^{(\theta)}
:=
\theta r_t^{\mathrm{resp}}
+
(1-\theta)r_t^{\mathrm{work}}.
\]
Given a soft assignment $P$, the induced mean signal in zone $k$ is
\[
\mu_{tk}(P;r_t^{(\theta)})
:=
\frac{1}{n_k}
\sum_{i=1}^d r^{(\theta)}_{ti}P_{ik},
\qquad k\in[m],
\]
and the citywide mean is
\[
\bar\mu_t(r_t^{(\theta)})
:=
\frac{1}{d}\sum_{i=1}^d r^{(\theta)}_{ti}.
\]
Let
\[
c_{tk}(P;r_t^{(\theta)})
:=
\mu_{tk}(P;r_t^{(\theta)})
-
\bar\mu_t(r_t^{(\theta)})
\]
denote the centered zone-level mean. We define the weekly districting loss
\[
f(P;X_t,\theta)
:=
\sum_{k=1}^m
\left[
c_{tk}(P;r_t^{(\theta)})
\right]^2
+
\lambda_{\rm sp}
\sum_{(i,j)\in E}\sum_{k=1}^m
(P_{ik}-P_{jk})^2 ,
\]
where the first term penalizes imbalance across zones and the second term encourages spatial smoothness over the beat adjacency graph $E$, with regularization weight $\lambda_{\rm sp}\ge0$.

We use expectation risk,
\[
R_{\mathbb P}(P,\theta)
=
\mathbb E_{X\sim\mathbb P}\!\left[f(P;X,\theta)\right].
\]
The forward districting problem is therefore
\[
P^\star(\mathbb P,\theta)
\in
\arg\min_{P\in\mathcal P}
R_{\mathbb P}(P,\theta).
\]
The deployment optimality gap of the incumbent districting under distribution $\mathbb P$ is
\[
\Delta(P^0;\theta,\mathbb P)
:=
R_{\mathbb P}(P^0,\theta)
-
\inf_{P\in\mathcal P}R_{\mathbb P}(P,\theta).
\]

\paragraph{Sequential decision-adequacy monitoring.}
We apply the sequential monitoring of Appendix~\ref{subsec:cpd} and Algorithm~\ref{alg:dr_cpd} to the weekly contexts $X_t$, with $P^0$ and $\mathcal P$ in the roles of $z_0$ and $\mathcal Z$. The burn-in is December $2008$ to December $2010$, the incident history against which the incumbent plan was designed; we take the operating environment over this period to be close to that of the months immediately following the plan's adoption in December $2011$, so that monitoring begins from a decision-safe baseline as \eqref{eq:forward_problem} requires. Inverse optimization on the burn-in estimate $\hat\theta_0$, which is held fixed thereafter along with the scale normalization.

Monitoring runs from December $2011$ to March $2021$ on a quarterly grid of $N=38$ times, each using a trailing window of $w=52$ weeks split evenly into a benchmark-construction split and an evaluation split. The benchmark decision $\hat z_t$ solves the empirical districting problem on the benchmark split and the gap of $P^0$ is evaluated on the held-out split, against a relative tolerance $\tau=0.2$ calibrated on the benchmark split, at level $\alpha=0.01$. The decision-agnostic baseline is a two-sample Hotelling test comparing the burn-in contexts with the current window on the same grid and at the same level. Figure~\ref{fig:atlanta_combined}\textbf{(g)} reports both paths.

\end{document}